\documentclass[11pt]{article}
\usepackage[top=2.54cm,left=2.54cm,right=2.54cm,bottom=2.54cm]{geometry}
\usepackage{natbib}
\usepackage{xcolor}

\usepackage[utf8]{inputenc}
\usepackage[T1]{fontenc}
\usepackage{microtype}

\usepackage[hidelinks]{hyperref}
\usepackage{url}

\usepackage{amsmath,amssymb,amsthm,mathtools,amsfonts}
\usepackage{bm}

\usepackage{graphicx}
\usepackage{booktabs}
\usepackage{multirow}
\usepackage{array}
\usepackage{caption}
\usepackage{subcaption}

\usepackage{enumitem}

\usepackage{algorithm}
\usepackage{algpseudocode}

\usepackage{placeins}

\usepackage{xcolor}
\usepackage{nicefrac}
\usepackage{siunitx}
\usepackage{cleveref}

\usepackage{needspace}

\AtBeginDocument{
  \setlength{\abovedisplayskip}{7pt plus 2pt minus 4pt}%
  \setlength{\belowdisplayskip}{7pt plus 2pt minus 4pt}%
  \setlength{\abovedisplayshortskip}{5pt plus 2pt minus 3pt}%
  \setlength{\belowdisplayshortskip}{5pt plus 2pt minus 3pt}%
  \setlength{\textfloatsep}{10pt plus 2pt minus 3pt}%
  \setlength{\floatsep}{8pt plus 2pt minus 2pt}%
  \setlength{\intextsep}{10pt plus 2pt minus 3pt}%
}
\theoremstyle{plain}
\newtheorem{theorem}{Theorem}
\newtheorem{lemma}{Lemma}[section]

\theoremstyle{definition}
\newtheorem{definition}{Definition}

\theoremstyle{remark}

\usepackage{authblk}
\title{Provably Efficient Federated Reinforcement Learning with Linear Function Approximation and Logarithmic Communication Cost}

\author{Zihang Liang}
\author{Haochen Zhang}
\author{Lingzhou Xue\thanks{Zihang Liang and Haochen Zhang are co-first authors who contributed equally to this paper. Lingzhou Xue is the corresponding author (Email: \texttt{lzxue@psu.edu}).}}
\date{}
\affil{Department of Statistics, The Pennsylvania State University}

\begin{document}

\maketitle

\begin{abstract}
We study federated online reinforcement learning  with linear function approximation. While recent multi-agent reinforcement learning algorithms achieve strong regret guarantees, they typically require sharing raw trajectories. This reliance incurs a communication cost that scales linearly with the number of episodes and violates the privacy constraints of federated settings. To address these limitations, we propose Fed-LSVI, the first provably efficient federated algorithm for online reinforcement learning with linear function approximation in episodic Markov decision processes. By integrating a determinant-based event-triggered synchronization with a stepwise backward update mechanism, Fed-LSVI enables agents to collaboratively learn an optimal policy by exchanging only compressed sufficient statistics. We prove that Fed-LSVI achieves a regret bound of $\widetilde{\mathcal O}(\sqrt{Md^3H^4T})$, where $d$ is the feature dimension, $H$ is the horizon length, $M$ is the number of agents, and $T$ is the number of episodes per agent, matching the best-known regret for multi-agent online reinforcement learning with linear function approximation. Moreover, by following the stringent communication and privacy constraints of the federated setting, Fed-LSVI reduces the communication cost to only logarithmic dependence on $T$, representing a significant improvement over prior methods.

\end{abstract}

\section{Introduction}\label{sec:intro}
Reinforcement learning (RL) studies sequential decision-making in unknown environments and is typically formulated as learning an optimal policy within a Markov decision process (MDP) \citep{sutton2018reinforcement}. In modern RL, designing sample-efficient algorithms for problems with large state and action spaces remains a central challenge. A widely adopted approach is function approximation, which enables scalable learning by representing value functions within a restricted function class.

To build a rigorous foundation for this paradigm, recent work has focused on establishing provable regret guarantees for online RL with linear function approximation, where value functions are modeled as linear functions of known features. This literature is largely anchored by two prominent frameworks. The first is the linear MDP setting \citep{jin2020provably,he2023nearly},  which assumes that both transition dynamics and reward functions admit linear representations in terms of known features. The second is the linear mixture MDP setting \citep{ayoub2020model, zhou2021nearly, zhou2022computationally}, which instead models transitions as a mixture over a set of base kernels.

In many real-world applications, the data collection capacity of a single agent is a primary bottleneck. This motivates cooperative multi-agent reinforcement learning (MARL), where multiple agents collaboratively explore a shared environment and jointly learn an optimal policy. By sharing exploration across agents, cooperative MARL can improve sample efficiency. Theoretical foundations for MARL have recently begun to emerge in the literature. Notably, \citet{dubey2021cooperative} proposed the first provably efficient cooperative MARL algorithm with function approximation, extending the single-agent LSVI-UCB framework \citep{jin2020provably} to demonstrate that parallel exploration can be achieved with low communication overhead. Building on this work, \citet{min2023cooperative} studied cooperative linear MDPs with asynchronous communication via a central server and established both regret and communication guarantees. 
More recently, \citet{hsu2024randomized} introduced randomized exploration strategies with provable guarantees for broader function classes.

However, these cooperative approaches typically require agents to share raw trajectories for collaborative policy learning. This reliance introduces substantial communication overhead and privacy concerns, particularly in real-world applications where sequential decision-making is distributed across institutions or personal devices. In healthcare, for example, RL is increasingly used to optimize dynamic treatment regimes \citep{murphy2003optimal}, where treatment decisions are sequentially adapted to patient histories, such as sepsis management in intensive care units \citep{komorowski2018artificial}. In such settings, because patient-level trajectories are highly sensitive and often distributed across different hospitals or clinical sites, transmitting raw clinical data to a central learner is often rendered impractical by privacy regulations and institutional constraints. Similar bottlenecks exist in mobile edge computing and recommender systems, where local user interaction histories are generated locally and may contain sensitive behavioral patterns \citep{kairouz2021advances}. To circumvent these barriers, it is critical to develop collaborative RL algorithms that do not rely on raw data exchange. Federated learning (FL) \citep{mcmahan2017communication} provides a natural paradigm to achieve this goal: agents keep raw data locally and only transmit aggregated summary statistics or model updates to a central server. This naturally raises the following question:

\begin{center}
\textit{Can we design federated algorithms for online RL with linear function approximation that are simultaneously sample-efficient and communication-efficient?}
\end{center}
Designing such federated algorithms presents two primary technical challenges. First, to satisfy the privacy and communication constraints of the federated setting, agents cannot transmit their local raw trajectories to the server. Instead, the algorithm must identify compressed sufficient statistics that allow the central server to perform policy and value-function updates. While this is straightforward in tabular MDPs through statistics such as state-action visitation counts \citep{zheng2023fedq,zheng2024federated}, online RL with linear function approximation makes this process more challenging, and constructing the corresponding statistics becomes non-trivial. Second, and more fundamentally, classical LSVI-type algorithms rely on a backward update structure: the regression response at step \(h\) depends on the updated value-function estimate at step \(h+1\). Because the central server lacks access to local raw trajectories, a conventional one-shot communication scheme would force the step-\(h\) update to rely on sufficient statistics computed using stale value estimates at step $h+1$. This staleness introduces a severe bias term, which is difficult to control in the regret analysis and may even destroy the sublinear regret guarantee. Therefore, designing a successful federated LSVI algorithm requires a carefully designed synchronization mechanism to address this stale-response issue.

In this work, we provide an affirmative answer to this important open question. We propose the first federated reinforcement learning (FRL) algorithm for online RL with linear function approximation, and our algorithm achieves \emph{sublinear regret} while incurring a communication cost that scales only \emph{logarithmically} with the total number of episodes. Our results show that, even when raw trajectories remain strictly local, agents can collaboratively achieve statistically efficient exploration without suffering from a communication bottleneck. Our main contributions are summarized as follows:

\textbf{Algorithm Design.}
We propose Fed-LSVI, the first provably efficient federated algorithm for online RL with linear function approximation in the literature. Our algorithm relies on two core components: a \emph{determinant-based event-triggered synchronization} mechanism and a \emph{stepwise backward synchronization} protocol. The event-triggered mechanism prevents redundant communication, bounding the total communication rounds logarithmically. Meanwhile, the stepwise backward protocol introduces a dedicated solution to address the stale-response issue in a federated LSVI-type algorithm. Rather than aggregating all response statistics in a single batch, synchronization proceeds backward through the steps. Once the server updates and broadcasts the step-\((h+1)\) model, agents use this latest global value function to construct the step-\(h\) summary statistics, which are then transmitted to the server for the step-\(h\) model update. This carefully designed stepwise backward procedure keeps each regression target consistent with the most up-to-date global value estimate, achieving statistically efficient exploration while maintaining local trajectories private.

\textbf{Regret Guarantees.}
We establish regret upper bounds for Fed-LSVI in both homogeneous and misspecified settings, matching the best-known regret scaling in online MARL with linear function approximation \citep{min2023cooperative,hsu2024randomized}. In the homogeneous setting, Fed-LSVI achieves a regret bound of  $\widetilde{\mathcal{O}}(\sqrt{Md^3H^4T})$ (\Cref{theo_1}), confirming that our communication-efficient protocol does not compromise statistical efficiency. Furthermore, we extend our analysis to a misspecified setting with agent-wise heterogeneity (\Cref{theo_hetero}), where each agent interacts with a $\xi$-approximate linear MDP. We show that Fed-LSVI remains robust to mild agent-wise heterogeneity. In particular, when the misspecification level satisfies $\xi=\widetilde{\mathcal{O}}(\sqrt{d/(MT)})$, our algorithm recovers the same regret bound as in the homogeneous setting.

\textbf{Communication Cost.}
We prove that Fed-LSVI incurs a communication cost, measured by the total number of transmitted scalars, that scales only logarithmically with \(T\) in both the homogeneous and misspecified settings. This logarithmic dependence is driven by two properties: Our event-triggered mechanism ensures that the total number of communication rounds is logarithmic in \(T\) (\Cref{theo_2}), and each round requires transmitting only \(\mathcal{O}(Md^2H)\) scalars between the agents and the server. This represents an exponential improvement over prior cooperative RL algorithms\citep{dubey2021cooperative,min2023cooperative,hsu2024randomized}, which require sharing local raw trajectories and then suffer from communication costs that scale linearly with \(T\). Thus, Fed-LSVI significantly reduces communication overhead without transmitting raw trajectories, strictly respecting the privacy constraints inherent in federated learning.

\section{Related Work}
\label{sec:related_work}

\textbf{Near-Optimal Reinforcement Learning.}
In the single-agent setting, the literature most relevant to our work spans near-optimal tabular RL and RL with linear function approximation. In tabular RL, algorithms are typically categorized into model-based and model-free approaches. A large body of work has focused on model-based algorithms \citep{agarwal2020model, agrawal2017optimistic, auer2008near, azar2017minimax, dann2019policy, zanette2019tighter, zhang2024settling, zhang2021reinforcement, zhou2023sharp}. Notably, \citet{zhang2024settling} proposed an algorithm achieving a regret bound of $\tilde{O}(\min\{\sqrt{SAH^2T},\, T\})$, which matches the information-theoretic lower bound. There is also a large body of work on model-free methods that has progressively narrowed the theoretical gap, including~\cite{jin2018q, li2023breaking, menard2021ucb, yang2021q, zhang2020almost,zheng2025gap,zhang2025regret,zhang2026q}. Several works~\citep{zhang2020almost, menard2021ucb,li2023breaking, zhang2025regret} achieved the near-optimal regret bound $\tilde{O}(\sqrt{SAH^2T})$.

The literature on RL with linear function approximation can in turn be divided according to the structural assumptions imposed on the MDP. One line studies linear MDPs \citep{yang2019sample, jin2020provably,wei2021learning, wagenmaker2022first, he2023nearly,zhanghorizon,zhang2026gap}. Foundational works established sample-efficient learning under generative-model or feature-space assumptions \citep{yang2019sample,yang2020feature}, and \citet{jin2020provably} later introduced LSVI-UCB, the first provably efficient online algorithm without access to a generative model. Subsequent works substantially sharpened the regret guarantees for linear MDPs, culminating in nearly minimax-optimal results \citep{hu2022nearly,he2023nearly}. Another line studies linear-mixture MDPs \citep{jia2020model, ayoub2020model, modi2020sample, zhou2021nearly, zhou2021provably, zhou2022computationally}. The near-minimax-optimal regret guarantees are also available \citep{zhou2021nearly}. Furthermore, \citet{zhou2022computationally} proposed a near-optimal, horizon-free algorithm for time-homogeneous linear mixture MDPs. \citet{zhang2023optimal} provided the near-optimal horizon-free sample complexity in the reward-free time-homogeneous setting.

 \textbf{Low-Switching Reinforcement Learning.} A closely related direction studies learning with limited switching or adaptivity, motivated by the cost of frequent policy deployment.  In the tabular setting, \citet{bai2019provably} established $Q$-learning with low switching cost, \citet{qiao2022sample} characterized near-optimal regret--switching trade-offs up to logarithmic factors, and \citet{velegkas2022reinforcement} showed that positive-gap instances permit both logarithmic regret and logarithmically many policy switches. Under linear function approximation, \citet{wang2021provably} proved that rare policy updates can still be combined with statistically efficient learning.

\textbf{Multi-Agent RL (MARL) with Event-Triggered Communications.}
In the multi-agent setting, the literature most relevant to our work can be divided into two directions: cooperative or parallel exploration with a shared objective, and decentralized equilibrium learning in Markov games under function approximation. The first direction is closest to our setting. \citet{dubey2021cooperative} initiated provably efficient cooperative multi-agent reinforcement learning with function approximation and showed that parallel exploration can be combined with limited inter-agent communication. Building on this line, \citet{min2023cooperative} studied cooperative linear MDPs with asynchronous communication through a central server and established regret and communication guarantees, while \citet{hsu2024randomized} introduced randomized exploration in cooperative multi-agent reinforcement learning and analyzed its performance under approximately linear transitions.

\textbf{Federated Reinforcement Learning.}
Recent finite-time analyses of federated reinforcement learning have mainly focused on tabular control, policy evaluation, or policy-gradient methods. In tabular value-based RL, prior works established linear speedup, communication efficiency, and heterogeneity-dependent guarantees for federated Q-learning and related methods \citep{khodadadian2022federated,woo2023blessing,zheng2023fedq,zheng2024federated,salgia2024sample,labbi2024feducbvi,zhang2025gapdependent,zhang2025regret}. A complementary line studies federated policy evaluation or on-policy learning with function approximation, including TD-learning under Markovian sampling and heterogeneous environments \citep{dalfabbro2023fedtd,wang2024fedtd,zhang2024fedsarsa}, as well as federated policy-gradient and actor-critic methods \citep{yang2024fednpg,ganesh2024fedpg,lan2025asynchronous}.

\section{Preliminaries}\label{sec:preliminary}

\paragraph{Episodic Markov Decision Processes.}
In this paper, we consider a finite-horizon episodic Markov decision process
$\mathcal M=(\mathcal X,\mathcal A,H,\{\mathbb{P}_h\}_{h=1}^H,\{r_h\}_{h=1}^H)$,
where $\mathcal X$ is a measurable, possibly infinite state space, $\mathcal A$ is a finite action set, $H$ is the horizon length, each deterministic reward function satisfies
$r_h:\mathcal X\times\mathcal A\to[0,1]$, and
$\mathbb{P}_h(\cdot\mid x,a)$ denotes the transition kernel at step $h\in[H]$. Here, For any $C\in \mathbb{N}$, we use $[C]$ to denote the set $\{1,2,\ldots C\}$.
In each episode, the agent starts from an initial state $x_1$. At each step $h \in [H]$, it observes a state $x_h$, chooses an action $a_h\in\mathcal A$, receives reward $r_h(x_h,a_h)$, and transitions to
$x_{h+1}\sim \mathbb{P}_h(\cdot\mid x_h,a_h)$ for each $h\in[H]$. 
\paragraph{Policies, Value Functions, and Bellman Equations.}  
A policy is a collection $\pi=\{\pi_h\}_{h=1}^H$ with
$\pi_h:\mathcal X\to\Delta(\mathcal A)$, where $\Delta(\mathcal A)$ denotes the set of probability distributions over $\mathcal A$.

To evaluate the performance of different policies, we define the $Q$-value function and $V$-value function under a policy $\pi$ as follows: for any $(x,a) \in \mathcal{X} \times \mathcal{A}$, 

\[Q^{\pi}_h(x,a) \coloneqq \mathbb E_{\pi}\left[\sum_{h' = h}^H r_{h'}(x_{h'},a_{h'})| x_h = x,a_h = a\right],\ V^{\pi}_h(x) = \mathbb E_{\pi}\left[\sum_{h'=h}^Hr_{h'}(x_{h'},a_{h'})|x_h = x\right].\]
For any bounded function $V:\mathcal X\to\mathbb R$, define $$[\mathbb P_hV](x,a)=\mathbb E_{x'\sim \mathbb{P}_h(\cdot\mid x,a)}[V(x')].$$

Since the action space and the horizon are all finite, there exists an optimal policy $\pi^{\star}$ that achieves the optimal value $V_h^{\star}(x)=\sup _\pi V_h^\pi(x)=V_h^{\pi^\star}(x)$ for all $(x,h) \in \mathcal{X} \times [H]$  \citep{azar2017minimax}. For any $(x, a, h) \in \mathcal{X} \times \mathcal{A} \times [H]$, the Bellman equation can be expressed as ($V_{H+1}^{\pi}(\cdot) \equiv V_{H+1}^{\star}(\cdot) \equiv 0$)
    \begin{equation}\label{eq_Bellman}
	\begin{aligned}
		&\left\{
		\begin{array}{l}
			V_h^{\pi}(x) = \mathbb{E}_{a' \sim \pi_h(x)}[Q_h^{\pi}(x, a')] \\
			Q_h^{\pi}(x, a) := r_h(x, a) + [\mathbb P_hV_{h+1}^{\pi}](x,a),
		\end{array}
		\right. 
		&\left\{
		\begin{array}{l}
			V_h^{\star}(x) = \max_{a' \in \mathcal{A}} Q_h^{\star}(x, a') \\
			Q_h^{\star}(x, a) := r_h(x, a) + [\mathbb P_hV_{h+1}^{\star}](x,a).
		\end{array}
		\right.
	\end{aligned}
\end{equation}

\paragraph{Linear Markov Decision Processes.}
Let $\phi:\mathcal{X}\times\mathcal{A}\to\mathbb{R}^d$ be a known feature map satisfying $\|\phi(x,a)\|_2\le 1$ for all $(x,a)$.
We adopt the standard linear MDP assumption from the literature on linear function approximation \citep{yang2020feature,jin2020provably,wang2021provably}.
\begin{definition}[Linear MDPs]\label{ass:linear_mdp}
For each step $h\in[H]$, there exist an unknown vector $\theta_h\in\mathbb{R}^d$ and an unknown vector of signed measures
$\mu_h=(\mu_h^{(1)},\ldots,\mu_h^{(d)})$ over $\mathcal{X}$ such that, for every $(x,a)\in\mathcal{X}\times\mathcal{A}$ and every measurable set $\mathcal B\subseteq \mathcal X$,
\[
    r_h(x,a)=\langle \phi(x,a),\theta_h\rangle,
    \qquad
    \mathbb{P}_h(\mathcal B\mid x,a)=\langle \phi(x,a),\mu_h(\mathcal B)\rangle.
\]
Moreover, $\|\theta_h\|_2\le \sqrt d$ and $\|\mu_h(\mathcal X)\|\le \sqrt d$ for all $h\in[H]$.
\end{definition}

\paragraph{Federated Reinforcement Learning.}
We consider an FRL setting with a central server and $M$ agents, each interacting with an independent linear MDP $\mathcal{M}$. The paper studies a synchronous protocol: at common episode index $t\in[T]$, each agent executes its $t$-th episode, starting from an initial state $x_1^{m,t}$ that may be arbitrary.
Thus $T$ is the number of episodes per agent, the total number of agent-episodes is $MT$, and the total number of interaction steps is $MTH$. In episode $t$, agent $m\in[M]$ generates a trajectory $\{(x_h^{m,t},a_h^{m,t},r_h^{m,t},x_{h+1}^{m,t})\}_{h=1}^H$. A central server coordinates occasional synchronizations, while raw trajectories always remain on the agents.

Specifically, we partition the learning process into $K$ rounds with boundaries $1 = \tau_1 < \tau_2 < \cdots < \tau_{K+1} = T+1,$ where round $k$ consists of episodes $\tau_k, \ldots, \tau_{k+1}-1$. All agents execute the same policy $\pi^k$ throughout round $k$. The cumulative regret over $T$ synchronized episodes is defined as
\[
\mathrm{Regret}(T)
:=
\sum_{t=1}^T\sum_{m=1}^M
\Bigl(
V_{m,1}^{\star}(x_1^{m,t})-V_{m,1}^{\pi^{k(t)}}(x_1^{m,t})
\Bigr),
\]
where $V_{m,1}^{\star}$ and $V_{m,1}^{\pi}$ denote the optimal value function and $V$-value function of agent $m$'s local MDP and $k(t)$ denotes the round containing episode $t$. In the homogeneous setting, where all agents share the same MDP, we omit the subscript $m$ and write $V_{1}^{\star}$ and $V_{1}^{\pi}$ for simplicity.

We also define the communication cost of an FRL algorithm as the number of scalars (integers or real numbers) communicated between the server and agents.

\section{Algorithm Design}\label{sec:algorithm_design}

\subsection{The Fed-LSVI Algorithm}
\vspace{-0.15em}
We now present Fed-LSVI under the federated reinforcement learning setting described in \Cref{sec:preliminary}. The complete algorithm is given in \Cref{alg:server_sync} and \Cref{alg:agent_sync}. At the beginning of learning, the central server specifies the total number of episodes per agent $T$, the regularization parameter $\lambda$, the trigger parameter $\gamma \geq 1$, and the exploration bonus parameter $\beta>0$. Fed-LSVI then proceeds over communication rounds indexed by $k \in \{1,2,\ldots,K\}$.

\begin{algorithm}[ht]
\caption{Central server}
\label{alg:server_sync}
\small
\begin{algorithmic}[1]
\State \textbf{Input:} regularizer $\lambda>0$, trigger parameter $\gamma\ge 1$, bonus scale $\beta>0$, hard episode budget $T$.
\State \textbf{Init:} for each $h\in[H]$, set $\Lambda_h^{\mathrm{ser}}\gets \lambda I_d$ and $w_h\gets 0$; set the common episode counter $t\gets 0$.
\State Broadcast parameters $\{w_h,\Lambda_h^{\mathrm{ser}},\gamma,\beta, T\}$ to all agents.
\While{$t<T$}  

    \State \textbf{Wait} until receiving a synchronization signal 
    and set \(t\) to the received current episode counter.

    \State \textbf{Broadcast} the synchronization signal to all agents.
    \For{$h=H,H-1,\dots,1$}
        \State Receive $\{(\Lambda_h^{m,\mathrm{loc}},b_h^m)\}_{m=1}^M$ from all agents.
        \State  $\Lambda_h^{\mathrm{ser}}\gets \Lambda_h^{\mathrm{ser}}+\sum_{m=1}^M\Lambda_h^{m,\mathrm{loc}}$.
         \State $b_h\gets \sum_{m=1}^M b_h^m$.
        \State Update $w_h\gets (\Lambda_h^{\mathrm{ser}})^{-1}b_h$.
        \State \textbf{Broadcast} $(w_h,\Lambda_h^{\mathrm{ser}})$ to all agents.
    \EndFor
\EndWhile
\end{algorithmic}
\end{algorithm}

\paragraph{Global Coordination.}
At the beginning of round $k$, each agent holds the current global model $\{(w_h,\Lambda_h^{\mathrm{ser}})\}_{h=1}^H$. In the special case $k=1$, this model is initialized and broadcast by the central server before data
collection begins. For $k\ge 2$, it is the model updated and broadcast by the central server at the end of round $k-1$. Here, $w_h\in\mathbb R^d$ is the regression parameter and $\Lambda_h^{\mathrm{ser}}\in\mathbb R^{d\times d}$ is the server-side covariance matrix at step $h$.

\begin{algorithm}[ht]
\caption{Agent $m$}
\label{alg:agent_sync}
\small
\begin{algorithmic}[1]

\State \textbf{Init:} Set $\mathcal D_h^m\gets\emptyset$ for all $h\in[H]$ and $t = 0$. 
\State Receive parameters $\{w_h,\Lambda_h^{\mathrm{ser}},\gamma,\beta, T\}$ from the central server and then initialize $Q_h$ and $V_h$ by \eqref{eq:QV}.
\While{$t<T$}

    \State Set the round length counter $\Delta t\gets 0$.
    \For{$h=1,2,\dots,H$}
        \State $\Lambda_h^{m,\mathrm{loc}}\gets 0$.
    \EndFor

    \While{no synchronization signal has been received and $t<T$}
        \State $t\gets t+1$ and $\Delta t\gets \Delta t+1$.
        \State Observe the initial state $x_1^{m,t}$.
        \For{$h=1,2,\dots,H$}
            \State Evaluate the latest optimistic model and pick $a_h^{m,t}\gets \arg\max_a Q_h(x_h^{m,t},a)$.
            \State Execute $a_h^{m,t}$, observe reward $r_h^{m,t}$ and next state $x_{h+1}^{m,t}$.
            \State Append $(x_h^{m,t},a_h^{m,t},r_h^{m,t},x_{h+1}^{m,t})$ to $\mathcal D_h^m$.
            \State $\Lambda_h^{m,\mathrm{loc}}\gets \Lambda_h^{m,\mathrm{loc}}+\phi(x_h^{m,t},a_h^{m,t})\phi(x_h^{m,t},a_h^{m,t})^\top$.
        \EndFor
        \If{$t < T$ and $\exists h\in[H]$ such that
        $\dfrac{\det(\Lambda_h^{\mathrm{ser}}+\Lambda_h^{m,\mathrm{loc}})}{\det(\Lambda_h^{\mathrm{ser}})}\ge \dfrac{\gamma}{\Delta t}$}
            \State Send synchronization signal and current episode counter $t$ to the server.
        \EndIf
    \EndWhile
    \If{$t = T$}
    \State Send synchronization signal and current episode counter $t$ to the server. 
\EndIf

    \State \textbf{Synchronization (backward, stepwise):} set $V_{H+1}(\cdot)\equiv 0$.
    \For{$h=H,H-1,\dots,1$}
        \State Compute the full-history target $b_h^m$ as \eqref{eq:b}.
        \State Send $(\Lambda_h^{m,\mathrm{loc}},b_h^m)$ to the server.
        \State Receive the updated step-$h$ model $(w_h,\Lambda_h^{\mathrm{ser}})$ from the server.
        \State Update $Q_h$ and $V_h$ according to \eqref{eq:QV}.
    \EndFor
\EndWhile
\end{algorithmic}
\end{algorithm}

\paragraph{Local Exploration.} Using the known feature mapping $\phi(x,a)$ together with the broadcast global
model $\{(w_h,\Lambda_h^{\mathrm{ser}})\}_{h=1}^H$, for any $(x,h) \in \mathcal{X} \times [H]$, each agent can reconstruct the optimistic value estimates $Q_h(x,a)$, $V_h(x)$ according to
\begin{equation}
\label{eq:QV}
Q_h(x,a)
=\Pi_{[0,H]}\left(\phi(x,a)^\top w_h+\beta\,\|\phi(x,a)\|_{(\Lambda_h^{\mathrm{ser}})^{-1}}\right).
\quad
V_h(x)=\max_{a\in\mathcal A}Q_h(x,a),
\end{equation}
Here, $\Pi_{[0,H]} := \min\{\max\{z,0\},H\}$. This projection ensures $0\leq Q_h(x,a)\leq H$ and hence $0\leq V_h(x)\leq H$. The policy at $(x,h)$ is then defined by 
$$\pi_h(x)=\arg\max_{a\in\mathcal A}Q_h(x,a).$$
During round $k$, all agents execute the policy $\pi$ and repeatedly
collect trajectories over multiple episodes. Each agent maintains and updates a local copy of the common episode counter $t$, together with a current-round episode counter $\Delta t$, which records the number of episodes collected since the beginning of round $k$. Under the synchronous protocol, the local copies of $t$ remain identical across agents. We
denote the trajectory collected by agent $m$ in episode $t$ as
$\{(x_h^{m,t},a_h^{m,t},r_h^{m,t},x_{h+1}^{m,t})\}_{h=1}^H$.

During data collection, each agent $m$ maintains two types of local information for each step $h\in[H]$: a persistent buffer $\mathcal D_h^m$ and a round-wise local covariance matrix $\Lambda_h^{m,\mathrm{loc}}$. The buffer $\mathcal D_h^m$ stores all historical transition tuples $(x_h^{m,t},a_h^{m,t},r_h^{m,t},x_{h+1}^{m,t})$ observed at step $h$ throughout the learning process. In contrast, during each round $k$, the matrix $\Lambda_h^{m,\mathrm{loc}}$ accumulates only the step-$h$ covariance increments from episodes collected in that round, namely $\phi(x_h^{m,t},a_h^{m,t})\phi(x_h^{m,t},a_h^{m,t})^\top$ for each such episode $t$, and is reset to zero at the beginning of the next round.
\paragraph{Determinant-Based Event-Triggered Synchronization.} In round $k$, synchronization is triggered if there exists an agent $m \in [M]$ such that one of the following two events occurs:  
(i) the determinant-based trigger condition is satisfied, namely, for some $h \in [H]$,
\begin{equation}
\label{trigger}
\frac{\det\!\big(\Lambda_h^{\mathrm{ser}}+\Lambda_h^{m,\mathrm{loc}}\big)}
{\det\!\big(\Lambda_h^{\mathrm{ser}}\big)}
\ge \frac{\gamma}{\Delta t};
\end{equation}
(ii) the total episode counter reaches the overall episode budget, that is, $t = T$.

In either case, the triggering agent $m$ sends a synchronization signal together with the current episode counter $t$ to the central server, which then broadcasts the signal to all agents and terminates the exploration in round $k$.

\paragraph{Agent-Server Synchronization and Policy Update.}
At the end of each round, the central server updates the value-function estimates and the policy through communication with the agents. To update the step-$h$ value-function estimate, similar to LSVI-UCB \citep{jin2020provably}, the central server solves a ridge regression problem whose targets are labeled by the next-step value function $V_{h+1}$:
\begin{align}
\label{eq:ridge}
w_h&=\arg\min_{w\in \mathbb R^d}\sum_{m=1}^M\sum_{(x,a,r,x')\in \mathcal D_h^m}\bigl(w^\top \phi(x,a)-r-V_{h+1}(x')\bigr)^2+\lambda\|w\|_2^2 .
\end{align}
The resulting estimator $w_h$ is then used to update the value-function estimates and the policy at step $h$. Importantly, the value function $V_{h+1}$ used in this regression should be the newly updated one, computed using all historical samples, including those collected in the current round. Therefore, the updates must be carried out backward over the horizon.

This motivates an interleaved, \emph{stepwise backward} synchronization procedure. Specifically, at each step, the agents first send the necessary statistics to the central server. The server then immediately updates the corresponding value-function and policy estimates and broadcasts the updated stepwise model back to the agents before proceeding to the previous step.
 
Specifically, to solve the above regression problem, each agent $m$ forms the full-history response vector $b_h^m$ from its local buffer $\mathcal D_h^m$:
\begin{equation}
\label{eq:b}
b_h^m
=
\sum_{(x,a,r,x')\in \mathcal D_h^m}
\phi(x,a)\bigl(r+V_{h+1}(x')\bigr),
\end{equation}
and sends it, together with its local covariance matrix $\Lambda_h^{m,\mathrm{loc}}$, to the central server. The server aggregates the uploaded statistics $\{(\Lambda_h^{m,\mathrm{loc}}, b_h^m)\}_{m=1}^M$ and updates the global covariance matrix $\Lambda_h^{\mathrm{ser}}$ and global response vector $b_h$ according to
\begin{equation}
\label{lambdab}
\Lambda_h^{\mathrm{ser}}
\leftarrow
\Lambda_h^{\mathrm{ser}}+\sum_{m=1}^M \Lambda_h^{m,\mathrm{loc}},
\quad
b_h
\leftarrow
\sum_{m=1}^M b_h^m,
\end{equation}
and then updates the regression coefficient $w_h$ by solving the ridge regression problem in \eqref{eq:ridge}, which admits the closed-form solution:
\begin{equation}
w_h
=
(\Lambda_h^{\mathrm{ser}})^{-1} b_h .
\label{eq:update}
\end{equation}
The central server broadcasts the updated step-$h$ model $(w_h,\Lambda_h^{\mathrm{ser}})$ to all agents before proceeding to step $h-1$. Using this newly updated step-$h$ model, all agents can construct the same value function $V_h$ with \eqref{eq:QV}, which is then used to form the response vectors $b_{h-1}^m$ by \eqref{eq:b} for the model update at step $h-1$. In this way, synchronization and model updating are carried out recursively from step $H$ down to step $1$. This backward, stepwise order ensures that all agents label their historical step-$h$ samples using the same freshly updated value function $V_{h+1}$ as in LSVI-UCB.

\section{Theoretical Guarantees}\label{sec:theoretical_analysis}
\vspace{-0.1em}
\subsection{Homogeneous Setting}
We first consider the homogeneous setting, where all agents interact with identical copies of the same linear MDP. In this setting, we present the regret and communication guarantees of Fed-LSVI.

\begin{theorem}[Regret in the Homogeneous Setting]
\label{theo_1}
For Algorithms~\ref{alg:server_sync} and~\ref{alg:agent_sync} with trigger parameter $\gamma\ge 1$ and regularization parameter $\lambda=1$, there exists a positive constant $c_\beta>0$ such that, for any confidence level $p\in(0,1)$, if we choose $\beta = c_\beta Hd\sqrt{\log(1+dMHT/p)}$, then with probability at least $1-2p$, 

\[
\mathrm{Regret}(T)
\;\le\;
\widetilde{\mathcal O}\!\left(\sqrt{Md^3H^4T}+MH^2d^2\sqrt{\gamma}\right).
\]
The complete expression is provided in \eqref{finalregret} of Appendix \ref{app:regret_analysis}.
\end{theorem}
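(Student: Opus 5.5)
The argument reduces Fed-LSVI, round by round, to LSVI-UCB run on the pooled data of all $M$ agents with a lazily updated Gram matrix. The only new ingredient is controlling the ``within-round'' data that is not yet reflected in $\Lambda_h^{\mathrm{ser}}$. At the end of round $k$, the server's pair $(\Lambda_h^{\mathrm{ser}}, b_h)$ equals exactly the centralized ridge-regression statistics over all $M(\tau_{k+1}-1)$ pooled step-$h$ samples. This is because $b_h$ is recomputed from full local histories using the freshly broadcast $V_{h+1}$. Consequently the policy $\pi^{k+1}$ is identical to the LSVI-UCB policy built on the union of all agents' data, and no staleness bias appears.

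\emph{Step 1 (concentration and optimism).} I will apply the standard self-normalized bound for vector-valued martingales, together with an $\epsilon$-net over the class $\{\min(\max_a \phi^\top w+\beta\|\phi\|_{\Lambda^{-1}},H)\}$, exactly as in \citet{jin2020provably}. The samples are ordered by (episode, agent, step) so that the filtration is well defined. This gives, with probability $1-p$, simultaneously for all rounds and steps,
\[
\bigl|\phi(x,a)^\top w_h - r_h(x,a)-[\mathbb P_h V_{h+1}](x,a)\bigr|\le \beta\|\phi(x,a)\|_{(\Lambda_h^{\mathrm{ser}})^{-1}},
\]
where the total sample count $MHT$ enters the log, yielding $\beta=c_\beta dH\sqrt{\log(dMHT/p)}$. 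Backward induction then gives optimism, $Q_h\ge Q_h^\star$, and the usual recursive decomposition gives
\[
V_1(x_1^{m,t})-V_1^{\pi^{k(t)}}(x_1^{m,t})\le \sum_h 2\beta\|\phi_h^{m,t}\|_{(\Lambda_h^{k})^{-1}}+\text{martingale terms}.
\]
Here $\Lambda_h^k$ denotes the server matrix frozen during round $k$. Azuma's inequality bounds the martingale terms summed over all $MTH$ steps by $\widetilde{\mathcal O}(\sqrt{MTH^3})$ with probability $1-p$, which accounts for the $1-2p$.

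\emph{Step 2 (lazy-update bonus sum; the main obstacle).} I must bound $\sum_{k}\sum_{t\in\text{round }k}\sum_m \|\phi_h^{m,t}\|_{(\Lambda_h^{k})^{-1}}$, whereas the elliptical potential lemma controls the norms under the ``fully online'' matrix. To bridge the gap, I split each round into the last episode and the rest. For episodes before the trigger fires, every agent satisfies $\det(\Lambda_h^k+\Lambda_h^{m,\mathrm{loc}})/\det(\Lambda_h^k)<\gamma/\Delta t$. I will use the per-agent quantity $\sum_{t}\|\phi_h^{m,t}\|^2_{(\Lambda_h^k)^{-1}}\lesssim \log\det$ ratio, combined with a comparison $\|\phi\|_{(\Lambda_h^k)^{-1}}\le\sqrt{\det(\Lambda_h^k+\Lambda^{m,\mathrm{loc}})/\det\Lambda_h^k}\,\|\phi\|_{(\Lambda_h^k+\Lambda^{m,\mathrm{loc}}_{<t})^{-1}}$, applied to each agent's own within-round data. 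The first-in-round or triggering episode terms are each at most $1$ per $(m,h)$. Summed over $K=\widetilde{\mathcal O}(dH\log T)$ rounds (I will take the round bound from the determinant doubling argument in \Cref{theo_2}, or rederive it), they contribute $\widetilde{\mathcal O}(MH\cdot dH\beta)$. The $\gamma/\Delta t$ threshold is what lets me trade the $\Delta t$ local episodes against a $\sqrt{\gamma}$ inflation.

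Putting the pieces together, the main term becomes $\beta\sqrt{MTH\cdot dH\log T}\cdot\sqrt{H}$ by Cauchy--Schwarz with the elliptical potential over the pooled data, which is $\widetilde{\mathcal O}(\sqrt{Md^3H^4T})$. The boundary and threshold effects give the additive $\widetilde{\mathcal O}(MH^2d^2\sqrt{\gamma})$. I expect the delicate part to be making this second term come out linear in $M$ rather than carrying an extra $\sqrt{T}$. If the direct comparison fails, I would first try splitting the episodes into ``good'' ones, where the pooled determinant has not yet doubled relative to $\Lambda_h^k$ and the norms are within a factor $\sqrt2$ of the online ones, and ``bad'' ones, which are charged to the $\gamma$-trigger budget.
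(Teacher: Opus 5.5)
The overall architecture is right, but Step~2, as written, does not produce the claimed bound. Steps~0--1 match the paper: pooled ridge statistics, the covering-net self-normalized bound (Lemma~\ref{lem:Vk_self_normalized}), optimism, the one-step recursion, and Azuma. The only omission there is the lower clip at $0$, which is harmless.

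\textbf{Where Step~2 fails.} Your main route pays, in every round, $1$ per $(m,h)$ for the triggering episode and a $\sqrt{\gamma\log\gamma}$-type term per agent for the remaining episodes. You then multiply by the number of rounds, taken to be $K=\widetilde{\mathcal O}(dH\log T)$. That count is false for general $\gamma$.
\begin{itemize}
\item Theorem~\ref{theo_2} only gives $K\le 1+2T/\gamma+\frac{dH}{\log 2}\log(1+MT/d)$.
\item For $\gamma=1$ the trigger ratio is always $\ge 1\ge 1/\Delta t$, so it fires after every episode. Then $K=T$, and the last-episode terms alone cost $2\beta MHT$.
\item More generally, once the Gram matrices have grown, rounds have length about $\gamma$, so $K$ can be of order $T/\gamma$. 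The per-agent terms then sum to order $MT\sqrt{\log\gamma/\gamma}$ per step, which is linear in $T$ for constant $\gamma$.
\item Even when $K$ is small, charging per-round costs against the total $K$ (which already carries a factor $dH$) costs an extra factor $H$. Your own count $\widetilde{\mathcal O}(MH\cdot dH\beta)$ is $\widetilde{\mathcal O}(Md^2H^3)$, not $MH^2d^2\sqrt{\gamma}$.
\item Your ``main term'' applies the pooled elliptical potential to norms in $(\Lambda_h^k)^{-1}$. You never say which samples may be transferred to the online matrices $\tilde\Lambda_h^{m,t}$, and that transfer is exactly the lazy-update difficulty.
\end{itemize}

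\textbf{The missing idea.} Classify rounds, separately for each $h$, by global determinant growth: $\mathcal K_1(h)=\{k:\det\Lambda_h^{k+1}/\det\Lambda_h^k\le 3\}$ and its complement $\mathcal K_2(h)$.
\begin{itemize}
\item For $k\in\mathcal K_1(h)$, we have $\Lambda_h^k\preceq\tilde\Lambda_h^{m,t}\preceq\Lambda_h^{k+1}$. Hence every sample of the round, including the triggering one, satisfies $\|\phi\|_{(\Lambda_h^k)^{-1}}\le\sqrt3\,\|\phi\|_{(\tilde\Lambda_h^{m,t})^{-1}}$. These rounds are absorbed into the pooled bound $\sqrt{6dMT\log(1+MT/d)}$ with no per-round cost at all.
\item Telescoping log-determinants gives $|\mathcal K_2(h)|\le d\log(1+MT/d)/\log 3$. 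This count is independent of $T/\gamma$ and of $H$.
\item Only inside $\mathcal K_2(h)$ rounds do you use the per-agent trigger. At episode $\tau_{k+1}-2$ it has not yet fired, which bounds each agent's sum over $\tau_k,\dots,\tau_{k+1}-2$ by $\sqrt{\gamma\log\gamma}$. Add $1$ for the final episode.
\item Summing over $h$ and multiplying by $2\beta$ yields the $MH^2d^2\sqrt{\gamma}$ term up to logarithms.
\end{itemize}

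Your fallback good/bad split is essentially this argument. The difference is that the bad part must be counted against the global log-determinant budget per step. The trigger only controls each agent's within-round sum; it cannot by itself bound how many times that sum is paid.
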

To the best of our knowledge, this is the first regret guarantee for online federated reinforcement learning under linear function approximation. The proof of \Cref{theo_1} is provided in \Cref{app:regret_analysis}. Compared with best-known multi-agent results under linear function approximation \citep{min2023cooperative, hsu2024randomized}, our guarantee matches their $\widetilde{\mathcal O}(\sqrt{Md^3H^4T})$ regret scaling while being tailored to the stricter federated regime, where raw trajectories are not shared and only stage-wise sufficient statistics are communicated.

When $T \geq Md$ and choose $\gamma = T/Md$, Theorem~\ref{theo_1} implies $\mathrm{Regret}(T)\le\widetilde{\mathcal O}(\sqrt{Md^{3}H^4T}).$ Therefore, the average regret per agent is of order $\widetilde{\mathcal O}(\sqrt{d^{3}H^4T/M})$, exhibiting a $\sqrt{M}$-type multi-agent speedup, consistent with worst-case results in federated tabular RL \citep{zheng2023fedq, zheng2024federated, labbi2024feducbvi, zhang2025regret} and in multi-agent RL with linear function approximation \citep{dubey2021cooperative, min2023cooperative, hsu2024randomized}.

\subsection{Misspecified Setting}
We next consider a heterogeneous extension under the misspecified setting, where agents interact with agent-specific MDPs, each of which is $\xi$-approximate to a common reference linear MDP. We show that the proposed algorithm is robust to such mild agent-wise heterogeneity.

\begin{definition}[Misspecified Setting]
\label{ass:small_heterogeneity}
For any $0 < \xi \le 1$, and for any agent $m \in [M]$, the corresponding
MDP $(\mathcal{X}, \mathcal{A}, H, \mathbb{P}_m, r_m)$ is a $\xi$-approximate linear
MDP with a feature map $\phi : \mathcal{X} \times \mathcal{A} \to \mathbb{R}^d$. Specifically, for each step $h \in [H]$, there exist $d$ unknown measures 
$\mu_h = \bigl(\mu_h^{(1)}, \ldots, \mu_h^{(d)}\bigr)$ over $\mathcal{X}$ and an unknown vector 
$\theta_h \in \mathbb{R}^d$ such that, for all $(x,a) \in \mathcal{X} \times \mathcal{A}$,

\[
\bigl\|\mathbb{P}_{m,h}(\cdot \mid x,a) - \langle \phi(x,a), \mu_h(\cdot)\rangle\bigr\|_{\mathrm{TV}} \le \xi,\quad
\bigl|r_{m,h}(x,a) - \langle \phi(x,a), \theta_h\rangle\bigr| \le \xi.
\]
Here, we assume $\|\phi(x,a)\| \le 1$ for all
$(x,a) \in \mathcal{X} \times \mathcal{A}$, and $\max\bigl\{\|\mu_h(\mathcal{X})\|,\|\theta_h\|\bigr\} \le \sqrt{d}$.

\end{definition}
Note that the misspecified setting in \Cref{ass:small_heterogeneity} coincides with the setting considered in \cite{hsu2024randomized}, which generalizes the single-agent misspecified setting in \cite{jin2020provably}. Moreover, it also covers the small heterogeneity setting studied in \cite{dubey2021cooperative}.
 
We now present the regret guarantee under the misspecified setting described in \Cref{ass:small_heterogeneity}.
\begin{theorem}[Regret in the Misspecified Setting]\label{theo_hetero}
Under \Cref{ass:small_heterogeneity}, for Algorithms~\ref{alg:server_sync} and~\ref{alg:agent_sync} with trigger parameter $\gamma\ge 1$ and regularization parameter $\lambda=1$, there exists a positive constant $c_\beta'>0$ such that, for any confidence level $\bar p\in(0,1)$, if we choose $\beta =c_\beta' Hd\sqrt{\log(1+2dMHT/\bar p)}+2H\xi\sqrt{dMT}$, then with probability at least $1-2\bar p$,
\[
\mathrm{Regret}(T)
\;\le\;
\widetilde{\mathcal O}\!\left(\sqrt{Md^3H^4T}+Md^2H^2\sqrt{\gamma}+\xi MdH^2T+ \xi \sqrt{M^3d^3H^4\gamma T}
\right).
\]
The complete expression is provided in \eqref{finalregretmis} of Appendix~\ref{app:hetero_small}.

\end{theorem}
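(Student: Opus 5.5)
We follow the LSVI-UCB template of \citet{jin2020provably} and its misspecified extension, adapted to the round structure of Fed-LSVI. Fix round $k$ and let $\Lambda_h^k$, $w_h^k$, $Q_h^k$, $V_h^k$ be the server quantities broadcast at the start of round $k$; these are built from all $M\tau_k-M$ agent-episodes collected before round $k$. Introduce the reference linear MDP $(\theta_h,\mu_h)$ of \Cref{ass:small_heterogeneity}, with operator $[\bar{\mathbb P}_h V](x,a)=\langle\phi(x,a),\int V\,d\mu_h\rangle$ and reward $\bar r_h=\langle\phi,\theta_h\rangle$. For any $V$ with range in $[0,H]$, the TV bound gives $|[\mathbb P_{m,h}V-\bar{\mathbb P}_hV](x,a)|\le H\xi$ and $|r_{m,h}-\bar r_h|\le\xi$. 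Let $\bar w_h^{k,V}$ denote the reference linear parameter for $\bar r_h+\bar{\mathbb P}_hV$, whose norm is at most $2H\sqrt d$.

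\textbf{Step 1 (concentration with misspecification).} Decompose
\[
w_h^k-\bar w_h^{k,V_{h+1}^k}=-\lambda(\Lambda_h^k)^{-1}\bar w_h^{k,V}+(\Lambda_h^k)^{-1}\sum_{m,\tau}\phi_{h}^{m,\tau}\bigl(V(x_{h+1}^{m,\tau})-[\mathbb P_{m,h}V](x_h^{m,\tau},a_h^{m,\tau})\bigr)+(\Lambda_h^k)^{-1}\sum_{m,\tau}\phi_h^{m,\tau}\epsilon_h^{m,\tau},
\]
where $|\epsilon_h^{m,\tau}|\le (H+1)\xi$ collects the misspecification.

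\begin{itemize}
\item The middle (martingale) term is handled exactly as in the homogeneous proof. Order all $M(\tau_k-1)$ samples at step $h$ into a single filtration. Since trajectories are conditionally independent across agents given the policy, this is a valid filtration. Then apply self-normalized concentration together with a covering of the class $\{\Pi_{[0,H]}\max_a(\phi^\top w+\beta\|\phi\|_{\Lambda^{-1}})\}$. This gives a bound of $c\,Hd\sqrt{\log(2dMHT/\bar p)}$ in $\|\cdot\|_{\Lambda_h^k}$.
\item The misspecification term is bounded via Cauchy--Schwarz:
\[
\Bigl|\phi^\top(\Lambda_h^k)^{-1}\sum\phi_h^{m,\tau}\epsilon_h^{m,\tau}\Bigr|\le (H+1)\xi\sqrt{MT}\,\sqrt{d}\,\|\phi\|_{(\Lambda_h^k)^{-1}},
\]
using $\sum_j\|\phi_j\|^2_{\Lambda^{-1}}\le d$.
\end{itemize}

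Together these justify the choice of $\beta$, with the $2H\xi\sqrt{dMT}$ term absorbing the misspecification. With probability $1-\bar p$, uniformly in $(k,h,x,a)$,
\[
|\phi^\top w_h^k-\bar r_h-\bar{\mathbb P}_hV_{h+1}^k|\le\beta\|\phi\|_{(\Lambda_h^k)^{-1}}.
\]

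\textbf{Step 2 (optimism and recursion).} Converting back to agent $m$'s true MDP costs an additive $(H+1)\xi$ per step. Backward induction then yields near-optimism, $Q_h^k\ge Q^\star_{m,h}-(H+1)\xi(H-h+1)$. Consequently $V^\star_{m,1}-V^{\pi^k}_{m,1}\le V^k_1-V^{\pi^k}_{m,1}+O(H^2\xi)$. Unroll $V_h^k-V^{\pi^k}_{m,h}$ along agent $m$'s trajectory: each step contributes $2\beta\|\phi_h^{m,t}\|_{(\Lambda_h^k)^{-1}}+O(H\xi)$ plus a martingale difference. Azuma over $MTH$ terms bounds the martingale sum by $O(H\sqrt{MTH\log(1/\bar p)})$ with probability $1-\bar p$. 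The $\xi$ terms sum to $O(\xi MH^2T)$; since $d\ge1$ this is dominated by the stated $\xi MdH^2T$.

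\textbf{Step 3 (elliptical potential under stale covariance — main obstacle).} We must bound $\sum_{t,m,h}\|\phi_h^{m,t}\|_{(\Lambda_h^{k(t)})^{-1}}$, where $\Lambda_h^{k(t)}$ is stale within a round. The plan is to compare with the "virtual" matrix $\tilde\Lambda$, the server matrix plus all agents' data up to $t-1$, and to split episodes into two classes.

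\begin{itemize}
\item \textbf{Good rounds}, where $\det\tilde\Lambda_h/\det\Lambda_h^k\le2$. Here $\|\phi\|_{(\Lambda^k)^{-1}}\le\sqrt2\|\phi\|_{\tilde\Lambda^{-1}}$, and the standard elliptical potential lemma gives $\sqrt{MTH\cdot Hd\log(1+MT/d)}$ per step. After multiplying by $\beta$, this yields the $\sqrt{Md^3H^4T}$ term.
\item \textbf{Bad rounds}, where the determinant more than doubles. By the log-det potential there are at most $O(dH\log(MT))$ such rounds. Within a bad round, before triggering, each agent satisfies $\det(\Lambda^k+\Lambda^{m,\mathrm{loc}})/\det\Lambda^k<\gamma/\Delta t$. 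Apply the potential lemma to each agent's local sequence against $\Lambda^k$: $\sum_t\min(1,\|\phi\|^2_{(\Lambda^k+\Lambda^{m,loc}_{t-1})^{-1}})\lesssim\log(\gamma/\Delta t)$. Combine this with Cauchy--Schwarz over $\Delta t$ episodes, together with $\|\phi\|_{(\Lambda^k)^{-1}}$ being controlled by $\|\phi\|_{(\Lambda^k+\Lambda^{loc})^{-1}}$ times the determinant ratio. The target is an $O(\sqrt{\gamma d})$ contribution per agent, per step and per bad round.
\end{itemize}

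Summing the bad-round contributions gives $M\cdot H\cdot dH\log\cdot\sqrt\gamma$, i.e.\ $\widetilde O(Md^2H^2\sqrt\gamma)$. The same split, applied to the $\xi$-part of $\beta$, i.e.\ $2H\xi\sqrt{dMT}$ times the bonus sum, yields $\xi\sqrt{dMT}\cdot H\bigl(\sqrt{MTHd\cdot H}+MdH\sqrt{\gamma}\bigr)$. I expect this to produce the $\xi MdH^2T$ and $\xi\sqrt{M^3d^3H^4\gamma T}$ terms after logarithmic bookkeeping. The delicate point is this within-round stale-covariance argument: the trigger threshold $\gamma/\Delta t$ must be converted into a bonus bound of order $\sqrt\gamma$ rather than $\gamma$. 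I would try the route above first, following \citet{wang2021provably,dubey2021cooperative}.

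Combining Steps 1--3 with a union bound over the two failure events gives the claimed bound with probability $1-2\bar p$.
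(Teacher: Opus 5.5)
Your proposal is correct and follows essentially the same route as the paper. The shared steps are: a reference-model decomposition whose misspecification part is controlled by the trace/Cauchy--Schwarz bound of Lemma~\ref{lem:hetero_det_perturb} and absorbed by the $2H\xi\sqrt{dMT}$ part of $\beta$; biased optimism with an $O(H^2\xi)$ offset and an $O(H\xi)$ per-step recursion error (Lemmas~\ref{lem:hetero_biased_optimism} and~\ref{lem:hetero_one_step}); Azuma for the martingale; and the determinant-growth split of Lemma~\ref{lem:6} for the bonus sum. In that last step there are three bookkeeping points, not gaps: your trigger argument actually yields $\sqrt{\gamma\log\gamma}$ (no extra $\sqrt d$) per agent, step and large-growth round; the triggering episode itself just uses $\|\phi\|_{(\Lambda_h^k)^{-1}}\le 1$; and in small-growth rounds the comparison matrix should be the sequentially ordered $\tilde\Lambda_h^{m,t}$ rather than the batch of all agents' data up to $t-1$, so that the elliptical potential lemma applies.
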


The proof is deferred to Appendix~\ref{app:hetero_small}. This result establishes a regret guarantee for the misspecified setting that matches the best-known multi-agent RL result in \cite{hsu2024randomized}, and demonstrates that the proposed algorithm is robust to bounded agent-wise heterogeneity. In particular, when $\xi = \widetilde{\mathcal O}(\sqrt{d/(MT)})$, the regret bound matches that of the homogeneous setting in \Cref{theo_1}.

\subsection{Communication Cost}
We next present a unified result of the communication cost of Fed-LSVI in both the homogeneous and misspecified settings.
\begin{theorem}\label{theo_2}
For Algorithms~\ref{alg:server_sync} and~\ref{alg:agent_sync}, with $\lambda = 1$ and a total of $T$ episodes per agent, the total number of communication rounds $K$ satisfies

\[
K \;\le\; 1+\frac{2T}{\gamma}+\frac{dH}{\log 2}\log\!\left(1+\frac{MT}{d}\right).
\]
\end{theorem}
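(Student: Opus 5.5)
We follow the standard rare-switching argument for determinant-based triggers, adapted from \citet{wang2021provably} and the multi-agent analysis of \citet{dubey2021cooperative}. The final round ends at $t=T$ and is counted separately; this accounts for the additive $1$. Every other round $k$ ends because some agent $m$ and some step $h$ satisfy the trigger \eqref{trigger} at the last episode of the round. Write $\Delta t_k=\tau_{k+1}-\tau_k$ for the length of round $k$, and let $\Lambda_h^{(k)}$ be the server matrix $\Lambda_h^{\mathrm{ser}}$ at the start of round $k$, so that $\Lambda_h^{(k+1)}=\Lambda_h^{(k)}+\sum_{m}\Lambda_h^{m,\mathrm{loc}}$. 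We then split the triggered rounds into two classes: \emph{long} rounds with $\Delta t_k> \gamma/2$, and \emph{short} rounds with $\Delta t_k\le \gamma/2$.

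\textbf{Long rounds.} Rounds are disjoint and their lengths sum to at most $T$. Each long round has length greater than $\gamma/2$, so there are fewer than $2T/\gamma$ of them. This gives the middle term of the bound.

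\textbf{Short rounds.} Suppose round $k$ is short and triggered by the pair $(m,h)$. Then
\[
\frac{\det(\Lambda_h^{(k)}+\Lambda_h^{m,\mathrm{loc}})}{\det(\Lambda_h^{(k)})}\ge \frac{\gamma}{\Delta t_k}\ge 2.
\]
Since $\Lambda_h^{(k+1)}\succeq \Lambda_h^{(k)}+\Lambda_h^{m,\mathrm{loc}}$ (the other agents contribute PSD increments) and the determinant is monotone on the PSD cone, we get $\det\Lambda_h^{(k+1)}\ge 2\det\Lambda_h^{(k)}$. For every other step $h'$ the determinant is nondecreasing. Consider the potential
\[
\Phi_k=\sum_{h=1}^H\log\det\Lambda_h^{(k)}.
\]
It is nondecreasing in $k$, and it increases by at least $\log 2$ in each short round. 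It starts at $\Phi_1=Hd\log\lambda=0$ when $\lambda=1$.

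To bound the final value, note that $\Lambda_h^{(K+1)}=I+\sum_{m,t}\phi\phi^\top$ has trace at most $d+MT$, since $\|\phi\|\le 1$. By AM--GM on the eigenvalues, $\log\det\Lambda_h^{(K+1)}\le d\log(1+MT/d)$. Hence the number of short rounds is at most $\frac{dH}{\log 2}\log(1+MT/d)$. Summing the final round, the long rounds and the short rounds gives the theorem.

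\textbf{Main obstacle.} The care needed is in the bookkeeping of the trigger. The condition is checked against the agent's \emph{local} increment only, and we must check that this still lower-bounds the growth of the \emph{aggregated} server matrix. The PSD monotonicity step above handles this.

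A second point concerns the threshold $\gamma/\Delta t$. When $\Delta t\ge\gamma$ the threshold is at most $1$, so the trigger fires trivially. This is consistent with the long-round count: it simply forces rounds to have length at most about $\gamma$, and does not break the argument.

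Finally, the argument is purely combinatorial and uses no property of the MDP. It therefore applies verbatim in both the homogeneous and the misspecified settings, as the unified statement claims.
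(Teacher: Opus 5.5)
Your proposal is correct and follows the paper's proof essentially step for step. The paper also splits rounds at length $\alpha=\gamma/2$ and bounds the long rounds by $T/\alpha$. It then uses $\Lambda_h^{k+1}\succeq\Lambda_h^k+\Lambda_h^{m,k,\mathrm{loc}}$ and monotonicity of the determinant to get growth by a factor $\gamma/\alpha=2$ in $\prod_h\det\Lambda_h^k$ per short triggered round, bounds the total growth by AM--GM on the trace, and lets the hard-stop round supply the additive $1$. The only cosmetic differences are that the paper keeps $\alpha$ general until the end, and that it treats the final round as a ``short, hard-stop'' class rather than setting it aside up front.
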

The proof is deferred to Appendix \ref{app:proof_comm}. If we choose the trigger parameter as $\gamma=\max\{T/(Md),1\}$, then Theorem~\ref{theo_2} yields
\[
K \;\le\; 1+2Md + \frac{dH}{\log 2}\log\!\left(1+\frac{MT}{d}\right).
\]
Compared with the communication-round bounds of $\tilde{\mathcal O}(dHM)$ in \cite{dubey2021cooperative} and \cite{hsu2024randomized}, and $\tilde{\mathcal O}(dHM^2)$ in \cite{min2023cooperative} for the asynchronous setting, our result improves the dependence on the number of agents $M$ in the leading logarithmic term.

Moreover, in each round, our algorithm communicates only $\mathcal{O}(MHd^2)$ scalars between the agents and the central server. Consequently, the total communication cost of Fed-LSVI is
\[
\mathcal{O}\!\left(M^2d^3H + Md^3H^2 \log\!\left(1+\frac{MT}{d}\right)\right).
\]
In contrast, existing multi-agent online RL methods \citep{dubey2021cooperative, min2023cooperative, hsu2024randomized} typically require sharing raw trajectories from local agents, leading to a communication cost of $\mathcal{O}(MHT)$, which scales linearly with $T$. Therefore, Fed-LSVI reduces the communication cost from linear to logarithmic in $T$, yielding a significant improvement.

When $M=1$, Theorems~\ref{theo_1} to~\ref{theo_2} together imply that our method reduces to a low-switching-cost variant of LSVI-UCB \citep{jin2020provably}, achieving the same $\tilde{\mathcal{O}}(\sqrt{d^3 H^4 T})$ regret while incurring only a logarithmic policy switching cost.

\section{Conclusion}\label{sec:conclusion}
In this paper, we introduced Fed-LSVI, the first provably efficient algorithm for federated online reinforcement learning under linear MDPs. By seamlessly integrating a determinant-based event trigger synchronization rule with a stepwise backward update mechanism, Fed-LSVI achieves both statistical and communication efficiency without sharing raw local trajectories. Theoretically, Fed-LSVI attains \(\sqrt{T}\)-type regret guarantees with a provable multi-agent speedup, while incurring only a logarithmic communication cost. We further extend our analysis to the misspecified setting, showing that Fed-LSVI remains robust to mild agent-wise heterogeneity.Our results establish a principled and communication-efficient framework for optimistic exploration in federated reinforcement learning.

\bibliographystyle{plainnat}
\bibliography{refs}
\appendix

\section{Auxiliary Notation}\label{app:notation}
We collect the notation used throughout the proofs.
The index $m\in[M]$ denotes an agent, $k\in[K]$ denotes a communication round, and $t\in[T]$ denotes the \emph{common} episode index shared by all agents.
The function $k(t)$ returns the index of the round containing episode $t$. For a vector $u,v \in \mathbb{R}^d$ and a positive semi-definite matrix $A \in \mathbb{R}^{d \times d}$, we define $\langle u,v\rangle_A=u^\top A v$ and $\|u\|_{A} := \sqrt{u^\top A u}$. 

To define a single filtration over all agents and episodes, we use the linearization
\[
\ell(m,t)\coloneqq m+M(t-1),
\qquad
\nu_T(\zeta)\coloneqq \Bigl\lceil \frac{\zeta}{M}\Bigr\rceil,
\qquad
\nu_M(\zeta)\coloneqq ((\zeta-1)\bmod M)+1.
\]
Then $(m,t)\mapsto \ell(m,t)$ and $\zeta\mapsto (\nu_M(\zeta),\nu_T(\zeta))$ form a pair of inverse mappings. Accordingly, we define the ordering $(m,t)\le (m',t')$ to mean $\ell(m,t)\le \ell(m',t')$.

For each step $h\in[H]$, let $\Lambda_h^k$ denote the server-side Gram matrix $\Lambda_h^{\mathrm{ser}}$ at the beginning of round $k$, and let $\Lambda_h^{m,k,\mathrm{loc}}$ denote the information increment collected by agent $m$ during round $k$. By the design of the algorithm, these quantities satisfy
\[
\begin{aligned}
&\Lambda_h^k
\coloneqq
\lambda I_d
+\sum_{m=1}^M\sum_{t=1}^{\tau_k-1}
\phi(x_h^{m,t},a_h^{m,t})\phi(x_h^{m,t},a_h^{m,t})^\top,
\\
&\Lambda_h^{m,k,\mathrm{loc}}
\coloneqq
\sum_{t=\tau_k}^{\tau_{k+1}-1}
\phi(x_h^{m,t},a_h^{m,t})\phi(x_h^{m,t},a_h^{m,t})^\top.
\end{aligned}
\]
By the definitions above, for any $k \in [K]$, the quantities $\Lambda_h^{k+1}$, $\Lambda_h^k$, and $\Lambda_h^{m,k,\mathrm{loc}}$ further satisfy
\[
\Lambda_h^{k+1}=\Lambda_h^k+\sum_{m=1}^M\Lambda_h^{m,k,\mathrm{loc}},
\]
where 
\begin{equation}
\label{lastlambda}
    \Lambda_h^{K+1}
\coloneqq
\lambda I_d
+\sum_{m=1}^M\sum_{t=1}^{T}
\phi(x_h^{m,t},a_h^{m,t})\phi(x_h^{m,t},a_h^{m,t})^\top.
\end{equation}
In addition, we define the following three auxiliary statistics, which will be used in the subsequent analysis:
\[
\begin{aligned}
\bar\Lambda_h^{m,t}
&\coloneqq
\Lambda_h^{k(t)}
+\sum_{j=\tau_{k(t)}}^{t}
\phi(x_h^{m,j},a_h^{m,j})\phi(x_h^{m,j},a_h^{m,j})^\top,\ \forall t\in[T]
\\
\tilde\Lambda_h^{m,t}
&\coloneqq
\lambda I_d
+\sum_{(n,j)\le (m,t)}
\phi(x_h^{n,j},a_h^{n,j})\phi(x_h^{n,j},a_h^{n,j})^\top,\ \forall t\in[T]
\\
\check\Lambda_h^t
&\coloneqq
\lambda I_d+\sum_{m=1}^M\sum_{j=1}^{t-1}
\phi(x_h^{m,j},a_h^{m,j})\phi(x_h^{m,j},a_h^{m,j})^\top,\ \forall t\in[T+1].
\end{aligned}
\]
The value of $w_h$ at the beginning of round $k$, denoted by $w_h^k$, is given by
\[
w_h^k=(\Lambda_h^k)^{-1}\sum_{m=1}^M\sum_{t=1}^{\tau_k-1}\phi(x_h^{m,t},a_h^{m,t})\bigl[r_h(x_h^{m,t},a_h^{m,t})+V_{h+1}^{k}(x_{h+1}^{m,t})\bigr].
\]
In the heterogeneous setting, it can be written as
\[
w_h^k=(\Lambda_h^k)^{-1}\sum_{m=1}^M\sum_{t=1}^{\tau_k-1}\phi(x_h^{m,t},a_h^{m,t})\bigl[r_{m,h}(x_h^{m,t},a_h^{m,t})+V_{h+1}^{k}(x_{h+1}^{m,t})\bigr].
\]

\section{Auxiliary Lemmas}\label{app:auxiliary_lemmas}
\begin{lemma}[Lemma 12 in \cite{abbasi2011improved}]\label{lem:norm}
Let $V \in \mathbb{R}^{d \times d}$ be a positive definite matrix and 
$x \in \mathbb{R}^d$. Then,
\[
\min\left\{1,\; \|x\|_{V^{-1}}^2 \right\}
\;\le\;
2 \log \frac{\det\!\bigl(V + x x^\top\bigr)}{\det(V)}.
\]
\end{lemma}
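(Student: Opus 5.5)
We reduce the determinant ratio to a scalar with the matrix determinant lemma, and then check a one-variable inequality.

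\textbf{Step 1 (determinant reduction).} Since $V$ is positive definite, we can factor
\[
V + xx^\top = V^{1/2}\bigl(I_d + V^{-1/2}xx^\top V^{-1/2}\bigr)V^{1/2}.
\]
The matrix $I_d + yy^\top$ with $y = V^{-1/2}x$ has eigenvalue $1+\|y\|_2^2$ in the direction $y$ and eigenvalue $1$ on $y^\perp$. Hence
\[
\det\!\bigl(V+xx^\top\bigr) = \det(V)\,\bigl(1+x^\top V^{-1}x\bigr) = \det(V)\,\bigl(1+\|x\|_{V^{-1}}^2\bigr).
\]
Writing $u := \|x\|_{V^{-1}}^2 \ge 0$, the claim becomes the scalar inequality $\min\{1,u\} \le 2\log(1+u)$ for all $u \ge 0$.

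\textbf{Step 2 (scalar inequality).} We split into two cases.

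\emph{Case $u \in [0,1]$.} The map $u\mapsto \log(1+u)$ is concave and equals $0$ at $u=0$ and $\log 2$ at $u=1$. It therefore lies above its chord on $[0,1]$, which gives $\log(1+u) \ge u\log 2$. Since $2\log 2 \approx 1.386 \ge 1$, we obtain $2\log(1+u) \ge u = \min\{1,u\}$.

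\emph{Case $u > 1$.} Monotonicity of the logarithm gives $2\log(1+u) \ge 2\log 2 > 1 = \min\{1,u\}$.

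Combining the two cases with Step 1 yields the lemma. There is no real obstacle here. The only point needing care is the determinant identity in Step 1, which relies on $V \succ 0$ so that $V^{\pm 1/2}$ exist. The constant $2$ is chosen exactly so that $2\log 2 \ge 1$ covers the threshold case $u=1$.
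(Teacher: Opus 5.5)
Your proof is correct and complete. The paper gives no proof of its own, only the citation, and your factorization $V+xx^\top=V^{1/2}(I_d+V^{-1/2}xx^\top V^{-1/2})V^{1/2}$ followed by $\min\{1,u\}\le 2\log(1+u)$ is exactly the standard step in the elliptical-potential argument of Abbasi-Yadkori et al.

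One caution concerns how the paper uses this lemma. In the proof of \Cref{lem:6} it actually needs a different inequality, namely $\|x\|_{A^{-1}}^2\le \frac{\det B}{\det A}\,\|x\|_{B^{-1}}^2$ for $0\prec A\preceq B$. As far as I recall, that determinant-ratio bound is the content of the cited Lemma~12. It follows because the ratio $x^\top A^{-1}x/x^\top B^{-1}x$ is at most $\lambda_{\max}(A^{-1/2}BA^{-1/2})$, which is at most $\det(A^{-1/2}BA^{-1/2})=\det B/\det A$ since all eigenvalues are at least $1$. Your argument does not establish that inequality.
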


\begin{lemma}[Proposition 2.3 and Lemma B.1 in \cite{jin2020provably}]\label{lem:wpi_bound}
Assume the linear MDP parameters satisfy $\|\theta_h\|_2\le \sqrt d$ and $\|\mu_h(\mathcal X)\|_2\le \sqrt d$ for all $h\in[H]$.
For any policy $\pi$, define
\[
w_h^\pi\coloneqq \theta_h+\int_{\mathcal X}V_{h+1}^\pi(x')\,\mu_h(\mathrm d x').
\]
Then $Q_h^{\pi}(x,a) = \langle \phi(x,a),w_h^\pi\rangle$ for any $(x,a,h) \in \mathcal X\times \mathcal A\times [H]$ and  $\|w_h^\pi\|_2\le 2H\sqrt d$ for any $h \in [H]$.
\end{lemma}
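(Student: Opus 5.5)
The plan is to verify the two claims separately. The linear representation comes directly from the Bellman equation \eqref{eq_Bellman} combined with the linear MDP structure of Definition~\ref{ass:linear_mdp}. The norm bound then follows from the triangle inequality together with the boundedness of $V_{h+1}^\pi$.

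\emph{Linearity.} Fix $h\in[H]$ and $(x,a)$. By \eqref{eq_Bellman},
\[
Q_h^\pi(x,a)=r_h(x,a)+[\mathbb P_hV_{h+1}^\pi](x,a).
\]
By Definition~\ref{ass:linear_mdp}, $r_h(x,a)=\langle\phi(x,a),\theta_h\rangle$. Since $\mathbb P_h(\cdot\mid x,a)=\sum_{i=1}^d\phi_i(x,a)\,\mu_h^{(i)}(\cdot)$ as measures, integrating the bounded measurable function $V_{h+1}^\pi$ against it gives
\[
[\mathbb P_hV_{h+1}^\pi](x,a)=\sum_{i=1}^d\phi_i(x,a)\int V_{h+1}^\pi(x')\,\mu_h^{(i)}(\mathrm dx')=\Bigl\langle \phi(x,a),\int V_{h+1}^\pi\,\mathrm d\mu_h\Bigr\rangle .
\]
Interchanging the integral and the inner product is legitimate here because the sum has only $d$ terms and each integral is finite: $V_{h+1}^\pi$ is bounded and each $\mu_h^{(i)}$ has finite total variation. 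Adding the two pieces yields $Q_h^\pi(x,a)=\langle\phi(x,a),w_h^\pi\rangle$. The case $h=H$ is covered by the convention $V_{H+1}^\pi\equiv 0$.

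\emph{Norm bound.} Since rewards lie in $[0,1]$ and at most $H-h\le H$ steps remain, we have $0\le V_{h+1}^\pi\le H$. Applying the triangle inequality,
\[
\|w_h^\pi\|_2\le\|\theta_h\|_2+\Bigl\|\int V_{h+1}^\pi\,\mathrm d\mu_h\Bigr\|_2 .
\]
The first term is at most $\sqrt d$. For the second term, each coordinate satisfies
\[
\Bigl|\int V_{h+1}^\pi\,\mathrm d\mu_h^{(i)}\Bigr|\le H\,|\mu_h^{(i)}|(\mathcal X),
\]
so the vector norm is at most $H\,\|\mu_h(\mathcal X)\|_2\le H\sqrt d$. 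Combining the two and using $H\ge1$ gives $\|w_h^\pi\|_2\le \sqrt d+H\sqrt d\le 2H\sqrt d$.

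The only delicate point is the step where $\mu_h$ is a vector of \emph{signed} measures. The assumption $\|\mu_h(\mathcal X)\|\le\sqrt d$ must be read, as in \cite{jin2020provably}, as a bound on the vector of total variations $(|\mu_h^{(1)}|(\mathcal X),\dots,|\mu_h^{(d)}|(\mathcal X))$. Only under this reading does the coordinatewise bound by $H$ times the total variation pass to the Euclidean norm. I would state this interpretation explicitly in the proof. Everything else is routine.
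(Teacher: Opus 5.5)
Your proof is correct and takes the same route as the cited argument of \citet{jin2020provably}; the paper itself gives no proof here, but it repeats the same computation in Lemma~\ref{lem:hetero_reference_realization}. In both, the Bellman equation plus linearity of $r_h$ and $\mathbb P_h$ gives the representation, and the triangle inequality with $0\le V_{h+1}^\pi\le H$ gives $\|w_h^\pi\|_2\le \sqrt d+H\sqrt d\le 2H\sqrt d$. Your explicit total-variation reading of $\|\mu_h(\mathcal X)\|\le\sqrt d$ is exactly the interpretation the paper uses without comment when it bounds $\bigl\|\int V\,\mathrm d\mu_h\bigr\|_2\le H\sqrt d$, so stating it is a worthwhile clarification.
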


\begin{lemma}[Norm control for the step-wise regression vector]\label{lem:1}
For every step $h\in[H]$ and round $k\in[K]$, it holds that
\[
\|w_h^k\|_2\le 2H\sqrt{\frac{dM\tau_k}{\lambda}}.
\]
\end{lemma}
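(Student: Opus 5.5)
This is the standard Lemma B.2 argument of \cite{jin2020provably}, adapted to the pooled multi-agent data. Fix $h$ and $k$, and write $N=M(\tau_k-1)$ for the number of step-$h$ samples aggregated at the beginning of round $k$. For each pair $(m,t)$ with $t\le\tau_k-1$, set $\phi_{m,t}=\phi(x_h^{m,t},a_h^{m,t})$ and $y_{m,t}=r_h(x_h^{m,t},a_h^{m,t})+V_{h+1}^k(x_{h+1}^{m,t})$.

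\textbf{Step 1 (bounded responses).} Since $r_h\in[0,1]$ and $0\le V_{h+1}^k\le H$ by the projection in \eqref{eq:QV}, we have $|y_{m,t}|\le H+1\le 2H$. This also holds in the misspecified case, where $r_{m,h}\in[0,1]$.

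\textbf{Step 2 (variational bound).} By the definition of $w_h^k$ in \Cref{app:notation}, for any unit vector $v\in\mathbb R^d$,
\[
|v^\top w_h^k|=\Bigl|\sum_{m,t} v^\top(\Lambda_h^k)^{-1}\phi_{m,t}\,y_{m,t}\Bigr|
\le 2H\sum_{m,t}|v^\top(\Lambda_h^k)^{-1}\phi_{m,t}|.
\]
Applying Cauchy--Schwarz to the sum gives
\[
|v^\top w_h^k|\le 2H\sqrt{N\,\sum_{m,t}v^\top(\Lambda_h^k)^{-1}\phi_{m,t}\phi_{m,t}^\top(\Lambda_h^k)^{-1}v}.
\]

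\textbf{Step 3 (matrix bound).} Since $\sum_{m,t}\phi_{m,t}\phi_{m,t}^\top=\Lambda_h^k-\lambda I_d\preceq\Lambda_h^k$, the quadratic form inside the root is at most $v^\top(\Lambda_h^k)^{-1}v\le\|v\|_2^2/\lambda=1/\lambda$, because $\Lambda_h^k\succeq\lambda I_d$. Hence $|v^\top w_h^k|\le 2H\sqrt{N/\lambda}$. Taking the supremum over unit $v$ gives $\|w_h^k\|_2\le 2H\sqrt{M(\tau_k-1)/\lambda}\le 2H\sqrt{dM\tau_k/\lambda}$, where the last step uses $d\ge1$. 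For $k=1$ the sum is empty and $w_h^1=0$, so the bound holds trivially.

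\textbf{Main point of care.} The key step is the bound $\sum\phi\phi^\top\preceq\Lambda_h^k$ in Step 3, which requires the sum to run over exactly the samples in $\Lambda_h^k$. By \Cref{app:notation}, both $w_h^k$ and $\Lambda_h^k$ use the samples with $t\le\tau_k-1$ across all $M$ agents. This matches the algorithm: at the synchronization ending round $k-1$, the server adds all local increments to $\Lambda_h^{\mathrm{ser}}$ and aggregates the full-history $b_h^m$. The stated bound carries a spare factor $\sqrt d$, so no tighter constant is needed.
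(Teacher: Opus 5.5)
Your proof is correct, but its key step differs from the paper's. The paper first applies Cauchy--Schwarz term by term in the $(\Lambda_h^k)^{-1}$ geometry, $|\langle v,\phi_{m,t}\rangle_{(\Lambda_h^k)^{-1}}|\le \|v\|_{(\Lambda_h^k)^{-1}}\|\phi_{m,t}\|_{(\Lambda_h^k)^{-1}}$. It then applies Cauchy--Schwarz over the summands and finishes with the trace identity $\sum_{m,t}\|\phi_{m,t}\|^2_{(\Lambda_h^k)^{-1}}=\mathrm{tr}\bigl(I_d-\lambda(\Lambda_h^k)^{-1}\bigr)\le d$. That trace step is where the $\sqrt d$ in the statement comes from.

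You instead apply Cauchy--Schwarz directly to the scalar sum. You then bound $\sum_{m,t}(v^\top(\Lambda_h^k)^{-1}\phi_{m,t})^2=v^\top(\Lambda_h^k)^{-1}(\Lambda_h^k-\lambda I_d)(\Lambda_h^k)^{-1}v\le v^\top(\Lambda_h^k)^{-1}v$ by a Loewner-order congruence argument. This gives the dimension-free bound $2H\sqrt{M(\tau_k-1)/\lambda}$, which you only loosen at the end via $d\ge 1$.

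The paper's route is the standard LSVI-UCB argument of \cite{jin2020provably}. Its extra $\sqrt d$ is harmless here, because the radius $L$ enters the covering bound of \Cref{lem:covering_number} only through $d\log(1+4L/\epsilon)$. Your route is equally short and strictly sharper.

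Your Loewner step also gives $\sqrt{\sum_{m,t}(u^\top(\Lambda_h^k)^{-1}\phi_{m,t})^2}\le\|u\|_{(\Lambda_h^k)^{-1}}$ for arbitrary $u$. So the same argument applied to \Cref{lem:hetero_det_perturb} would remove the $\sqrt d$ there too, replacing the term $2H\xi\sqrt{dMT}$ in $\beta$ by $2H\xi\sqrt{MT}$.
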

\begin{proof}[Proof of \texorpdfstring{\Cref{lem:1}}{Lemma B.3}]
For any vector $v\in\mathbb R^d$, since
\[
w_h^k=(\Lambda_h^k)^{-1}\sum_{m=1}^M\sum_{j=1}^{\tau_k-1}\phi(x_h^{m,j},a_h^{m,j})\bigl[r_h(x_h^{m,j},a_h^{m,j})+V_{h+1}^k(x_{h+1}^{m,j})\bigr],
\]
we have
\[
\big|\langle v,w_h^k\rangle\big|
=
\Bigg|\Bigg\langle
v,
\sum_{m=1}^M\sum_{j=1}^{\tau_k-1}\phi(x_h^{m,j},a_h^{m,j})\bigl[r_h(x_h^{m,j},a_h^{m,j})+V_{h+1}^k(x_{h+1}^{m,j})\bigr]
\Bigg\rangle_{(\Lambda_h^k)^{-1}}\Bigg|.
\]
Since rewards are bounded in $[0,1]$ and $0\le V_{h+1}^k\le H$, each regression target satisfies
\[
\bigl|r_h(x_h^{m,j},a_h^{m,j})+V_{h+1}^k(x_{h+1}^{m,j})\bigr|\le 2H.
\]
Applying this bound gives
\begin{align}
\big|\langle v,w_h^k\rangle\big|
&\le 2H\sum_{m=1}^M\sum_{j=1}^{\tau_k-1}\big|\langle v,\phi(x_h^{m,j},a_h^{m,j})\rangle_{(\Lambda_h^k)^{-1}}\big|
\notag\\
&\le 2H\sum_{m=1}^M\sum_{j=1}^{\tau_k-1}\sqrt{\langle v,v\rangle_{(\Lambda_h^k)^{-1}}\,\langle \phi(x_h^{m,j},a_h^{m,j}),\phi(x_h^{m,j},a_h^{m,j})\rangle_{(\Lambda_h^k)^{-1}}}
\notag\\
&\le 2H\,\|v\|_2\,\lambda^{-1/2}\sqrt{M\tau_k\sum_{m=1}^M\sum_{j=1}^{\tau_k-1}\phi(x_h^{m,j},a_h^{m,j})^\top(\Lambda_h^k)^{-1}\phi(x_h^{m,j},a_h^{m,j})}.
\label{eq:whk-bound}
\end{align}
The second inequality follows from Cauchy--Schwarz inequality. The last inequality follows from $\langle v,v\rangle_{(\Lambda_h^k)^{-1}}\leq \|v\|^2_2/\lambda$ since $\Lambda_h^k\succeq \lambda I_d$. Since $\Lambda_h^k$ is a positive-definite matrix, we further note that,
\begin{align}
&\sum_{m=1}^M\sum_{j=1}^{\tau_k-1}\phi(x_h^{m,j},a_h^{m,j})^\top(\Lambda_h^k)^{-1}\phi(x_h^{m,j},a_h^{m,j})
\notag\\&
=\mathrm{tr}\Bigl((\Lambda_h^k)^{-1}\sum_{m=1}^M\sum_{j=1}^{\tau_k-1}\phi(x_h^{m,j},a_h^{m,j})\phi(x_h^{m,j},a_h^{m,j})^\top\Bigr)
\notag\\
&=\mathrm{tr}\bigl((\Lambda_h^k)^{-1}(\Lambda_h^k-\lambda I_d)\bigr)
\notag\\
&=\mathrm{tr}\bigl(I_d-\lambda(\Lambda_h^k)^{-1}\bigr)
\le d.
\label{eq:trace-bound}
\end{align}
Combining \eqref{eq:whk-bound} and \eqref{eq:trace-bound} yields
\[
\big|\langle v,w_h^k\rangle\big|\le 2H\,\|v\|_2\sqrt{\frac{dM\tau_k}{\lambda}}.
\]
Taking the supremum over all $\|v\|_2=1$ concludes the proof.
\end{proof}

\begin{lemma}[Self-Normalized Concentration Inequality, Theorem 1 in {\cite{abbasi2011improved}}]\label{lem:self_normal}
Let $\{\eta_t\}_{t \ge 1}$ be a real-valued martingale difference sequence
adapted to a filtration $\{\mathcal{F}_t\}$, such that $\eta_t$ is conditionally
$R$-sub-Gaussian, i.e.,
\[
\mathbb{E}\!\left[\exp(\lambda \eta_t)\mid \mathcal{F}_{t-1}\right]
\le \exp\!\left(\frac{\lambda^2 R^2}{2}\right)
\quad \text{for all } \lambda \in \mathbb{R}.
\]

Let $\{x_t\}_{t \ge 1}$ be an $\mathbb{R}^d$-valued predictable process
(i.e., $x_t$ is $\mathcal{F}_{t-1}$-measurable), and define
\[
V_t = \lambda I + \sum_{s=1}^t x_s x_s^\top,
\quad
S_t = \sum_{s=1}^t x_s \eta_s,
\]
for some $\lambda > 0$.

Then, for any $\delta \in (0,1)$, with probability at least $1-\delta$,
for all $t \ge 0$ simultaneously,
\[
\|S_t\|_{V_t^{-1}}^2
\;\le\;
2 R^2 \log\left(
\frac{\det(V_t)^{1/2}}{\det(\lambda I)^{1/2}\,\delta}
\right).
\]
\end{lemma}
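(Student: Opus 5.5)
The statement immediately above is the self-normalized concentration inequality of \cite{abbasi2011improved}. It is quoted as an auxiliary result, so the natural plan is the method of mixtures (a supermartingale argument).

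\emph{Step 1: exponential supermartingale.} For each fixed $u\in\mathbb R^d$ define
\[
M_t^u=\exp\Bigl(\tfrac{1}{R}\langle u,S_t\rangle-\tfrac12\|u\|^2_{\bar V_t}\Bigr),\qquad \bar V_t=\sum_{s\le t}x_sx_s^\top .
\]
Because $x_t$ is $\mathcal F_{t-1}$-measurable and $\eta_t$ is conditionally $R$-sub-Gaussian, the sub-Gaussian bound with parameter $\langle u,x_t\rangle/R$ gives $\mathbb E[M_t^u\mid\mathcal F_{t-1}]\le M_{t-1}^u$. Hence $M_t^u$ is a nonnegative supermartingale with $\mathbb E M_t^u\le 1$.

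\emph{Step 2: mixing over $u$.} Integrate $u$ against the Gaussian density $\mathcal N(0,\lambda^{-1}I)$ and set $\bar M_t=\int M_t^u\,d\varphi(u)$. By Tonelli, $\bar M_t$ is again a nonnegative supermartingale with $\mathbb E\bar M_t\le1$. Completing the square in the Gaussian integral gives the closed form
\[
\bar M_t=\Bigl(\tfrac{\det(\lambda I)}{\det V_t}\Bigr)^{1/2}\exp\Bigl(\tfrac{1}{2R^2}\|S_t\|^2_{V_t^{-1}}\Bigr),
\]
where $V_t=\lambda I+\bar V_t$.

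\emph{Step 3: uniformity in time.} Use a stopping-time argument. Let $\tau$ be the first $t$ at which the claimed inequality fails. By the supermartingale convergence theorem, $\bar M_\infty$ is well defined, and by Fatou's lemma $\mathbb E\bar M_\tau\le 1$ even when $\tau=\infty$. Markov's inequality then gives $\Pr(\bar M_\tau\ge1/\delta)\le\delta$. The event $\{\bar M_\tau\ge 1/\delta\}$ is exactly the event $\|S_\tau\|^2_{V_\tau^{-1}}> 2R^2\log(\det(V_\tau)^{1/2}\det(\lambda I)^{-1/2}/\delta)$, so the inequality holds for all $t\ge0$ simultaneously with probability at least $1-\delta$.

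\emph{Main obstacle.} The delicate step is making the uniform-in-$t$ claim rigorous. This requires handling $\tau=\infty$ through the limit of $\bar M_t$ and checking the measurability needed for Tonelli. The Gaussian computation in Step 2 is routine linear algebra.

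In the paper this lemma is taken from the literature rather than proved. When it is applied, the work is instead in fitting the filtration: use the linearization $\ell(m,t)$ so that the features $\phi(x_h^{m,t},a_h^{m,t})$ are predictable and the noise terms $V(x_{h+1}^{m,t})-[\mathbb P_hV](x_h^{m,t},a_h^{m,t})$ are $H$-sub-Gaussian martingale differences. A covering argument over the class of value functions is also needed, because $V_{h+1}^k$ is data-dependent.
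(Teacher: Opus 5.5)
Your proposal is correct and is essentially the method-of-mixtures proof of Theorem 1 in \cite{abbasi2011improved}; the paper quotes that result without reproducing a proof, so there is no separate route of its own to compare against. Two small points, neither affecting validity: the original mixes over $u$ after stopping, which needs only ordinary Tonelli rather than your conditional-Fubini claim that $\bar M_t$ is a supermartingale; and in Step 3 you need only the inclusion $\{\tau<\infty\}\subseteq\{\bar M_\tau\ge 1/\delta\}$, not an equality of events.
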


\begin{lemma}[Self-normalized concentration for a fixed target function]\label{lem:self-normalized-uniform}
Fix a step $h\in[H]$ and a function $V:\mathcal X\to[0,H]$.
Then, for any confidence level $p\in(0,1)$, with probability at least $1-p$, simultaneously for all $t\in\{1,\ldots,T+1\}$,
\[
\begin{aligned}
&
\Bigg\|
\sum_{m=1}^M\sum_{j=1}^{t-1}
\phi(x_h^{m,j},a_h^{m,j})
\Bigl[
V(x_{h+1}^{m,j})
-\mathbb P_hV(x_h^{m,j},a_h^{m,j})
\Bigr]
\Bigg\|_{(\check\Lambda_h^t)^{-1}}
\\
&\qquad\le
2H\sqrt{
\frac{d}{2}\log\Bigl(1+\frac{MT}{d\lambda}\Bigr)
+
2\log\Bigl(\frac{1}{p}\Bigr)
}.
\end{aligned}
\]
\end{lemma}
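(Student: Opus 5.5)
The plan is to reduce the claim to \Cref{lem:self_normal}, applied to a single scalar sequence obtained by ordering all agent-episode pairs with the linearization $\ell(m,t)=m+M(t-1)$ from \Cref{app:notation}. For $\zeta=1,\dots,MT$, write $(m,j)=(\nu_M(\zeta),\nu_T(\zeta))$. Set
\[
x_\zeta\coloneqq\phi(x_h^{m,j},a_h^{m,j}),\qquad
\eta_\zeta\coloneqq V(x_{h+1}^{m,j})-\mathbb P_hV(x_h^{m,j},a_h^{m,j}).
\]
Let $\mathcal F_\zeta$ be the $\sigma$-field generated by all trajectory data of pairs $(n,i)\le(m,j)$ with $\ell(n,i)\le\zeta$, together with the current pair $(x_h^{m,j},a_h^{m,j})$ of the next pair at step $h$.

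The cleanest way to make this rigorous is to take
\[
\mathcal G_\zeta\coloneqq\sigma\Bigl(\text{all data of episodes }i<j\text{ for every agent};\ \text{the trajectories of agents }n<m\text{ in episode }j;\ \text{agent }m\text{'s episode-}j\text{ data up to }(x_h^{m,j},a_h^{m,j})\Bigr).
\]
Then $x_\zeta$ is $\mathcal G_\zeta$-measurable. Given $\mathcal G_\zeta$, the next state $x_{h+1}^{m,j}$ is drawn from $\mathbb P_h(\cdot\mid x_h^{m,j},a_h^{m,j})$. This uses two facts: agents' environments are independent, and the policy used in episode $j$ depends only on data from rounds ending before $j$, hence on episodes $<j$. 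Consequently $\mathbb E[\eta_\zeta\mid\mathcal G_\zeta]=0$. One has to check that $\eta_{\zeta'}$ for $\zeta'<\zeta$ is $\mathcal G_\zeta$-measurable, which holds by the construction of $\mathcal G_\zeta$. Moreover $\mathcal G_\zeta\subseteq\mathcal G_{\zeta+1}$, since moving to the next agent (or next episode) only adds information. This gives a valid filtration in which $x_\zeta$ is predictable and $\eta_\zeta$ is a martingale difference.

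Since $V\in[0,H]$, we have $\eta_\zeta\in[-H,H]$, so by Hoeffding's lemma $\eta_\zeta$ is conditionally $H$-sub-Gaussian. \Cref{lem:self_normal} then gives, with probability $1-p$, simultaneously for all $\zeta$,
\[
\|S_\zeta\|_{V_\zeta^{-1}}^2\le 2H^2\log\Bigl(\frac{\det(V_\zeta)^{1/2}}{\lambda^{d/2}p}\Bigr).
\]
Specializing to $\zeta=M(t-1)$ gives $S_\zeta$ equal to the sum in the statement and $V_\zeta=\check\Lambda_h^t$, for every $t\in\{1,\dots,T+1\}$. By AM--GM on the eigenvalues, together with $\operatorname{tr}\check\Lambda_h^t\le d\lambda+MT$, we get $\det\check\Lambda_h^t\le(\lambda+MT/d)^d$. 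Hence
\[
\|S\|^2\le 2H^2\Bigl(\tfrac d2\log\bigl(1+\tfrac{MT}{d\lambda}\bigr)+\log\tfrac1p\Bigr)\le 4H^2\Bigl(\tfrac d2\log\bigl(1+\tfrac{MT}{d\lambda}\bigr)+2\log\tfrac1p\Bigr),
\]
which gives the claimed bound (with room to spare).

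The main obstacle is the filtration step. One must verify that the agents' action choices, made under the round policy, do not look ahead. Synchronization happens only between episodes and uses data from completed episodes, so the policy in episode $j$ is measurable with respect to episodes $<j$ and the argument goes through.
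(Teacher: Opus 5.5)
Your proposal is correct and follows essentially the same route as the paper: order the agent--episode pairs by the linearization $\ell(m,t)$, check that $\phi(x_h^{m,j},a_h^{m,j})$ is predictable and that $V(x_{h+1}^{m,j})-\mathbb P_hV(x_h^{m,j},a_h^{m,j})$ is a bounded martingale difference with respect to a filtration of the type $\mathcal G_{m,t,h}$, apply \Cref{lem:self_normal} at the indices $\zeta=M(t-1)$, and bound the log-determinant by the trace/AM--GM inequality. Your handling of the sub-Gaussian constant and the log-determinant is cleaner than the paper's intermediate display, and, as you note, the stated bound then follows with room to spare.
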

\begin{proof}[Proof of \texorpdfstring{\Cref{lem:self-normalized-uniform}}{Lemma B.5}]
Recall that \[\mathcal G_{m,t,h}
= \sigma\!\Big(
\{(x_\ell^{n,j},a_\ell^{n,j})\}_{(n,j)<(m,t),\,\ell\in[H]}
\cup
\{(x_\ell^{m,t},a_\ell^{m,t})\}_{\ell\le h}
\Big),\]
\(\eta_{m,j}\coloneqq V(x_{h+1}^{m,j})-\mathbb{P}_hV(x_h^{m,j},a_h^{m,j})\) is a bounded martingale difference with respect to $\mathcal G_{m,t,h}$.

Applying Lemma~\ref{lem:self_normal} gives that with probability at least $1-p$, simultaneously for all $t$,
\[
\Bigg\|\sum_{m=1}^M\sum_{j=1}^{t-1}\phi(x_h^{m,j},a_h^{m,j})\eta_{m,j}\Bigg\|_{(\check{\Lambda}_h^t)^{-1}}^2
\le 4H^2\log\Bigg(\frac{\det(\check{\Lambda}_h^{T+1})^{1/2}}{p^2\det(\check{\Lambda}_h^{1})^{1/2}}\Bigg).
\]
Finally, we upper bound the log-determinant term by using $\det(\lambda I+\sum_{i=1}^{MT}\phi_i\phi_i^\top)\le \lambda^d\bigl(1+MT/(\lambda d)\bigr)^d$.
This yields
\[
\log\Bigg(\frac{\det(\check{\Lambda}_h^{T+1})}{p^2\det(\check{\Lambda}_h^{1})}\Bigg)
\le \frac{d}{2}\log\Bigl(1+\frac{MT}{d\lambda}\Bigr)+2\log\Bigl(\frac{1}{p}\Bigr),
\]
which concludes the proof.
\end{proof}

\begin{lemma}[Elliptical potential lemma, Lemma 11 in {\cite{abbasi2011improved}}]\label{lem:elliptical_sum}
Let $\{x_t\}_{t=1}^T \subset \mathbb{R}^d$ with $\|x_t\|_2 \le 1$, and define
\[
V_t = \lambda I + \sum_{s=1}^{t} x_s x_s^\top,
\]
for some $\lambda > 0$. Then,
\[
\sum_{t=1}^T \min\bigl\{1, \|x_t\|_{V_{t-1}^{-1}}^2\bigr\}
\;\le\;
2 \log \frac{\det(V_T)}{\det(\lambda I)}
\;\le\;
2d \log\Bigl(1 + \frac{T}{d\lambda}\Bigr).
\]
\end{lemma}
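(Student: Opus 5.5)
This lemma is a direct specialization of the elliptical potential bound to the single-sequence setting. I would deduce it by applying Lemma~\ref{lem:norm} to each increment, telescoping the resulting log-determinants, and then bounding the final determinant through AM--GM on its eigenvalues.

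\textbf{Step 1 (per-step bound).} For each $t\in[T]$, I set $V=V_{t-1}$, which is positive definite since $V_{t-1}\succeq\lambda I$ with $\lambda>0$, and take $x=x_t$. Lemma~\ref{lem:norm} then gives
\[
\min\{1,\|x_t\|^2_{V_{t-1}^{-1}}\}\le 2\log\frac{\det(V_{t-1}+x_tx_t^\top)}{\det(V_{t-1})}=2\log\frac{\det(V_t)}{\det(V_{t-1})}.
\]

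\textbf{Step 2 (telescoping).} Summing over $t=1,\dots,T$, the log-ratios telescope. Since $V_0=\lambda I$, this yields
\[
\sum_{t=1}^T\min\{1,\|x_t\|^2_{V_{t-1}^{-1}}\}\le 2\log\frac{\det(V_T)}{\det(\lambda I)},
\]
which is the first inequality.

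\textbf{Step 3 (determinant bound).} Let $\sigma_1,\dots,\sigma_d>0$ be the eigenvalues of $V_T$. By AM--GM,
\[
\det(V_T)\le\Bigl(\tfrac{1}{d}\mathrm{tr}(V_T)\Bigr)^d.
\]
The trace satisfies
\[
\mathrm{tr}(V_T)=d\lambda+\sum_{s=1}^T\|x_s\|_2^2\le d\lambda+T,
\]
using $\|x_s\|_2\le 1$. Hence $\det(V_T)\le(\lambda+T/d)^d$. Dividing by $\det(\lambda I)=\lambda^d$ and taking logarithms gives
\[
\log\frac{\det(V_T)}{\det(\lambda I)}\le d\log\Bigl(1+\frac{T}{d\lambda}\Bigr),
\]
which is the second inequality.

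No step is a genuine obstacle. The only points needing care are the indexing convention $V_0=\lambda I$, so that each $V_{t-1}$ is invertible and the telescoping starts correctly, and the use of AM--GM, which is valid because every eigenvalue of $V_T$ is positive.
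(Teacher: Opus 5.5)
Your proof is correct and is the standard argument behind Lemma 11 of \citet{abbasi2011improved}, which the paper cites without reproving. It applies Lemma~\ref{lem:norm} per step (valid because $\det(V+xx^\top)=\det(V)(1+\|x\|_{V^{-1}}^2)$ and $\min\{1,u\}\le 2\log(1+u)$), telescopes from $V_0=\lambda I$, and bounds $\det(V_T)$ by AM--GM on the trace, which is the same eigenvalue argument the paper uses in \eqref{eigen} in the proof of \Cref{theo_2}.
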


\begin{lemma}[Covering-number bound for optimistic value functions, Lemma D.6 in \cite{jin2020provably}]\label{lem:covering_number_in_jin}
Let $\mathcal V$ denote a class of functions of the form
\[
V(\cdot)=\min\Bigl\{\max_{a\in\mathcal A}\bigl[w^\top\phi(\cdot,a)+\beta\sqrt{\phi(\cdot,a)^\top\Lambda^{-1}\phi(\cdot,a)}\bigr],H\Bigr\},
\]
where the parameters satisfy $\|w\|_2\le L$, $\beta\in[0,B]$, and $\lambda_{\min}(\Lambda)\ge \lambda$.
Assume $\|\phi(x,a)\|_2\le 1$ for all $(x,a)$.
Let $\mathcal N_\epsilon$ be the $\epsilon$-covering number of $\mathcal V$ under
$\mathrm{dist}(V,V')=\sup_x|V(x)-V'(x)|$.
Then
\[
\log \mathcal N_\epsilon
\le d\log\Bigl(1+\frac{4L}{\epsilon}\Bigr)
+d^2\log\Bigl(1+\frac{8\sqrt d\,B^2}{\lambda\epsilon^2}\Bigr).
\]
\end{lemma}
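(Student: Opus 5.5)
The plan is to reparametrize the class so that it depends only on a vector and a matrix that both lie in bounded Euclidean balls. We then show that the map from parameters to functions is Lipschitz in a suitable sense, and finish with the standard volumetric covering bound for balls.

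\textbf{Step 1 (reparametrization).} Write each $V\in\mathcal V$ as
\[
V(\cdot)=\min\Bigl\{\max_{a}\bigl[w^\top\phi(\cdot,a)+\sqrt{\phi(\cdot,a)^\top A\,\phi(\cdot,a)}\bigr],H\Bigr\},\qquad A\coloneqq\beta^2\Lambda^{-1}.
\]
Since $\lambda_{\min}(\Lambda)\ge\lambda$, we have $\|\Lambda^{-1}\|_{\mathrm{op}}\le 1/\lambda$. Hence $\|A\|_F\le \sqrt d\,\|A\|_{\mathrm{op}}\le \sqrt d\,B^2/\lambda$. So $\mathcal V$ is contained in the image of $\{\|w\|_2\le L\}\times\{A\succeq 0:\|A\|_F\le \sqrt d B^2/\lambda\}$.

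\textbf{Step 2 (Lipschitz control).} Take two parameter pairs $(w_1,A_1)$ and $(w_2,A_2)$. The maps $z\mapsto\min\{z,H\}$ and $\max_a$ are both $1$-Lipschitz in sup-norm, so it suffices to bound the difference uniformly over $\|\phi\|_2\le1$. We use
\[
|(w_1-w_2)^\top\phi|\le\|w_1-w_2\|_2
\]
together with the elementary inequality $|\sqrt u-\sqrt v|\le\sqrt{|u-v|}$ for $u,v\ge0$. This gives
\[
\bigl|\sqrt{\phi^\top A_1\phi}-\sqrt{\phi^\top A_2\phi}\bigr|\le\sqrt{|\phi^\top(A_1-A_2)\phi|}\le\sqrt{\|A_1-A_2\|_F}.
\]
Therefore
\[
\mathrm{dist}(V_1,V_2)\le\|w_1-w_2\|_2+\sqrt{\|A_1-A_2\|_F}.
\]

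\textbf{Step 3 (covering).} Let $\mathcal C_w$ be an $\epsilon/2$-net of the $L$-ball in $\mathbb R^d$. By the standard volumetric bound, $|\mathcal C_w|\le(1+4L/\epsilon)^d$. Let $\mathcal C_A$ be an $\epsilon^2/4$-net, in Frobenius norm, of the Frobenius ball of radius $\sqrt dB^2/\lambda$ in $\mathbb R^{d\times d}\cong\mathbb R^{d^2}$. Then $|\mathcal C_A|\le(1+8\sqrt dB^2/(\lambda\epsilon^2))^{d^2}$. The net points need not be PSD. To handle this, either cover the PSD subset with an internal net at the cost of a factor of $2$ in the radius, or note that the Step 2 bound only needs $\phi^\top A\phi\ge0$ for the points actually used. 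The cleanest route is to take an external net of the ball and then pick one class member within each relevant cell. That is the only slightly delicate point, and it is handled by the same constant budget already used in the displayed bound.

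By Step 2, every $V\in\mathcal V$ lies within $\epsilon/2+\sqrt{\epsilon^2/4}=\epsilon$ of some function indexed by $\mathcal C_w\times\mathcal C_A$. Taking logarithms of $|\mathcal C_w|\cdot|\mathcal C_A|$ yields the claimed bound.
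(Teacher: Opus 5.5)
Your Steps 1--3 reproduce the proof of Lemma D.6 in \cite{jin2020provably}, which the paper cites rather than proves. The ingredients match: the reparametrization $A=\beta^2\Lambda^{-1}$ with $\|A\|_F\le\sqrt d\,B^2/\lambda$, the $1$-Lipschitz property of $\min\{\cdot,H\}$ and $\max_a$, the inequality $|\sqrt u-\sqrt v|\le\sqrt{|u-v|}$, and volumetric nets of radii $\epsilon/2$ and $\epsilon^2/4$. You are right that non-PSD net matrices are a real issue, and the cited proof passes over it. However, your treatment of it is not correct as written. The route you call cleanest takes an external $\epsilon^2/4$-net of the ball and then chooses one class member in each cell. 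By the triangle inequality through the net point, this only gives $\|A_1-A_2\|_F\le\epsilon^2/2$, hence $\mathrm{dist}(V_1,V_2)\le\epsilon/2+\epsilon/\sqrt2>\epsilon$. Staying within $\epsilon$ would force radius $\epsilon^2/8$ and a $16$ in place of the $8$ in the second logarithm. So the claim that this is ``handled by the same constant budget'' does not follow. Likewise, the remark that Step 2 ``only needs $\phi^\top A\phi\ge 0$ for the points actually used'' does not guarantee this for an arbitrary net point.

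The repair costs nothing, and the factor of $2$ you attach to internal nets is also unnecessary. Take $\mathcal C_A$ to be a maximal $\epsilon^2/4$-separated subset of the class matrices $\{\beta^2\Lambda^{-1}:\beta\in[0,B],\ \lambda_{\min}(\Lambda)\ge\lambda\}$ themselves. Maximality makes it an $\epsilon^2/4$-cover. Its elements are PSD (indeed class members), so every indexed function is well defined. The open Frobenius balls of radius $\epsilon^2/8$ around its points are disjoint and lie in the ball of radius $\sqrt d\,B^2/\lambda+\epsilon^2/8$ in $\mathbb R^{d^2}$, which gives exactly $|\mathcal C_A|\le\bigl(1+8\sqrt d\,B^2/(\lambda\epsilon^2)\bigr)^{d^2}$. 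Alternatively, keep any external net but define the net functions with $\sqrt{\max\{\phi^\top A\phi,0\}}$, using $|\sqrt{\max\{u,0\}}-\sqrt{\max\{v,0\}}|\le\sqrt{|u-v|}$ for all real $u,v$. With either change, your final count $\epsilon/2+\sqrt{\epsilon^2/4}=\epsilon$ and the displayed bound go through unchanged.
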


\begin{lemma}[Covering-number bound for optimistic value functions]\label{lem:covering_number}
Let $\mathcal V$ denote a class of functions of the form
\[
V(\cdot)=\Pi_{[0,H]}\Bigl(\max_{a\in\mathcal A}\bigl[w^\top\phi(\cdot,a)+\beta\sqrt{\phi(\cdot,a)^\top\Lambda^{-1}\phi(\cdot,a)}\bigr]\Bigr),
\]
where the parameters satisfy $\|w\|_2\le L$, $\beta\in[0,B]$, and $\lambda_{\min}(\Lambda)\ge \lambda$.
Assume $\|\phi(x,a)\|_2\le 1$ for all $(x,a)$.
Let $\mathcal N_\epsilon$ be the $\epsilon$-covering number of $\mathcal V$ under
$\mathrm{dist}(V,V')=\sup_x|V(x)-V'(x)|$.
Then
\[
\log \mathcal N_\epsilon
\le d\log\Bigl(1+\frac{4L}{\epsilon}\Bigr)
+d^2\log\Bigl(1+\frac{8\sqrt d\,B^2}{\lambda\epsilon^2}\Bigr).
\]
\end{lemma}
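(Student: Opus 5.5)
The statement differs from \Cref{lem:covering_number_in_jin} only in the outer truncation: $\Pi_{[0,H]}(z)=\min\{\max\{z,0\},H\}$ replaces $\min\{z,H\}$. I would therefore reduce to that lemma, using the fact that a $1$-Lipschitz post-processing cannot increase covering numbers under the sup distance.

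Define the map $\Psi$ on functions by $\Psi(V)=\max\{V,0\}$, applied pointwise. Let $\mathcal V_{\mathrm{Jin}}$ be the class in \Cref{lem:covering_number_in_jin} with the same constraints $\|w\|_2\le L$, $\beta\in[0,B]$, $\lambda_{\min}(\Lambda)\ge\lambda$. Every element of the present class $\mathcal V$ has the form $\Psi(V)$ for some $V\in\mathcal V_{\mathrm{Jin}}$ with the same parameters. To see this, write $g=\max_a[w^\top\phi+\beta\|\phi\|_{\Lambda^{-1}}]$ and check pointwise that $\min\{\max\{g,0\},H\}=\max\{\min\{g,H\},0\}$. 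This identity holds because $0\le H$: the truncations to $(-\infty,H]$ and $[0,\infty)$ commute, and both sides equal the clip of $g$ to $[0,H]$ (consider the three cases $g<0$, $0\le g\le H$, $g>H$). So $\mathcal V=\Psi(\mathcal V_{\mathrm{Jin}})$.

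Next, the map $z\mapsto\max\{z,0\}$ is $1$-Lipschitz on $\mathbb R$. Hence $\sup_x|\Psi(V)(x)-\Psi(V')(x)|\le\sup_x|V(x)-V'(x)|$. Take an $\epsilon$-cover $\{V_1,\dots,V_N\}$ of $\mathcal V_{\mathrm{Jin}}$ attaining $N=\mathcal N_\epsilon(\mathcal V_{\mathrm{Jin}})$. Then $\{\Psi(V_i)\}$ is an $\epsilon$-cover of $\mathcal V$: for any $\Psi(V)\in\mathcal V$, pick $V_i$ with $\sup_x|V-V_i|\le\epsilon$, and the Lipschitz property gives $\sup_x|\Psi(V)-\Psi(V_i)|\le\epsilon$. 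Therefore $\mathcal N_\epsilon(\mathcal V)\le\mathcal N_\epsilon(\mathcal V_{\mathrm{Jin}})$, and the claimed bound follows directly from \Cref{lem:covering_number_in_jin}.

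The only step needing care is the commutation identity, which is a three-case check. If one wanted to avoid relying on the exact form of Jin et al.'s class, the fallback is to redo their argument: bound the difference of two members by $\|w-w'\|_2+\beta\sqrt{\|\Lambda^{-1}-\Lambda'^{-1}\|_F}$ using the $1$-Lipschitzness of clipping and of the max over actions. Then cover the ball $\{\|w\|_2\le L\}$ at scale $\epsilon/2$ and the set $\{\beta^2\Lambda^{-1}\}$, which has Frobenius norm at most $\sqrt d\,B^2/\lambda$, at scale $\epsilon^2/4$. This reproduces the same two terms $d\log(1+4L/\epsilon)$ and $d^2\log(1+8\sqrt d B^2/(\lambda\epsilon^2))$.
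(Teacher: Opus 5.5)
Your proposal is correct and matches the paper's proof. Both write $\mathcal V$ as the image of the class in \Cref{lem:covering_number_in_jin} under the pointwise map $z\mapsto\max\{z,0\}$, then use its $1$-Lipschitzness to push an $\epsilon$-net forward, giving $\mathcal N_\epsilon(\mathcal V)\le\mathcal N_\epsilon(\mathcal V_{\mathrm{Jin}})$; your explicit check that $\min\{\max\{g,0\},H\}=\max\{\min\{g,H\},0\}$ simply makes precise a step the paper uses implicitly, and the fallback re-derivation is not needed.
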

\begin{proof}
Let $\bar{\mathcal V}$ denote the value-function class before the additional lower clipping, namely
\[
\bar V(\cdot)
=
\min\left\{
\max_{a\in\mathcal A}
\left(
w^\top \phi(\cdot,a)
+
\beta\sqrt{\phi(\cdot,a)^\top\Lambda^{-1}\phi(\cdot,a)}
\right),
H
\right\}.
\]
After adding the projection onto $[0,H]$, the new value-function class can be written as
\[
\mathcal V
=
\left\{
V(\cdot)=[\bar V(\cdot)]_+:\bar V\in\bar{\mathcal V}
\right\},
\]
where $[z]_+=\max\{z,0\}$. Since the map $z\mapsto [z]_+$ is $1$-Lipschitz, for any
$\bar V,\bar V'\in\bar{\mathcal V}$,
\[
\|[\bar V]_+-[\bar V']_+\|_\infty
\le
\|\bar V-\bar V'\|_\infty .
\]
Therefore, if $\{\bar V_i\}_{i=1}^{N_\epsilon}$ is an $\epsilon$-net of $\bar{\mathcal V}$ under the sup norm, then
$\{[\bar V_i]_+\}_{i=1}^{N_\epsilon}$ is an $\epsilon$-net of $\mathcal V$. Hence
\[
N_\epsilon(\mathcal V)
\le
N_\epsilon(\bar{\mathcal V}).
\]
Applying \Cref{lem:covering_number_in_jin} gives the same bound for the projected class $\mathcal V$.
\end{proof}
\begin{lemma}[Uniform concentration for the data-dependent target $V_{h+1}^k$]\label{lem:Vk_self_normalized}
Fix any confidence level $p\in(0,1)$ and $\lambda =1 $, and set $\beta = c_\beta Hd\sqrt{\log(1+dMHT/p)}$, where $c_\beta$ is a sufficiently large positive constant.
Define event $\mathcal C$ that simultaneously for all rounds $k\in[K]$ and steps $h\in[H]$,
\[
\Bigg\|\sum_{m=1}^M\sum_{j=1}^{\tau_k-1}\phi(x_h^{m,j},a_h^{m,j})\Bigl[V_{h+1}^{k}(x_{h+1}^{m,j})-\mathbb P_hV_{h+1}^{k}(x_h^{m,j},a_h^{m,j})\Bigr]\Bigg\|_{(\check\Lambda_h^{\tau_k})^{-1}}^2
\le CH^2d^2\chi.
\]
where $\chi = \log(1+(c_\beta+1)dMHT/p)$.
Then, $\mathbb P(\mathcal C) \geq 1-p$.
\end{lemma}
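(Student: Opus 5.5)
We follow the covering argument of Lemma B.3 in \cite{jin2020provably}, adapted to the round structure. Since $V_{h+1}^k$ depends on the same data that drive the martingale differences, Lemma~\ref{lem:self-normalized-uniform} cannot be applied to it directly. We therefore first show that $V_{h+1}^k$ lies in a fixed, data-independent class, and then take a union bound over an $\epsilon$-net of that class.

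\textbf{Step 1 (the function class).} By \eqref{eq:QV}, every $V_{h+1}^k$ has the form $\Pi_{[0,H]}(\max_a[w^\top\phi(\cdot,a)+\beta\|\phi(\cdot,a)\|_{\Lambda^{-1}}])$. Here $w=w_{h+1}^k$, $\Lambda=\Lambda_{h+1}^k\succeq\lambda I_d=I_d$, and the bonus scale $\beta$ is fixed. Lemma~\ref{lem:1} gives $\|w_{h+1}^k\|_2\le L\coloneqq 2H\sqrt{dMT}$, using $\tau_k\le T$ and $\lambda=1$. Hence $V_{h+1}^k\in\mathcal V$ for the class of Lemma~\ref{lem:covering_number} with parameters $L$ and $B=\beta$. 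That lemma bounds the log-covering number by
\[
\log\mathcal N_\epsilon\le d\log(1+4L/\epsilon)+d^2\log(1+8\sqrt d\beta^2/\epsilon^2).
\]

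\textbf{Step 2 (union bound over the net).} Fix $h$ and an $\epsilon$-net $\{V^{(i)}\}$ of $\mathcal V$. Apply Lemma~\ref{lem:self-normalized-uniform} to each $V^{(i)}$ with confidence level $p/(H\mathcal N_\epsilon)$, and union bound over $i$ and $h$. The lemma holds uniformly over all $t\in\{1,\dots,T+1\}$, and in particular at $t=\tau_k$ for every $k$. This matters because the round boundaries $\tau_k$ are random stopping times, and no separate union bound over $k$ is needed. With probability at least $1-p$ we obtain, for all $i,h,k$,
\[
\Big\|\sum\phi\,[V^{(i)}-\mathbb P_hV^{(i)}]\Big\|^2_{(\check\Lambda_h^{\tau_k})^{-1}}\le 4H^2\Big(\tfrac d2\log(1+MT/d)+2\log(H\mathcal N_\epsilon/p)\Big).
\]

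\textbf{Step 3 (approximation and parameter choice).} Choose $i$ with $\|V_{h+1}^k-V^{(i)}\|_\infty\le\epsilon$ and write $\Delta=V_{h+1}^k-V^{(i)}$. The residual term $\sum\phi_j[\Delta(x')-\mathbb P_h\Delta]$ has scalar weights bounded by $2\epsilon$. By the same Cauchy--Schwarz and trace argument as in Lemma~\ref{lem:1}, its squared $(\check\Lambda_h^{\tau_k})^{-1}$-norm is at most $4\epsilon^2 MTd$. Combining this with Step 2 via $\|a+b\|^2\le2\|a\|^2+2\|b\|^2$ and taking $\epsilon=dH/(MT)$ makes the residual contribution $O(d^3H^2/(MT))$, which is negligible. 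It remains to bound $\log\mathcal N_\epsilon$ with $\epsilon=dH/(MT)$, $L=2H\sqrt{dMT}$ and $\beta=c_\beta Hd\sqrt{\log(1+dMHT/p)}$. This gives $\log\mathcal N_\epsilon\lesssim d^2\log\big(1+(c_\beta+1)dMHT/p\big)$ up to an additive $d^2\log\log$ term, since $\beta^2/\epsilon^2$ involves $c_\beta^2$, $M^2T^2$ and $\log(1/p)$ only polynomially. Together with the $d\log(1+MT/d)$ and $\log(H/p)$ terms, the total is at most $CH^2d^2\chi$ for a constant $C$.

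The main obstacle is the bookkeeping in this last step. The constant $C$ must not depend on $c_\beta$, because $\beta$ is later chosen so that $\beta^2\ge CH^2d^2\chi$, which is the standard circularity in LSVI-UCB. The dependence on $c_\beta$ is polynomial inside the logarithm, so it is absorbed into $\chi=\log(1+(c_\beta+1)dMHT/p)$, up to a constant factor and the $\log\log$ term. This is exactly why $\chi$ contains $c_\beta$. A secondary point needing care is measurability: Lemma~\ref{lem:self-normalized-uniform} is stated over the ordering $\ell(m,t)$, while $\check\Lambda_h^{\tau_k}$ only sums over complete episodes. We will check that evaluating the uniform-in-time bound at episode boundaries is legitimate.
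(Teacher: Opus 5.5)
Your proposal is correct and follows essentially the same route as the paper's proof: you place $V_{h+1}^k$ in the class of Lemma~\ref{lem:covering_number} with $L=2H\sqrt{dMT}$ via Lemma~\ref{lem:1}, union-bound Lemma~\ref{lem:self-normalized-uniform} over an $\epsilon$-net and over $h$ (its uniformity in $t$ covers the random $\tau_k$), add the net-approximation error, and take $\epsilon=dH/(MT)$ so that everything is absorbed into $CH^2d^2\chi$ with $C$ independent of $c_\beta$. One small bookkeeping fix: your trace-based residual bound $4\epsilon^2MTd$ becomes $4d^3H^2/(MT)$, which is $O(H^2d^2)$ only when $MT\ge d$, whereas the paper's cruder bound $\|\cdot\|^2_{(\check\Lambda_h^{\tau_k})^{-1}}\le\|\cdot\|_2^2/\lambda\le 4M^2T^2\epsilon^2=4H^2d^2$ works in every regime.
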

\begin{proof}[Proof of \texorpdfstring{\Cref{lem:Vk_self_normalized}}{Lemma B.8}]
The challenge is that $V_{h+1}^k$ is data dependent.
We therefore discretize the relevant function class and then transfer the bound from the net to the whole class.

By construction, every $V_{h+1}^k$ belongs to the class $\mathcal V$ in Lemma~\ref{lem:covering_number}.
Moreover, Lemma~\ref{lem:1} implies that the regression parameter can be taken in a ball of radius
$L=2H\sqrt{dMT/\lambda}$, because $\tau_k\le T$ for all $k$.

Let $\{V^{(1)},\ldots,V^{(\mathcal N_\epsilon)}\}$ be an $\epsilon$-net of $\mathcal V$ under the sup norm.
Applying Lemma~\ref{lem:self-normalized-uniform} to each $V^{(i)}$ and taking a union bound over $i$ and $h$ shows that, with probability at least $1-p$, for every $h\in[H]$, every $k\in[K]$, and every net point $V^{(i)}$,
\[
\|S_{h,k}^{(i)}\|_{(\check\Lambda_h^{\tau_k})^{-1}}^2
\le
2H^2
d\log\Bigl(1+\frac{MT}{d\lambda}\Bigr)
+
8H^2\log\Bigl(\frac{H\mathcal N_\epsilon}{p}\Bigr)
.
\]
Here, 
\[
S_{h,k}^{(i)}
:=
\sum_{m=1}^M\sum_{j=1}^{\tau_k-1}
\phi(x_h^{m,j},a_h^{m,j})
\Bigl[
V^{(i)}(x_{h+1}^{m,j})
-\mathbb P_hV^{(i)}(x_h^{m,j},a_h^{m,j})
\Bigr].
\]
Fix any $V\in\mathcal V$ and choose $V^{(i)}$ in the net such that $\|V-V^{(i)}\|_\infty\le \epsilon$.
For every sample,
\[
\Big|\bigl(V-V^{(i)}\bigr)(x_{h+1}^{m,j})-\mathbb P_h\bigl(V-V^{(i)}\bigr)(x_h^{m,j},a_h^{m,j})\Big|\le 2\epsilon.
\]
Therefore,
\[
\Bigg\|\sum_{m=1}^M\sum_{j=1}^{\tau_k-1}\phi(x_h^{m,j},a_h^{m,j})\Bigl[(V-V^{(i)})(x_{h+1}^{m,j})-\mathbb P_h(V-V^{(i)})(x_h^{m,j},a_h^{m,j})\Bigr]\Bigg\|_{(\check\Lambda_h^{\tau_k})^{-1}}^2
\le \frac{4M^2T^2\epsilon^2}{\lambda},
\]
where we used $\check\Lambda_h^k\succeq \lambda I_d$ and $\|\phi(x_h^{m,j},a_h^{m,j})\|_2\le 1$.
Combining the net bound with this approximation error, we have
\[
\Bigg\|\sum_{m=1}^M\sum_{j=1}^{\tau_k-1}\phi(x_h^{m,j},a_h^{m,j})\Bigl[V_{h+1}^{k}(x_{h+1}^{m,j})-\mathbb P_hV_{h+1}^{k}(x_h^{m,j},a_h^{m,j})\Bigr]\Bigg\|_{(\check\Lambda_h^{\tau_k})^{-1}}^2
\le \Gamma_p.
\]
where
\[
\Gamma_p
\coloneqq
8H^2 \Bigl(\frac{d}{2}\log\Bigl(1+\frac{MT}{d\lambda}\Bigr)+2\log\Bigl(\frac{H}{p}\Bigr)+2\log \mathcal N_\epsilon \Bigr)
+\frac{8M^2T^2\epsilon^2}{\lambda},
\]
and $\mathcal N_\epsilon$ is the covering number of the value-function class from Lemma~\ref{lem:covering_number}.
Let $\lambda = 1$, $\epsilon = Hd/MT$, $\beta = c_\beta Hd\sqrt{\log (1+dMHT/p)}$. The $\Gamma_p$ becomes
\begin{equation}
\begin{aligned}
\Gamma_p
= {8}& H^2 \Biggl(
\frac{d}{2}\log\Bigl(1+\frac{MT}{d}\Bigr)
+2\log\Bigl(\frac{H}{p}\Bigr) 
+2d\log\Bigl(1+\frac{8(MT)^{3/2}}{\sqrt d}\Bigr) \\
&\qquad
+2d^2\log\Bigl(1+8c_\beta\sqrt{d}M^2T^2 \log(1+dMHT/p)\Bigr)
\Biggr)
+8H^2d^2 .
\end{aligned}
\label{eq:Gamma-p}
\end{equation}
Thus, we have:
\[\Gamma_p \leq CH^2d^2\chi,\] where $\chi = \log(1+(c_\beta+1)dMHT/p)$ and $C$ is an absolute constant independent of $c_\beta$.
\end{proof}

\section{Proof of \texorpdfstring{\Cref{theo_1}}{Theorem 1}} \label{app:regret_analysis}
\subsection{Supporting Lemmas}
\label{susbsec_support}
For the trajectory generated by agent $m$ in episode $t$, define
\begin{align}
\delta_h^{m,t}
&\coloneqq V_h^{k(t)}(x_h^{m,t})-V_h^{\pi^{k(t)}}(x_h^{m,t}),\nonumber\\
\mathcal G_{m,t,h}
&\coloneqq \sigma\!\Big(
\{(x_\ell^{n,j},a_\ell^{n,j})\}_{(n,j)<(m,t),\,\ell\in[H]}
\cup
\{(x_\ell^{m,t},a_\ell^{m,t})\}_{\ell\le h}
\Big),\nonumber\\
\zeta_{h+1}^{m,t}
&\coloneqq \mathbb E\big[\delta_{h+1}^{m,t}\mid \mathcal G_{m,t,h}\big]-\delta_{h+1}^{m,t}.\nonumber
\end{align}
Equivalently,
\[
\zeta_{h+1}^{m,t}=\mathbb P_h\big(V_{h+1}^{k(t)}-V_{h+1}^{\pi^{k(t)}}\big)(x_h^{m,t},a_h^{m,t})-\big(V_{h+1}^{k(t)}-V_{h+1}^{\pi^{k(t)}}\big)(x_{h+1}^{m,t}).
\]

\begin{lemma}[Uniform confidence decomposition for the regression error]\label{lem:2}
Set $\lambda=  1$. There exists an absolute constant $c_\beta>0$ such that for any confidence level $p\in(0,1)$, if we take $\beta = c_\beta Hd\sqrt{\log (1+dMHT/p)}$, then conditioned on the event $\mathcal C$ defined in~\Cref{lem:Vk_self_normalized}, the following statement holds simultaneously for all rounds $k\in[K]$, steps $h\in[H]$, and state-action pairs $(x,a) \in \mathcal{X} \times \mathcal{A}$,
\[
\bigl|\phi(x,a)^\top w_h^k-Q_h^\pi(x,a)-\mathbb P_h\big(V_{h+1}^{k}-V_{h+1}^\pi\big)(x,a)\bigr|
\le
\beta\,\|\phi(x,a)\|_{(\Lambda_h^k)^{-1}},
\]
\end{lemma}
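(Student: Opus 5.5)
Our approach follows the LSVI-UCB confidence argument of \citet{jin2020provably}, carried out round by round. Write $\phi_j^m\coloneqq\phi(x_h^{m,j},a_h^{m,j})$. Since $\Lambda_h^k$ contains exactly the samples from episodes $1,\dots,\tau_k-1$, we have $\Lambda_h^k=\check\Lambda_h^{\tau_k}$. Fix any policy $\pi$ and use \Cref{lem:wpi_bound} to write $Q_h^\pi(x,a)=\phi(x,a)^\top w_h^\pi$ with $\|w_h^\pi\|_2\le 2H\sqrt d$. We then use $w_h^\pi=(\Lambda_h^k)^{-1}\bigl(\lambda w_h^\pi+\sum_{m,j}\phi_j^m (\phi_j^m)^\top w_h^\pi\bigr)$ and $(\phi_j^m)^\top w_h^\pi=r_h+\mathbb P_hV_{h+1}^\pi$ evaluated at $(x_h^{m,j},a_h^{m,j})$. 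Subtracting this from the closed form of $w_h^k$ in \Cref{app:notation} gives the decomposition
\[
w_h^k-w_h^\pi=-\lambda(\Lambda_h^k)^{-1}w_h^\pi+(\Lambda_h^k)^{-1}\sum_{m,j}\phi_j^m\bigl[V_{h+1}^k(x_{h+1}^{m,j})-\mathbb P_hV_{h+1}^k(x_h^{m,j},a_h^{m,j})\bigr]+(\Lambda_h^k)^{-1}\sum_{m,j}\phi_j^m\,\mathbb P_h(V_{h+1}^k-V_{h+1}^\pi)(x_h^{m,j},a_h^{m,j}).
\]
The rewards cancel because the regression uses the deterministic reward $r_h$.

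We take the inner product with $\phi(x,a)$ and bound the three terms separately.

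\textbf{First term.} Cauchy--Schwarz with $\Lambda_h^k\succeq\lambda I$ gives the bound $\lambda\|w_h^\pi\|_{(\Lambda_h^k)^{-1}}\|\phi\|_{(\Lambda_h^k)^{-1}}\le 2H\sqrt{d\lambda}\,\|\phi\|_{(\Lambda_h^k)^{-1}}$.

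\textbf{Second term.} By Cauchy--Schwarz it is at most $\|\phi\|_{(\Lambda_h^k)^{-1}}$ times the self-normalized norm. On the event $\mathcal C$ of \Cref{lem:Vk_self_normalized} that norm is at most $\sqrt{C}\,Hd\sqrt\chi$.

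\textbf{Third term.} The linear MDP structure gives $\mathbb P_h(V_{h+1}^k-V_{h+1}^\pi)(x,a)=\phi(x,a)^\top\tilde w$ with $\tilde w=\int(V_{h+1}^k-V_{h+1}^\pi)\,d\mu_h$, so $\|\tilde w\|_2\le H\sqrt d$. Hence the third term equals
\[
\phi^\top(\Lambda_h^k)^{-1}(\Lambda_h^k-\lambda I)\tilde w=\mathbb P_h(V_{h+1}^k-V_{h+1}^\pi)(x,a)-\lambda\phi^\top(\Lambda_h^k)^{-1}\tilde w.
\]
The first part is exactly the target quantity. The remainder is at most $H\sqrt{d\lambda}\,\|\phi\|_{(\Lambda_h^k)^{-1}}$.

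Summing the three bounds gives a total deviation of at most $\bigl(3H\sqrt d+\sqrt C\,Hd\sqrt\chi\bigr)\|\phi(x,a)\|_{(\Lambda_h^k)^{-1}}$. It remains to check that this is at most $\beta=c_\beta Hd\sqrt{\log(1+dMHT/p)}$.

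\textbf{Main obstacle: the choice of $c_\beta$.} The quantity $\chi=\log(1+(c_\beta+1)dMHT/p)$ itself depends on $c_\beta$, so the requirement is self-referential. We need
\[
\sqrt C\sqrt{\log(1+(c_\beta+1)dMHT/p)}+3\le c_\beta\sqrt{\log(1+dMHT/p)}.
\]
Write $\iota=\log(1+dMHT/p)$. Then $\log(1+(c_\beta+1)dMHT/p)\le\log(c_\beta+1)+\iota\le(1+\log(c_\beta+1)/\log 2)\,\iota$, using $\iota\ge\log 2$. The left side therefore grows only like $\sqrt{\log c_\beta}\cdot\sqrt\iota$, while the right side grows linearly in $c_\beta$. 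Because $C$ is absolute and independent of $c_\beta$ (as noted in \Cref{lem:Vk_self_normalized}), a large enough absolute $c_\beta$ satisfies the inequality.

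The result holds uniformly over all $k$, $h$ and $(x,a)$. This is because $\mathcal C$ already covers all rounds and steps, and the remaining steps are deterministic.
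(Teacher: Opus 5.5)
Your proof is correct and follows the paper's argument essentially step for step. You use the same three-term decomposition of $w_h^k-w_h^\pi$, control the martingale term via the event $\mathcal C$ (with $\Lambda_h^k=\check\Lambda_h^{\tau_k}$), and use linearity of $\mathbb P_h$ to peel off the target quantity. The differences are cosmetic: you bound the two $\lambda$-remainders separately, getting $3H\sqrt d$ where the paper gets $2H\sqrt d$ by combining them into $\theta_h+\int V_{h+1}^k(s')\,\mu_h(\mathrm d s')$; you correctly write $\sqrt C$ where the paper writes $C$; and you justify more explicitly why an absolute $c_\beta$ resolves the dependence of $\chi$ on $c_\beta$.
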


\begin{lemma}\label{lem:3}
 Conditioned on the event $\mathcal C$ defined in~\Cref{lem:Vk_self_normalized}, for every round $k\in[K]$, step $h\in[H]$ and state-action pair $(x,a) \in \mathcal{X} \times \mathcal A$, the optimistic estimates upper bound the corresponding value functions:
\[
Q_h^k(x,a)\ge Q_h^*(x,a),\quad  V_h^k(x)\ge V_h^*(x).
\]
\end{lemma}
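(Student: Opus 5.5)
We argue by backward induction on the step $h$, from $h=H+1$ down to $h=1$, for each fixed round $k\in[K]$, working on the event $\mathcal C$ throughout. The base case is immediate: $V_{H+1}^k\equiv V_{H+1}^\star\equiv 0$ by convention. The main tool is \Cref{lem:2}, applied with $\pi=\pi^\star$.

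For the inductive step, assume $V_{h+1}^k(x')\ge V_{h+1}^\star(x')$ for all $x'\in\mathcal X$. Apply \Cref{lem:2} with $\pi=\pi^\star$, using $Q_h^{\pi^\star}=Q_h^\star$ and $V_{h+1}^{\pi^\star}=V_{h+1}^\star$. This gives, for every $(x,a)$,
\[
\phi(x,a)^\top w_h^k+\beta\,\|\phi(x,a)\|_{(\Lambda_h^k)^{-1}}
\;\ge\;
Q_h^\star(x,a)+\mathbb P_h\bigl(V_{h+1}^k-V_{h+1}^\star\bigr)(x,a).
\]
By the induction hypothesis, $V_{h+1}^k-V_{h+1}^\star\ge 0$ pointwise. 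Since $\mathbb P_h(\cdot\mid x,a)$ is a genuine probability measure (not merely the signed-measure representation), the operator $\mathbb P_h$ is monotone. Hence the last term is nonnegative, and the unclipped optimistic estimate dominates $Q_h^\star(x,a)$.

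Next we handle the clipping $\Pi_{[0,H]}$ in \eqref{eq:QV}. Rewards lie in $[0,1]$ and the horizon is $H$, so $0\le Q_h^\star(x,a)\le H-h+1\le H$. The map $z\mapsto\Pi_{[0,H]}(z)$ is monotone and fixes every point of $[0,H]$. Therefore
\[
Q_h^k(x,a)=\Pi_{[0,H]}(\cdot)\ge \Pi_{[0,H]}\bigl(Q_h^\star(x,a)\bigr)=Q_h^\star(x,a).
\]
Taking the maximum over $a$ then gives $V_h^k(x)=\max_a Q_h^k(x,a)\ge \max_a Q_h^\star(x,a)=V_h^\star(x)$, which closes the induction.

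The argument is essentially routine once \Cref{lem:2} is in hand; the only points needing care are the following.

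The regression model used in \Cref{lem:2} for round $k$ must be exactly the one that defines $Q_h^k$. In other words, $w_h^k$ has to be built from $V_{h+1}^k$, the freshly updated next-step estimate. This is precisely what the stepwise backward synchronization guarantees. We should also confirm that the event $\mathcal C$ covers all $(k,h)$ simultaneously, so the induction runs uniformly over rounds.

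The induction must be carried out separately for each round $k$, including $k=1$. There $w_h^1=0$ and $\Lambda_h^1=\lambda I_d$, and the claim follows from the same inequality.
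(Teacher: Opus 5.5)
Your proposal is correct and follows the paper's proof essentially step for step. The shared argument is backward induction on $h$, applying \Cref{lem:2} with $\pi=\pi^\star$, using the induction hypothesis and monotonicity of $\mathbb P_h$ to discard the $\mathbb P_h\bigl(V_{h+1}^k-V_{h+1}^\star\bigr)$ term, then passing through $\Pi_{[0,H]}$ and maximizing over actions. Starting the induction at $h=H+1$ instead of $h=H$, and stating explicitly why the clipping preserves the inequality ($\Pi_{[0,H]}$ is monotone and $Q_h^\star\in[0,H]$), are only cosmetic refinements of the paper's argument.
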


\begin{lemma}\label{lem:4}
For every agent $m\in[M]$, episode $t\in[T]$, and step $h\in[H]$,
\[
\delta_h^{m,t}
\le
\delta_{h+1}^{m,t}+\zeta_{h+1}^{m,t}+2\beta\,\|\phi(x_h^{m,t},a_h^{m,t})\|_{(\Lambda_h^{k(t)})^{-1}}.
\]
\end{lemma}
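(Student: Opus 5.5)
The plan is to prove \Cref{lem:4} by the standard LSVI-UCB one-step recursion (Lemma B.5 of \cite{jin2020provably}), adapted to the round-based notation. Fix $m,t,h$ and write $k=k(t)$, $\pi=\pi^k$, $(x,a)=(x_h^{m,t},a_h^{m,t})$. The argument relies on two facts. First, the agent acts greedily with respect to $Q_h^k$, so $a=\arg\max_{a'}Q_h^k(x,a')$ and $\pi_h(x)=a$ is deterministic. Hence
\[
V_h^k(x)=Q_h^k(x,a), \qquad V_h^{\pi}(x)=Q_h^{\pi}(x,a),
\]
and therefore
\[
\delta_h^{m,t}=Q_h^k(x,a)-Q_h^\pi(x,a).
\]

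Second, I bound $Q_h^k(x,a)$ using its clipped form. Since $Q_h^\pi(x,a)\ge 0$, the lower clip at $0$ can only make the difference smaller, so it does no harm. If the upper clip at $H$ is active, then $Q_h^k(x,a)=H$. I would handle this by noting that $\Pi_{[0,H]}(z)\le \max\{z,0\}$, and in every case
\[
Q_h^k(x,a)-Q_h^\pi(x,a)\le \phi(x,a)^\top w_h^k+\beta\|\phi(x,a)\|_{(\Lambda_h^k)^{-1}}-Q_h^\pi(x,a).
\]
This holds whenever the unclipped value is nonnegative. When the unclipped value is negative, the left side is $-Q_h^\pi(x,a)\le 0$, and the right side must also be shown to be large enough. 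So I would instead split into two cases: if $Q_h^k(x,a)\le Q_h^\pi(x,a)$, the difference is nonpositive; otherwise $Q_h^k>0$ and is at most the unclipped value.

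The nonpositive case needs some care. The right-hand side of the target inequality is
\[
\delta_{h+1}^{m,t}+\zeta_{h+1}^{m,t}+2\beta\|\cdot\|=\mathbb P_h(V_{h+1}^k-V_{h+1}^\pi)(x,a)+2\beta\|\cdot\|.
\]
On the event $\mathcal C$, \Cref{lem:3} gives $V_{h+1}^k\ge V_{h+1}^*\ge V_{h+1}^\pi$, so this quantity is nonnegative and the inequality holds trivially. I would therefore state the lemma as holding on $\mathcal C$, which is the same conditioning used in the neighboring lemmas.

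In the main case, I apply \Cref{lem:2} to the unclipped term:
\[
\phi(x,a)^\top w_h^k-Q_h^\pi(x,a)\le \mathbb P_h(V_{h+1}^k-V_{h+1}^\pi)(x,a)+\beta\|\phi(x,a)\|_{(\Lambda_h^k)^{-1}}.
\]
This gives
\[
\delta_h^{m,t}\le \mathbb P_h(V_{h+1}^k-V_{h+1}^\pi)(x,a)+2\beta\|\phi(x,a)\|_{(\Lambda_h^k)^{-1}}.
\]
Finally, the definition of $\zeta_{h+1}^{m,t}$ says exactly that
\[
\mathbb P_h(V_{h+1}^k-V_{h+1}^\pi)(x,a)=\delta_{h+1}^{m,t}+\zeta_{h+1}^{m,t},
\]
using $\delta_{h+1}^{m,t}=(V_{h+1}^k-V_{h+1}^\pi)(x_{h+1}^{m,t})$. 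At $h=H$ all these terms are zero because $V_{H+1}\equiv0$.

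The main obstacle I expect is bookkeeping rather than mathematics. I must check that the policy used in episode $t$ is exactly the greedy policy of $Q^{k(t)}$, which is built from $(w_h^{k},\Lambda_h^{k})$ and stays fixed throughout round $k$, so that the bonus matrix is $\Lambda_h^{k(t)}$ and not the locally updated $\bar\Lambda_h^{m,t}$. Since the algorithm only updates $(w_h,\Lambda_h^{\mathrm{ser}})$ during synchronization, this holds. I also need to handle the clipping cases above.
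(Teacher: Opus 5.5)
Your proof is correct, and its skeleton matches the paper's. You use the greedy identity $\delta_h^{m,t}=Q_h^{k}(x,a)-Q_h^{\pi^k}(x,a)$ with $k=k(t)$. You apply \Cref{lem:2} to the unclipped score, using $\Pi_{[0,H]}(z)\le z$ for $z\ge 0$. You close with $\mathbb P_h(V_{h+1}^k-V_{h+1}^{\pi^k})(x,a)=\delta_{h+1}^{m,t}+\zeta_{h+1}^{m,t}$. Like the paper's proof, which also relies on \Cref{lem:2}, your argument is valid on $\mathcal C$.

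The one real difference is the degenerate case. You split on whether $Q_h^k\le Q_h^{\pi^k}$ and invoke optimism (\Cref{lem:3}) to make the right-hand side nonnegative. The paper instead splits on whether the lower clip is active. When it is, the paper bounds directly $\delta_h^{m,t}=-r_h(x,a)-\mathbb P_hV_{h+1}^{\pi^k}(x,a)\le \mathbb P_h(V_{h+1}^k-V_{h+1}^{\pi^k})(x,a)$, using only $r_h\ge 0$ and $V_{h+1}^k\ge 0$. Every non-clipped case, including $0<Q_h^k\le Q_h^{\pi^k}$, goes through \Cref{lem:2}.

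The paper's version buys two things. It needs nothing beyond \Cref{lem:2}. It also carries over verbatim to the misspecified recursion \Cref{lem:hetero_one_step}. Your version does not carry over: there \Cref{lem:hetero_biased_optimism} only gives $V_{h+1}^k\ge V_{m,h+1}^{\star}-H(H-h)\xi$. That bias is not absorbed by the additive $3H\xi$ slack, so your step showing the right-hand side is nonnegative would not close.

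Your aside that the lower clip ``can only make the difference smaller'' is backwards. Clipping $z<0$ up to $0$ increases $Q_h^k$. Your subsequent case split makes this harmless.
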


\begin{lemma}[Concentration of the martingale correction term]\label{lem:5}
For any confidence level $p\in(0,1)$, define the event $\mathcal E$ such that,
\[
\sum_{m=1}^M\sum_{t=1}^T\sum_{h=1}^H\zeta_{h+1}^{m,t}
\le
\sqrt{8H^3MT\log\Bigl(\frac{2}{p}\Bigr)},
\]
Then $\mathbb P(\mathcal E)\geq 1-p$.
\end{lemma}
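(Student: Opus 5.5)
We will apply the Azuma--Hoeffding inequality to the sequence $\{\zeta_{h+1}^{m,t}\}$, ordered lexicographically by $(\ell(m,t),h)$. That is, agent--episode pairs are ordered by the linearization $\ell(m,t)=m+M(t-1)$ of \Cref{app:notation}, and within each pair by the step $h=1,\dots,H$. This gives a single index $s=(\ell(m,t)-1)H+h$, running over $MTH$ terms. The natural filtration is $\mathcal F_s=\mathcal G_{m,t,h+1}$, the sigma-field generated by all trajectories of earlier agent--episode pairs together with $\{(x_\ell^{m,t},a_\ell^{m,t})\}_{\ell\le h+1}$. Equivalently we can use the filtration that reveals $x_{h+1}^{m,t}$; the only requirement is that $\zeta_{h+1}^{m,t}$ be $\mathcal F_s$-measurable and that the conditioning field $\mathcal G_{m,t,h}$ be contained in $\mathcal F_{s-1}$. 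The second condition holds by construction, since the index preceding $(m,t,h)$ is $(m,t,h-1)$, which reveals up to step $h$, or, when $h=1$, the last step of the previous pair.

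The key steps, in order, are as follows.

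\textbf{Step 1 (predictability of the policy).} The policy $\pi^{k(t)}$ and the estimate $V_{h+1}^{k(t)}$ used in episode $t$ must be determined by data strictly before $(m,t)$. In the synchronous protocol, round boundaries are decided by the trigger, which is evaluated after an episode ends. The round $k(t)$ therefore depends only on data from episodes $<t$ (all agents). The model $(w_h^{k},\Lambda_h^{k})$ depends only on data from episodes $<\tau_k\le t$. So $V_{h+1}^{k(t)}$ and $V_{h+1}^{\pi^{k(t)}}$ are measurable with respect to the history before episode $t$, which is contained in $\mathcal G_{m,t,h}$. One subtle point is that $\pi^{k(t)}$ must not depend on other agents' episode-$t$ data. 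This holds because, under the synchronous protocol, the trigger for ending round $k$ is checked only after all agents complete episode $t$.

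\textbf{Step 2 (martingale difference property).} Given Step~1, $\mathbb E[\zeta_{h+1}^{m,t}\mid \mathcal G_{m,t,h}]=0$ follows directly from the definition $\zeta_{h+1}^{m,t}=\mathbb E[\delta_{h+1}^{m,t}\mid\mathcal G_{m,t,h}]-\delta_{h+1}^{m,t}$. We also need the conditional law of $x_{h+1}^{m,t}$ given $\mathcal G_{m,t,h}$ to be $\mathbb P_h(\cdot\mid x_h^{m,t},a_h^{m,t})$. This holds because the agents' environments are independent, so earlier-indexed agents' episode-$t$ data carry no information about the transition of agent $m$.

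\textbf{Step 3 (boundedness).} Since $0\le V_{h+1}^{k},V_{h+1}^{\pi}\le H$, we have $|\delta_{h+1}^{m,t}|\le H$, and hence $|\zeta_{h+1}^{m,t}|\le 2H$.

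\textbf{Step 4 (Azuma).} Applying Azuma--Hoeffding to the $MTH$ increments bounded by $2H$ gives, with probability at least $1-p$,
\[
\sum_{m,t,h}\zeta_{h+1}^{m,t}\le \sqrt{2\cdot MTH\cdot(2H)^2\log(1/p)}=\sqrt{8H^3MT\log(1/p)},
\]
which is at most the claimed $\sqrt{8H^3MT\log(2/p)}$. The factor $2/p$ simply leaves room for a two-sided bound.

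The main obstacle is Step~1, namely verifying measurability. Two things must be checked carefully. First, the trigger's dependence on $\Delta t$ and on the agents' local matrices must not leak future information into $\pi^{k(t)}$. Second, the stepwise backward synchronization must use only data from episodes $\le \tau_{k+1}-1$. Once the round index is shown to be a stopping-time-type quantity with respect to the episode-level filtration, the remaining steps are routine.
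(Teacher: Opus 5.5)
Your proposal is correct and follows the paper's proof. Both apply Azuma--Hoeffding to the $MTH$ increments $\zeta_{h+1}^{m,t}$, bounded by $2H$, in the linearized order; your one-sided version with $\log(1/p)$ is slightly stronger than the paper's two-sided $\log(2/p)$, and your measurability discussion fills in what the paper takes for granted.

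One small correction: the inclusion you need is $\mathcal F_{s-1}\subseteq\mathcal G_{m,t,h}$, so that the tower property carries $\mathbb E[\zeta_{h+1}^{m,t}\mid\mathcal G_{m,t,h}]=0$ over to $\mathcal F_{s-1}$. The reverse inclusion you state is not the right condition, and it fails at $h=1$, since $\mathcal F_{s-1}$ does not reveal $(x_1^{m,t},a_1^{m,t})$. Your chosen filtration does satisfy the correct inclusion, so the argument goes through.
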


\begin{lemma}[Elliptical potential bound]\label{lem:6}
For any $\lambda\ge 1$ and $\gamma\ge 1$,
\[
\begin{aligned}
&\sum_{h=1}^H\sum_{k=1}^K\sum_{t=\tau_k}^{\tau_{k+1}-1}\sum_{m=1}^M
\|\phi(x_h^{m,t},a_h^{m,t})\|_{(\Lambda_h^k)^{-1}}
\le \\
&H\Bigg(
\sqrt{6dMT\log\Bigl(1+\frac{MT}{d\lambda}\Bigr)}
+\frac{Md}{\log 3}\sqrt{\gamma\log\gamma}\,\log\Bigl(1+\frac{MT}{d\lambda}\Bigr)+\frac{Md}{\sqrt\lambda\,\log 3}\,\log\Bigl(1+\frac{MT}{d\lambda}\Bigr)
\Bigg).
\end{aligned}
\]
\end{lemma}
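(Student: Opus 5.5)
We fix a step $h\in[H]$ and bound the inner triple sum over $(k,t,m)$; multiplying by $H$ then gives the claim. The plan is to split the rounds by how much the server Gram matrix grows during the round. Call round $k$ \emph{good} if $\det(\Lambda_h^{k+1})\le 3\det(\Lambda_h^k)$ and \emph{bad} otherwise.

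\textbf{Counting bad rounds.} By \eqref{lastlambda} and the standard determinant bound,
\[
\prod_k \det(\Lambda_h^{k+1})/\det(\Lambda_h^{k}) = \det(\Lambda_h^{K+1})/\det(\lambda I_d)\le (1+MT/(d\lambda))^d .
\]
Each bad round contributes a factor larger than $3$ to this product, so there are at most $\frac{d}{\log 3}\log(1+\frac{MT}{d\lambda})$ bad rounds.

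\textbf{Good rounds.} Here $\Lambda_h^k\succeq \frac13\Lambda_h^{k+1}$. The determinant-ratio inequality together with the monotone chain $\Lambda_h^k\preceq \tilde\Lambda_h^{m,t-}\preceq \Lambda_h^{k+1}$ gives
\[
\|\phi\|_{(\Lambda_h^k)^{-1}}\le \sqrt3\,\|\phi\|_{(\tilde\Lambda_h^{m,t-})^{-1}},
\]
where $\tilde\Lambda_h^{m,t-}$ is the sequential Gram matrix built from all samples strictly preceding $(m,t)$ in the order $\ell(m,t)$. If one only wants the determinant version, the same conclusion follows from $x^\top A^{-1}x\le \frac{\det B}{\det A}x^\top B^{-1}x$ for $A\preceq B$. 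Summing over all good $(m,t)$, we apply Cauchy--Schwarz with at most $MT$ terms. Since $\lambda\ge1$ and $\|\phi\|\le1$, each squared norm is at most $1$, so the $\min\{1,\cdot\}$ in \Cref{lem:elliptical_sum} is inactive. That lemma, applied to the linearized sequence of length $MT$, then yields
\[
\sqrt{3MT\cdot 2d\log(1+MT/(d\lambda))}=\sqrt{6dMT\log(1+MT/(d\lambda))}.
\]
This is the first term of the bound.

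\textbf{Bad rounds.} We handle each agent $m$ separately and let $n$ be the number of episodes in the round.
\begin{itemize}[leftmargin=*]
\item The last episode contributes at most $\|\phi\|_{(\Lambda_h^k)^{-1}}\le 1/\sqrt\lambda$.
\item For the first $n-1$ episodes the trigger did not fire at the end of any of them. Hence, for every prefix length $j\le n-1$, the local-augmented matrix $\bar\Lambda_h^{m,t}$ satisfies $\det(\bar\Lambda_h^{m,t})/\det(\Lambda_h^k)<\gamma/j$, and in particular $<\gamma/(n-1)$ at $j=n-1$.
\item For each such episode $t$, the determinant-ratio inequality gives $\|\phi_t\|^2_{(\Lambda_h^k)^{-1}}\le \frac{\gamma}{n-1}\|\phi_t\|^2_{(\bar\Lambda_h^{m,t-1})^{-1}}$.
\item By \Cref{lem:norm} and telescoping, $\sum_t\|\phi_t\|^2_{(\bar\Lambda_h^{m,t-1})^{-1}}\le 2\log\frac{\det \bar\Lambda_h^{m,\tau_k+n-2}}{\det\Lambda_h^k}\le 2\log\gamma$.
\item Cauchy--Schwarz over the $n-1$ terms then gives
\[
\sum_t\|\phi_t\|_{(\Lambda_h^k)^{-1}}\le\sqrt{(n-1)\cdot\tfrac{\gamma}{n-1}\cdot 2\log\gamma}=\sqrt{2\gamma\log\gamma}.
\]
\end{itemize}
Summing over the $M$ agents and over the bad rounds produces the remaining two terms, up to the constant in front of $\sqrt{\gamma\log\gamma}$. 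That constant will be absorbed by slightly more careful bookkeeping, for example by using $\gamma/j$ at each prefix length rather than only at $j=n-1$.

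\textbf{Main obstacle.} I expect the delicate step to be the bad-round argument. The trigger is checked only at the end of each episode, so exactly one episode per agent per round escapes the determinant control; this is why that episode is isolated and charged $1/\sqrt\lambda$. The trigger must also be applied to each agent's own local matrix rather than to the pooled one, because pooling would multiply the determinant ratios across agents and destroy the bound.
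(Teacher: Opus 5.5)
Your argument follows the paper's proof step for step:
\begin{itemize}
\item the split of rounds at determinant growth $3$ and the count of bad rounds;
\item for good rounds, the $\sqrt3$ comparison plus the sequential elliptical potential (using the pre-update $\tilde\Lambda_h^{m,t-}$ with $\min\{1,\cdot\}$ inactive is fine and gives the same $\sqrt{6dMT\log(\cdot)}$);
\item for bad rounds, the per-agent use of the non-firing trigger at prefix $n-1$, with the last episode charged $1/\sqrt\lambda$.
\end{itemize}
The one gap is the constant you flag. As written, your bad-round bound is $\sqrt{2\gamma\log\gamma}$, so you prove the lemma only with an extra $\sqrt2$ on the middle term. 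Your proposed repair cannot close this. The ratio $\det(\bar\Lambda_h^{m,t})/\det(\Lambda_h^k)$ is nondecreasing along the prefix, so non-firing at $j=n-1$ already bounds every earlier prefix by $\gamma/(n-1)\le\gamma/j$. The per-prefix thresholds $\gamma/j$ therefore carry weaker information. Plugging them in termwise before Cauchy--Schwarz only produces a harmonic sum $\sum_j\gamma/j$, which makes the bound worse.

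The factor $2$ comes from measuring $\phi_t$ in the pre-update matrix $\bar\Lambda_h^{m,t-1}$ and invoking \Cref{lem:elliptical_sum}. The paper instead uses the post-update matrix $\bar\Lambda_h^{m,t}$, which contains $\phi_t$.
\begin{itemize}
\item For $t\le\tau_{k+1}-2$ we have $\Lambda_h^k\preceq\bar\Lambda_h^{m,t}\preceq\bar\Lambda_h^{m,\tau_{k+1}-2}$, so the determinant-ratio factor is still below $\gamma/(n-1)$.
\item With $u_t=\|\phi_t\|^2_{(\bar\Lambda_h^{m,t-1})^{-1}}$, Sherman--Morrison gives $\|\phi_t\|^2_{(\bar\Lambda_h^{m,t})^{-1}}=u_t/(1+u_t)\le\log(1+u_t)$.
\item The potential sum therefore telescopes to $\log\bigl(\det\bar\Lambda_h^{m,\tau_{k+1}-2}/\det\Lambda_h^k\bigr)\le\log\gamma$, with no factor $2$.
\item Cauchy--Schwarz then gives exactly $\sqrt{\gamma\log\gamma}$.
\end{itemize}
The paper cites \Cref{lem:elliptical_sum} at this step, but what it actually uses is this factor-free post-update inequality.

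Alternatively, keep your pre-update inequality but do not replace the ratio by its worst case. Let $D_i$ be the determinant ratio after $i$ local episodes, and let $t=\tau_k+i-1$. Your inequality then reads $\|\phi_t\|^2_{(\Lambda_h^k)^{-1}}\le D_{i-1}u_t=D_i-D_{i-1}$. The squared norms telescope to $D_{n-1}-1<\gamma/(n-1)-1$, and Cauchy--Schwarz gives $\sqrt{\gamma-(n-1)}\le\sqrt{\gamma-1}\le\sqrt{\gamma\log\gamma}$.
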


\subsection{Proof of Theorem~\ref{theo_1}}
\begin{proof}[Proof of Theorem~\ref{theo_1}]
The event $\mathcal C\cap\mathcal E$ holds with probability at least $1-2p$. We now prove \Cref{theo_1} under the event $\mathcal{C}\cap\mathcal{E}$, under which all supporting lemmas in \Cref{susbsec_support} hold.

We first expand the regret and reduce it to bounding $\sum_{m,t}\delta_1^{m,t}$. For each episode $t$ and each agent $m$, the executed policy is exactly $\pi^{k(t)}$.
Lemma~\ref{lem:3} implies $V_1^*(x)\le V_1^{k(t)}(x)$ for every state $x$, hence
\begin{align*}
\mathrm{Regret}(T)
&=\sum_{m=1}^M\sum_{t=1}^T\Bigl(V_1^*(x_1^{m,t})-V_1^{\pi^{k(t)}}(x_1^{m,t})\Bigr)
\\
&\le \sum_{m=1}^M\sum_{t=1}^T\Bigl(V_1^{k(t)}(x_1^{m,t})-V_1^{\pi^{k(t)}}(x_1^{m,t})\Bigr)
\\
&=\sum_{m=1}^M\sum_{t=1}^T\delta_1^{m,t}.
\end{align*}

Summing Lemma~\ref{lem:4} over $h=1,\ldots,H$ and using $\delta_{H+1}^{m,t}=0$ gives
\[
\delta_1^{m,t}
\le
\sum_{h=1}^H\zeta_{h+1}^{m,t}
+2\beta\sum_{h=1}^H\|\phi(x_h^{m,t},a_h^{m,t})\|_{(\Lambda_h^{k(t)})^{-1}}.
\]
Summing over all $(m,t)$ and regrouping by rounds yields
\[
\sum_{m=1}^M\sum_{t=1}^T\delta_1^{m,t}
\le
\underbrace{\sum_{m=1}^M\sum_{t=1}^T\sum_{h=1}^H\zeta_{h+1}^{m,t}}_{\text{martingale term}}
+2\beta\underbrace{\sum_{k=1}^K\sum_{t=\tau_k}^{\tau_{k+1}-1}\sum_{m=1}^M\sum_{h=1}^H\|\phi(x_h^{m,t},a_h^{m,t})\|_{(\Lambda_h^k)^{-1}}}_{\text{elliptical-potential term}}.
\]

On the event $\mathcal C\cap\mathcal E$, we have
\begin{align*}
\mathrm{Regret}(T)
&\leq\;
\sqrt{8H^3MT\log\Bigl(\frac{2}{p}\Bigr)} + 2H\beta \Bigg(
\sqrt{6dMT\log\Bigl(1+\frac{MT}{d\lambda}\Bigr)} \notag\\
&\quad + \frac{Md}{\log 3}\sqrt{\gamma\log\gamma}\,
\log\Bigl(1+\frac{MT}{d\lambda}\Bigr)  + \frac{Md}{\sqrt{\lambda}\,\log 3}\,
\log\Bigl(1+\frac{MT}{d\lambda}\Bigr)
\Bigg).
\end{align*}
where $\beta = c_\beta Hd\sqrt{\log (1+dMHT/p)}$. By setting $\lambda = 1$ and plugging in $\beta$, the above bound directly recovers the regret bound stated in Theorem~\ref{theo_1}: 
\begin{align}
\mathrm{Regret}(T)
&\leq\; (2c_\beta\sqrt{6}+4\sqrt{2})\sqrt{Md^3H^4T\log \Bigl(1+\frac{dMHT}{p}\Bigr)\log\Bigl(1+\frac{MT}{d}\Bigr)}   \notag\\
&\quad + \frac{2c_\beta Md^2H^2}{\log 3}(\sqrt{\gamma\log\gamma}+1)\,
\log\Bigl(1+\frac{MT}{d}\Bigr). \label{finalregret}
\end{align}

\end{proof}

\subsection{Proof of Supporting Lemmas}

\subsubsection{Proof of Lemma \texorpdfstring{\ref{lem:2}}{C.1}}
\begin{proof}[Proof of \Cref{lem:2}]
Fix a round $k$, a step $h$, a policy $\pi$, and a state-action pair $(x,a)$.
We begin from an exact algebraic decomposition of $w_h^k-w_h^\pi$.
By \eqref{eq_Bellman} and \Cref{lem:wpi_bound}, we have
\[
Q_h^\pi(x,a)=r_h(x,a)+\mathbb P_hV_{h+1}^\pi(x,a)=\phi(x,a)^\top w_h^\pi,
\]
we obtain
\begin{align}
&w_h^k-w_h^\pi
=(\Lambda_h^k)^{-1}\sum_{m=1}^M\sum_{j=1}^{\tau_k-1}\phi(x_h^{m,j},a_h^{m,j})
\bigl[r_h(x_h^{m,j},a_h^{m,j})+V_{h+1}^k(x_{h+1}^{m,j})\bigr]-w_h^\pi
\notag\\
&=(\Lambda_h^k)^{-1}\sum_{m=1}^M\sum_{j=1}^{\tau_k-1}\phi(x_h^{m,j},a_h^{m,j})
\Bigl[V_{h+1}^k(x_{h+1}^{m,j})-\mathbb P_hV_{h+1}^\pi(x_h^{m,j},a_h^{m,j})\Bigr]
-\lambda(\Lambda_h^k)^{-1}w_h^\pi
\notag\\
&=(\Lambda_h^k)^{-1}\sum_{m=1}^M\sum_{j=1}^{\tau_k-1}\phi(x_h^{m,j},a_h^{m,j})
\Bigl[V_{h+1}^k(x_{h+1}^{m,j})-\mathbb P_hV_{h+1}^k(x_h^{m,j},a_h^{m,j})\Bigr]
\notag\\
&\quad+(\Lambda_h^k)^{-1}\sum_{m=1}^M\sum_{j=1}^{\tau_k-1}\phi(x_h^{m,j},a_h^{m,j})\,
\mathbb P_h\bigl(V_{h+1}^k-V_{h+1}^\pi\bigr)(x_h^{m,j},a_h^{m,j})
-\lambda(\Lambda_h^k)^{-1}w_h^\pi.
\label{eq:wh-diff-decomp}
\end{align}
Now define
\[
\nu_h^{k,\pi}\coloneqq \int_{\mathcal X}\bigl(V_{h+1}^k(s')-V_{h+1}^\pi(s')\bigr)\,\mu_h(\mathrm d s').
\]
By linearity of the transition probability,
\begin{equation}
\label{pv}
    \mathbb P_h\bigl(V_{h+1}^k-V_{h+1}^\pi\bigr)(x,a)=\phi(x,a)^\top \nu_h^{k,\pi}.
\end{equation}
Therefore,
\begin{align*}
(\Lambda_h^k)^{-1}\sum_{m=1}^M\sum_{j=1}^{\tau_k-1}\phi(x_h^{m,j},a_h^{m,j})\,\mathbb P_h\bigl(V_{h+1}^k-V_{h+1}^\pi\bigr)(x_h^{m,j},a_h^{m,j})
&=(\Lambda_h^k)^{-1}(\Lambda_h^k-\lambda I_d)\nu_h^{k,\pi}
\\
&=\nu_h^{k,\pi}-\lambda(\Lambda_h^k)^{-1}\nu_h^{k,\pi}.
\end{align*}
Combined with the \eqref{eq:wh-diff-decomp} and \eqref{pv}, we arrive at
\begin{equation}\label{eq:q-decomp}
\begin{aligned}
&\phi(x,a)^\top w_h^k-Q_h^\pi(x,a)-\mathbb P_h\bigl(V_{h+1}^k-V_{h+1}^\pi\bigr)(x,a)
\\
&=\langle
\phi(x,a),
w_h^k-w_h^\pi-\nu_h^{k,\pi}\rangle \\
&=
\Bigg\langle
\phi(x,a),
\sum_{m=1}^M\sum_{j=1}^{\tau_k-1}\phi(x_h^{m,j},a_h^{m,j})
\Bigl[V_{h+1}^k(x_{h+1}^{m,j})-\mathbb P_hV_{h+1}^k(x_h^{m,j},a_h^{m,j})\Bigr]
\Bigg\rangle_{(\Lambda_h^k)^{-1}}
\\
&\quad-\lambda\,\big\langle \phi(x,a),w_h^\pi+\nu_h^{k,\pi}\big\rangle_{(\Lambda_h^k)^{-1}}.
\end{aligned}
\end{equation}
For the first term of \eqref{eq:q-decomp}, Cauchy-Schwarz and conditioned on the event $\mathcal C$ give
\begin{align}
\label{eq:concentrate}
&\Bigg|
\Bigg\langle
\phi(x,a),\,
\sum_{m=1}^M\sum_{j=1}^{\tau_k-1}
\phi(x_h^{m,j},a_h^{m,j})
\Bigl[
V_{h+1}^k(x_{h+1}^{m,j})-\mathbb P_hV_{h+1}^k(x_h^{m,j},a_h^{m,j})
\Bigr]
\Bigg\rangle_{(\Lambda_h^k)^{-1}}
\Bigg|
\notag\\
&\le
\|\phi(x,a)\|_{(\Lambda_h^k)^{-1}}
\Bigg\|
\sum_{m=1}^M\sum_{j=1}^{\tau_k-1}
\phi(x_h^{m,j},a_h^{m,j})
\Bigl[
V_{h+1}^k(x_{h+1}^{m,j})
-\mathbb P_hV_{h+1}^k(x_h^{m,j},a_h^{m,j})
\Bigr]
\Bigg\|_{(\Lambda_h^k)^{-1}}
\notag\\
&\le
CHd\sqrt{\chi}\,\|\phi(x,a)\|_{(\Lambda_h^k)^{-1}}.
\end{align}

For the second term of \eqref{eq:q-decomp}, note that
\[
w_h^\pi+\nu_h^{k,\pi}=\theta_h+\int_{\mathcal X}V_{h+1}^k(s')\,\mu_h(\mathrm d s').
\]
Similar with the proof in \Cref{lem:wpi_bound}, we have
$\|w_h^\pi+\nu_h^{k,\pi}\|_2\le 2H\sqrt d$.
Using again $\Lambda_h^k\succeq  I_d$,
\begin{align}
\lambda\,\big|\langle \phi(x,a),w_h^\pi+\nu_h^{k,\pi}\rangle_{(\Lambda_h^k)^{-1}}\big|
&\le \|\phi(x,a)\|_{(\Lambda_h^k)^{-1}}\,\|w_h^\pi+\nu_h^{k,\pi}\|_{(\Lambda_h^k)^{-1}} \le 2H\sqrt{d}\,\|\phi(x,a)\|_{(\Lambda_h^k)^{-1}}.
\label{eq:residule}
\end{align}
Combining the two bounds \eqref{eq:concentrate} and \eqref{eq:residule}, we have
\begin{equation}
\big|\phi(x,a)^\top w_h^k - Q_h^\pi(x,a)
- \mathbb P_h\bigl(V_{h+1}^k - V_{h+1}^\pi\bigr)(x,a)\big|
\le (2H\sqrt{d}+CHd\sqrt{\chi})\,
\|\phi(x,a)\|_{(\Lambda_h^k)^{-1}}.
\label{eq:q-diff-bound}
\end{equation}
There exists an absolute constant \(c_\beta > 0\) such that for all \(d,H,M,T \ge 1\),
\[
2H\sqrt{d} + C H d \sqrt{\chi}
\;\le\;
c_\beta H d \sqrt{\log\bigl(1 + dMHT/p\bigr)}.
\]
Indeed, it suffices to verify that there exists \(c_\beta > 0\) such that for all \(x \ge 1\),
\[
2 + C \sqrt{\log\bigl(1 + (c_\beta+1)x\bigr)}
\;\le\;
c_\beta \sqrt{\log(1 + x)}.
\]
This condition can be satisfied by choosing \(c_\beta = 8(C+1)^2\).
\end{proof}

\subsubsection{Proof of Lemma \texorpdfstring{\ref{lem:3}}{C.2}}
\begin{proof}[Proof of \Cref{lem:3}]
We prove optimism by backward induction on $h$.

\paragraph{Base case \texorpdfstring{$h=H$}{h = H}.}
At the last step, $V_{H+1}^k\equiv V_{H+1}^*\equiv 0$.
Applying Lemma~\ref{lem:2} with $\pi=\pi^*$ gives
\[
\phi(x,a)^\top w_H^k\ge Q_H^*(x,a)-\beta\,\|\phi(x,a)\|_{(\Lambda_H^k)^{-1}}.
\]
By the definition of the optimistic estimator, for any $(x,a,k) \in \mathcal{X} \times \mathcal A \times [K]$, it holds that
\[
Q_H^k(x,a)
=\Pi_{[0,H]}\Bigl(\phi(x,a)^\top w_H^k+\beta\,\|\phi(x,a)\|_{(\Lambda_H^k)^{-1}}\Bigr)
\ge Q_H^*(x,a).
\]
Therefore $ V_H^k(x) =\max\limits_a Q^k_H(x,a)  \ge \max\limits_a Q^*_H(x,a) = V_H^*(x)$.

\paragraph{Induction step.}
Assume that $Q_{h+1}^k(x,a)\ge Q_{h+1}^*(x,a)$ and $V_{h+1}^k(x)\ge V_{h+1}^*(x)$ for any $(x,a,k) \in \mathcal{X} \times \mathcal A \times [K]$.
Then $\mathbb P_h(V_{h+1}^k-V_{h+1}^*)(x,a)\ge 0$ for any $(x,a,k)$.

Applying Lemma~\ref{lem:2} with $\pi=\pi^*$ yields
\[
\begin{aligned}
\phi(x,a)^\top w_h^k
&\ge Q_h^*(x,a)+\mathbb P_h\bigl(V_{h+1}^k-V_{h+1}^*\bigr)(x,a)-\beta\,\|\phi(x,a)\|_{(\Lambda_h^k)^{-1}}
\\
&\ge Q_h^*(x,a)-\beta\,\|\phi(x,a)\|_{(\Lambda_h^k)^{-1}}.
\end{aligned}
\]
Therefore, for any $(x,a,k) \in \mathcal{X} \times \mathcal A \times [K]$, we have
$$Q_h^k(x,a)=\Pi_{[0,H]}\Bigl(\phi(x,a)^\top w_h^k+\beta\,\|\phi(x,a)\|_{(\Lambda_h^k)^{-1}}\Bigr)\ge Q_h^*(x,a)$$ and thus $$ V_h^k(x) =\max\limits_a Q^k_h(x,a)  \ge \max\limits_a Q^*_h(x,a) = V_h^*(x).$$
This completes the proof for step $h$, and hence the conclusion follows by backward induction.
\end{proof}

\subsubsection{Proof of Lemma \texorpdfstring{\ref{lem:4}}{C.3}}
\begin{proof}[Proof of \Cref{lem:4}]
Let $k=k(t)$ be the round index that the $t$-th episode is in. Fix an agent $m$, an episode $t$. Since $\pi^{k(t)}$ is greedy with respect to $Q_h^k$,
\[
V_h^{k(t)}(x_h^{m,t})=Q_h^{k(t)}(x_h^{m,t},a_h^{m,t}),
\qquad
V_h^{\pi^{k(t)}}(x_h^{m,t})=Q_h^{\pi^{k(t)}}(x_h^{m,t},a_h^{m,t}).
\]
Therefore,
\[
\delta_h^{m,t}=Q_h^{k(t)}(x_h^{m,t},a_h^{m,t})-Q_h^{\pi^{k(t)}}(x_h^{m,t},a_h^{m,t}).
\]
Applying Lemma~\ref{lem:2} with $\pi=\pi^{k(t)}$ and using the definition of $Q_h^{k(t)}$ gives
\[
\delta_h^{m,t}
\le
\mathbb P_h\bigl(V_{h+1}^{k(t)}-V_{h+1}^{\pi^{k(t)}}\bigr)(x_h^{m,t},a_h^{m,t})
+2\beta\,\|\phi(x_h^{m,t},a_h^{m,t})\|_{(\Lambda_h^k)^{-1}}.
\]
The preceding argument applies unless the lower clipping is active, namely when the unprojected optimistic score is negative. In this case $Q_h^k=0$, and thus
\begin{align*}
\delta_h^{m,t}=-Q_h^{\pi^{k(t)}}(x_h^{m,t},a_h^{m,t})&=-r_h(x_h^{m,t},a_h^{m,t})-\mathbb P_hV_{h+1}^{\pi^{k(t)}}(x_h^{m,t},a_h^{m,t})\\
&\le \mathbb P_h\bigl(V_{h+1}^{k(t)}-V_{h+1}^{\pi^{k(t)}}\bigr)(x_h^{m,t},a_h^{m,t}),
\end{align*}
because $r_h\ge0$ and $V_{h+1}^{k(t)}\ge0$. Combining the two cases and noting that
\[
\mathbb P_h\bigl(V_{h+1}^{k(t)}-V_{h+1}^{\pi^{k(t)}}\bigr)(x_h^{m,t},a_h^{m,t})
=\mathbb E\bigl[\delta_{h+1}^{m,t}\mid \mathcal G_{m,t,h}\bigr]
=\delta_{h+1}^{m,t}+\zeta_{h+1}^{m,t},
\]
we finish the proof of the conclusion.
\end{proof}

\subsubsection{Proof of Lemma \texorpdfstring{\ref{lem:5}}{C.4}}
\begin{proof}[Proof of \Cref{lem:5}]
By construction,
$\mathbb E[\zeta_{h+1}^{m,t}\mid \mathcal G_{m,t,h}]=0$,
so the sequence $\{\zeta_{h+1}^{m,t}\}_{m,t,h}$ is a martingale-difference array under the linearized ordering.
Moreover, both $V_{h+1}^{k(t)}$ and $V_{h+1}^{\pi^{k(t)}}$ lie in $[0,H]$, hence
$|\zeta_{h+1}^{m,t}|\le 2H$.
Applying Azuma--Hoeffding to the sum over all $MTH$ terms gives
\[
\mathbb P\Bigg(\Big|\sum_{m=1}^M\sum_{t=1}^T\sum_{h=1}^H\zeta_{h+1}^{m,t}\Big|>u\Bigg)
\le 2\exp\Bigl(-\frac{u^2}{8H^3MT}\Bigr).
\]
Setting $u=\sqrt{8H^3MT\log(2/p)}$ yields the claim.
\end{proof}

\subsubsection{Proof of Lemma \texorpdfstring{\ref{lem:6}}{C.5}}
\label{proofe5}
\begin{proof}[Proof of \Cref{lem:6}]
Fix a step $h\in[H]$.
We bound
\[
\sum_{k=1}^K\sum_{t=\tau_k}^{\tau_{k+1}-1}\sum_{m=1}^M\|\phi(x_h^{m,t},a_h^{m,t})\|_{(\Lambda_h^k)^{-1}},
\]
and then sum the result over $h=1,\ldots,H$.

\paragraph{Step 1: Split rounds by global determinant growth.}
Define
\[
\mathcal K_1(h)\coloneqq \Bigl\{k:\frac{\det(\Lambda_h^{k+1})}{\det(\Lambda_h^k)}\le 3\Bigr\},
\qquad
\mathcal K_2(h)\coloneqq \Bigl\{k:\frac{\det(\Lambda_h^{k+1})}{\det(\Lambda_h^k)}>3\Bigr\}.
\]

\paragraph{Step 2: Analysis for rounds with small determinant growth.}
Fix $k\in\mathcal K_1(h)$ and any $(m,t)$ with $t\in[\tau_k,\tau_{k+1}-1]$.
Since $\Lambda_h^k\preceq \Lambda_h^{k+1}$, by Lemma~\ref{lem:norm}, for any $x \in \mathbb{R}^d$
\[
\|x\|_{(\Lambda_h^k)^{-1}}^2
\le \frac{\det(\Lambda_h^{k+1})}{\det(\Lambda_h^k)}\,\|x\|_{(\Lambda_h^{k+1})^{-1}}^2.
\]
Hence
\begin{equation}
\label{xnorm}
    \|x\|_{(\Lambda_h^k)^{-1}}\le \sqrt 3\,\|x\|_{(\Lambda_h^{k+1})^{-1}}.
\end{equation}
Recall that
\begin{equation}
\begin{aligned}
\tilde\Lambda_h^{m,t}
&=
\lambda I_d
+
\sum_{(n,j)\le (m,t)}
\phi(x_h^{n,j},a_h^{n,j})\phi(x_h^{n,j},a_h^{n,j})^\top
\notag\\
&=
\Lambda_h^k
+
\sum_{\substack{(n,j)\le (m,t)\\ k(j)=k}}
\phi(x_h^{n,j},a_h^{n,j})\phi(x_h^{n,j},a_h^{n,j})^\top .
\end{aligned}
\end{equation}
Here the last summation is over those pairs $(n,j)\le (m,t)$ such that
episode $j$ lies in round $k$. Hence, it holds that
\begin{equation*}
\Lambda_h^{k+1}\succeq \tilde\Lambda_h^{m,t}\succeq \Lambda_h^k .
\end{equation*}
and thus together with \eqref{xnorm}, we have
\[
\|\phi(x_h^{m,t},a_h^{m,t})\|_{(\Lambda_h^k)^{-1}}
\le \sqrt 3\,\|\phi(x_h^{m,t},a_h^{m,t})\|_{(\Lambda_h^{k+1})^{-1}}
\le \sqrt 3\,\|\phi(x_h^{m,t},a_h^{m,t})\|_{(\tilde\Lambda_h^{m,t})^{-1}}.
\]
Summing the inequality above and applying Cauchy--Schwarz inequality yields
\begin{align*}
\sum_{k\in\mathcal K_1(h)}\sum_{t=\tau_k}^{\tau_{k+1}-1}\sum_{m=1}^M\|\phi(x_h^{m,t},a_h^{m,t})\|_{(\Lambda_h^k)^{-1}}
&\le \sqrt 3\sum_{t=1}^T\sum_{m=1}^M\|\phi(x_h^{m,t},a_h^{m,t})\|_{(\tilde\Lambda_h^{m,t})^{-1}}
\\
&\le \sqrt{3MT\sum_{t=1}^T\sum_{m=1}^M\|\phi(x_h^{m,t},a_h^{m,t})\|_{(\tilde\Lambda_h^{m,t})^{-1}}^2}.
\end{align*}
Because $\tilde{\Lambda}^{m,t}_h$ is updated sequentially in $(m,t)$, we can apply Lemma~\ref{lem:elliptical_sum} here and get
\[
\sum_{t=1}^T\sum_{m=1}^M\|\phi(x_h^{m,t},a_h^{m,t})\|_{(\tilde\Lambda_h^{m,t})^{-1}}^2
\le 2\log\frac{\det(\tilde\Lambda_h^{M,T})}{\det(\lambda I_d)}
\le 2d\log\Bigl(1+\frac{MT}{d\lambda}\Bigr).
\]
Therefore,
\begin{equation}
\sum_{k\in\mathcal K_1(h)}\sum_{t=\tau_k}^{\tau_{k+1}-1}\sum_{m=1}^M
\|\phi(x_h^{m,t},a_h^{m,t})\|_{(\Lambda_h^k)^{-1}}
\le
\sqrt{6dMT\log\Bigl(1+\frac{MT}{d\lambda}\Bigr)}.
\label{eq:k1-sum-bound}
\end{equation}

\paragraph{Step 3: Analysis for rounds with large determinant growth.}
Fix $k\in\mathcal K_2(h)$ and suppose first that $\tau_{k+1}-\tau_k\ge 2$.
For one agent $m$, define the \textbf{agent-wise pseudo covariance matrix}
\[
\bar\Lambda_h^{m,t}=\Lambda_h^k+\sum_{j=\tau_k}^{t}\phi(x_h^{m,j},a_h^{m,j})\phi(x_h^{m,j},a_h^{m,j})^\top,
\qquad t\in[\tau_k,\tau_{k+1}-1].
\]
Recall that $$\Lambda_h^{m,k,\mathrm{loc}} = \sum_{t=\tau_k}^{\tau_{k+1}-1}\phi(x_h^{m,t},a_h^{m,t})\phi(x_h^{m,t},a_h^{m,t})^\top$$ and $$\Lambda_h^{k+1}=\Lambda_h^k+\sum_{m=1}^M\Lambda_h^{m,k,\mathrm{loc}},$$ we have
\[
\bar\Lambda_h^{m,t}\preceq \Lambda_h^k+\Lambda_h^{m,k,\mathrm{loc}}\preceq \Lambda_h^{k+1}.
\]
For any $t\in[\tau_k,\tau_{k+1}-2]$, by \eqref{xnorm}, we have
\[
\|\phi(x_h^{m,t},a_h^{m,t})\|_{(\Lambda_h^k)^{-1}}
\le \sqrt{\frac{\det(\bar\Lambda_h^{m,\tau_{k+1}-2})}{\det(\Lambda_h^k)}}\,\|\phi(x_h^{m,t},a_h^{m,t})\|_{(\bar\Lambda_h^{m,t})^{-1}}.
\]
Summing this inequality over $t=\tau_k,\ldots,\tau_{k+1}-2$ and applying Cauchy--Schwarz inequality, we reach
\begin{align}
&\sum_{t=\tau_k}^{\tau_{k+1}-2}
\|\phi(x_h^{m,t},a_h^{m,t})\|_{(\Lambda_h^k)^{-1}}
\notag\\&\le
\Bigg[
(\tau_{k+1}-\tau_k-1)\,
\frac{\det(\bar\Lambda_h^{m,\tau_{k+1}-2})}{\det(\Lambda_h^k)}\cdot
\sum_{t=\tau_k}^{\tau_{k+1}-2}
\|\phi(x_h^{m,t},a_h^{m,t})\|_{(\bar\Lambda_h^{m,t})^{-1}}^2
\Bigg]^{1/2}
\notag\\
&\le
\Bigg[
(\tau_{k+1}-\tau_k-1)\,
\frac{\det(\bar\Lambda_h^{m,\tau_{k+1}-2})}{\det(\Lambda_h^k)}\cdot
\log\frac{\det(\bar\Lambda_h^{m,\tau_{k+1}-2})}{\det(\Lambda_h^k)}
\Bigg]^{1/2}.
\label{eq:det-bound-local}
\end{align}
The last inequality is by  Lemma~\ref{lem:elliptical_sum}. Here, we note that the construction of $\bar\Lambda_h^{m,t}$ guarantees the use of Lemma~\ref{lem:elliptical_sum}, because the sum grows in episode order.
Now use the stopping rule.
At time $t=\tau_{k+1}-2$ the round has not yet stopped, so the determinant trigger has not fired.
Hence
\[
\frac{\det(\bar\Lambda_h^{m,\tau_{k+1}-2})}{\det(\Lambda_h^k)}
\le \frac{\gamma}{\tau_{k+1}-\tau_k-1}.
\]
Combining this bound and \eqref{eq:det-bound-local} yields
\[
\sum_{t=\tau_k}^{\tau_{k+1}-2}\|\phi(x_h^{m,t},a_h^{m,t})\|_{(\Lambda_h^k)^{-1}}
\le \sqrt{\gamma\log\Bigl(\frac{\gamma}{\tau_{k+1}-\tau_k-1}\Bigr)}
\le \sqrt{\gamma\log\gamma}.
\]
Summing over $m$ and then over $k\in\mathcal K_2(h)$ gives
\[
\sum_{k\in\mathcal K_2(h)}\sum_{t=\tau_k}^{\tau_{k+1}-2}\sum_{m=1}^M\|\phi(x_h^{m,t},a_h^{m,t})\|_{(\Lambda_h^k)^{-1}}
\le M\,|\mathcal K_2(h)|\,\sqrt{\gamma\log\gamma}.
\]
To bound $|\mathcal K_2(h)|$, note that
\[
|\mathcal K_2(h)|\log 3
\le \sum_{k\in\mathcal K_2(h)}\log\frac{\det(\Lambda_h^{k+1})}{\det(\Lambda_h^k)}
\le \log\frac{\det(\Lambda_h^{K+1})}{\det(\Lambda_h^1)}
\le d\log\Bigl(1+\frac{MT}{d\lambda}\Bigr).
\]
Therefore,
\begin{equation}
\sum_{k\in\mathcal K_2(h)}\sum_{t=\tau_k}^{\tau_{k+1}-2}\sum_{m=1}^M
\|\phi(x_h^{m,t},a_h^{m,t})\|_{(\Lambda_h^k)^{-1}}
\le
\frac{Md}{\log 3}\sqrt{\gamma\log\gamma}\,\log\Bigl(1+\frac{MT}{d\lambda}\Bigr).
\label{eq:k2-sum-bound}
\end{equation}

\paragraph{Step 4: Analysis for residual last-episode terms.}
For the remaining terms at $t=\tau_{k+1}-1$ with $k \in \mathcal K_2(h)$, we simply use
\[
\|\phi(x,a)\|_{(\Lambda_h^k)^{-1}}\le \frac{1}{\sqrt\lambda},
\]
which follows from $\Lambda_h^k\succeq \lambda I_d$ and $\|\phi(x,a)\|_2\le 1$.
Hence
\begin{equation}
\sum_{k\in\mathcal K_2(h)}\sum_{m=1}^M
\|\phi(x_h^{m,\tau_{k+1}-1},a_h^{m,\tau_{k+1}-1})\|_{(\Lambda_h^k)^{-1}}
\le
\frac{M}{\sqrt\lambda}\,|\mathcal K_2(h)|
\le
\frac{Md}{\sqrt\lambda\,\log 3}\log\Bigl(1+\frac{MT}{d\lambda}\Bigr).
\label{eq:k2-last-step-bound}
\end{equation}

\paragraph{Step 5: combine and sum over steps.}
Adding the bounds \eqref{eq:k1-sum-bound}, \eqref{eq:k2-sum-bound}, \eqref{eq:k2-last-step-bound} and then summing over $h=1,\ldots,H$ proves the lemma.
\end{proof}


\section{Proof of \texorpdfstring{\Cref{theo_hetero}}{Theorem 2}}\label{app:hetero_small}

Throughout this section, we consider the misspecified setting in Definition~\ref{ass:small_heterogeneity}. 
Define $\{(r_{h}, \mathbb{P}_{h})\}_{h=1}^H$ to be the common reference linear reward and transition kernel in \Cref{ass:small_heterogeneity}, where
\[
r_{h}(x,a)=\langle \phi(x,a), \theta_h \rangle,\qquad
\mathbb{P}_{h}(B \mid x,a)=\langle \phi(x,a), \mu_h(B) \rangle.
\]

For each agent $m\in[M]$, we write $Q_{m,h}^{\pi}$ and $V_{m,h}^{\pi}$ for the value functions induced by the local MDP of agent $m$ under policy $\pi$, and $Q_{m,h}^{\star},V_{m,h}^{\star}$ for the corresponding optimal functions under the optimal policy $\pi_m^\star$.

\subsection{Heterogeneity-Specific Lemmas}
\label{heterolemma}
\begin{lemma}
\label{lem:hetero_reference_realization}
For any agent $m\in[M]$, policy $\pi$, and step $h\in[H]$, define
\[
w_{m,h}^{\pi}
:=
\theta_h+\int_{\mathcal X}V_{m,h+1}^{\pi}(x')\,\mu_h(\mathrm d x').
\]
Then, for every $(x,a)\in\mathcal X\times\mathcal A$,
\[
\bigl|Q_{m,h}^{\pi}(x,a)-\phi(x,a)^\top w_{m,h}^{\pi}\bigr|
\le H\xi.
\]
Moreover, $\|w_{m,h}^{\pi}\|_2\le 2H\sqrt d$.
\end{lemma}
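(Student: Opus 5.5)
The plan is to write $Q_{m,h}^{\pi}$ via the local Bellman equation and then compare it term by term with the reference linear model. By \eqref{eq_Bellman} applied to agent $m$'s MDP,
\[
Q_{m,h}^{\pi}(x,a)=r_{m,h}(x,a)+\int_{\mathcal X}V_{m,h+1}^{\pi}(x')\,\mathbb P_{m,h}(\mathrm d x'\mid x,a),
\]
while by definition of $w_{m,h}^{\pi}$,
\[
\phi(x,a)^\top w_{m,h}^{\pi}=\langle\phi(x,a),\theta_h\rangle+\int_{\mathcal X}V_{m,h+1}^{\pi}(x')\,\langle\phi(x,a),\mu_h(\mathrm d x')\rangle.
\]
Subtracting gives a reward discrepancy and a transition discrepancy, which I bound separately.

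For the reward term, \Cref{ass:small_heterogeneity} directly gives $|r_{m,h}(x,a)-\langle\phi(x,a),\theta_h\rangle|\le\xi$. For the transition term, I integrate the function $V_{m,h+1}^{\pi}$, which takes values in $[0,H-h]\subseteq[0,H]$, against the signed measure $\mathbb P_{m,h}(\cdot\mid x,a)-\langle\phi(x,a),\mu_h(\cdot)\rangle$. Using the convention that the total variation norm is the sup over measurable sets, the bound for a $[0,H]$-valued function is $H\xi$. One can get a slightly sharper estimate by first centering $V$ at $H/2$, but this requires the two measures to have equal total mass, which is not assumed. I therefore use the crude bound $\sup|V|\cdot\|\cdot\|_{\mathrm{TV}}$ up to a constant, which yields $(H-h)\xi$ or $H\xi$ depending on the TV convention. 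Adding the two terms gives $\xi+(H-h)\xi\le H\xi$ for $h\ge1$, since $V_{m,h+1}^\pi\le H-h$.

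This bookkeeping is the main obstacle: getting exactly $H\xi$ hinges on using the range $[0,H-h]$ of $V_{m,h+1}^{\pi}$ rather than $[0,H]$, so that the reward error $\xi$ is absorbed. If the TV convention carries a factor of 2 (the $L^1$ norm), I would instead bound by $2(H-h)\xi+\xi$. In that case I would note that the sup-over-sets convention is intended, matching \cite{jin2020provably}.

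For the norm bound, I repeat \Cref{lem:wpi_bound}: $\|\theta_h\|_2\le\sqrt d$, and $\|\int V_{m,h+1}^{\pi}\,\mathrm d\mu_h\|_2\le H\|\mu_h(\mathcal X)\|\le H\sqrt d$. The latter is interpreted, exactly as in \cite{jin2020provably}, as the bound $\| |\mu_h|(\mathcal X)\|\le\sqrt d$ on the total variation of the measure vector, combined with $0\le V_{m,h+1}^\pi\le H$. Summing gives $\|w_{m,h}^{\pi}\|_2\le(1+H)\sqrt d\le 2H\sqrt d$.
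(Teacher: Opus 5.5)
Your proposal is correct and matches the paper's proof. You use the same Bellman split into a reward error at most $\xi$ and a transition error at most $(H-h)\xi$ (using $0\le V_{m,h+1}^{\pi}\le H-h$), summed to $\xi+(H-h)\xi\le H\xi$, and the same bound $\|w_{m,h}^{\pi}\|_2\le\sqrt d+H\sqrt d\le 2H\sqrt d$.

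Your discussion of total-variation conventions is unnecessary and partly backwards. Under the $L^1$ convention the bound $(H-h)\xi$ is immediate. Under the sup-over-sets convention it still holds with no equal-mass or centering assumption: write the difference measure as $\nu=\nu^+-\nu^-$ (Jordan decomposition, with $\nu^{\pm}(\mathcal X)\le\xi$), and since $V_{m,h+1}^{\pi}\ge 0$, $\int V_{m,h+1}^{\pi}\,\mathrm d\nu$ is a difference of two numbers in $[0,(H-h)\xi]$.
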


\begin{proof}[Proof of Lemma \ref{lem:hetero_reference_realization}]
By the linearity of the reference model,
\[
\phi(x,a)^\top w_{m,h}^{\pi}
=
r_{h}(x,a)+\mathbb P_{h}V_{m,h+1}^{\pi}(x,a).
\]
Therefore,
\[
\begin{aligned}
\bigl|Q_{m,h}^{\pi}(x,a)-\phi(x,a)^\top w_{m,h}^{\pi}\bigr|
&\le
\bigl|r_{m,h}(x,a)-r_{h}(x,a)\bigr|+
\bigl|\bigl(\mathbb P_{m,h}-\mathbb P_{h}\bigr)V_{m,h+1}^{\pi}(x,a)\bigr|
\\
&\le \xi+(H-h)\xi\le H\xi,
\end{aligned}
\]
where we used $0\le V_{m,h+1}^{\pi}\le H-h$. The norm bound follows from
\[
\|w_{m,h}^{\pi}\|_2
\le \|\theta_h\|_2+
\Bigl\|\int_{\mathcal X}V_{m,h+1}^{\pi}(x')\,\mu_h(\mathrm d x')\Bigr\|_2
\le \sqrt d+H\sqrt d
\le 2H\sqrt d.
\]
\end{proof}

\begin{lemma}
\label{lem:hetero_det_perturb}
Let $\{\varepsilon^{m,t}\}_{m\in[M],\,t\in[T]}$ be any collection satisfying
$|\varepsilon^{m,t}|\le B$ for all $(m,t)$. Then, for every step $h\in[H]$, round $k\in[K]$, and vector $u\in\mathbb R^d$,
\[
\Biggl|
u^\top (\Lambda_h^{k})^{-1}
\sum_{m=1}^M\sum_{t=1}^{\tau_k-1}
\phi(x_h^{m,t},a_h^{m,t})\,\varepsilon^{m,t}
\Biggr|
\le
B\sqrt{dM\tau_k}\,\|u\|_{(\Lambda_h^k)^{-1}}.
\]
\end{lemma}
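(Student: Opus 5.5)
The plan is to follow the argument in the proof of \Cref{lem:1} almost verbatim, with the bounded response $2H$ replaced by the generic bound $B$. Keeping the $(\Lambda_h^k)^{-1}$-norm of $u$ rather than passing to $\|u\|_2$ gives the sharper form in the statement.

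Write $\phi_{m,t}\coloneqq\phi(x_h^{m,t},a_h^{m,t})$. First, I would rewrite the quantity as
\[
\sum_{m=1}^M\sum_{t=1}^{\tau_k-1}\varepsilon^{m,t}\,\langle u,\phi_{m,t}\rangle_{(\Lambda_h^k)^{-1}}.
\]
Then the triangle inequality together with $|\varepsilon^{m,t}|\le B$ bounds its absolute value by $B\sum_{m,t}|\langle u,\phi_{m,t}\rangle_{(\Lambda_h^k)^{-1}}|$. Next, Cauchy--Schwarz in the $(\Lambda_h^k)^{-1}$ inner product bounds each summand by $\|u\|_{(\Lambda_h^k)^{-1}}\|\phi_{m,t}\|_{(\Lambda_h^k)^{-1}}$. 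A second Cauchy--Schwarz over the at most $M(\tau_k-1)\le M\tau_k$ index pairs then gives
\[
\sum_{m,t}\|\phi_{m,t}\|_{(\Lambda_h^k)^{-1}}\le\sqrt{M\tau_k\sum_{m,t}\|\phi_{m,t}\|^2_{(\Lambda_h^k)^{-1}}}.
\]

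The remaining ingredient is the trace identity \eqref{eq:trace-bound}. Since $\Lambda_h^k=\lambda I_d+\sum_{m,t<\tau_k}\phi_{m,t}\phi_{m,t}^\top$, we have
\[
\sum_{m,t}\|\phi_{m,t}\|^2_{(\Lambda_h^k)^{-1}}=\mathrm{tr}\bigl(I_d-\lambda(\Lambda_h^k)^{-1}\bigr)\le d.
\]
Combining the three inequalities yields exactly $B\sqrt{dM\tau_k}\,\|u\|_{(\Lambda_h^k)^{-1}}$.

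There is no genuine obstacle. The only points to check are that the Gram matrix $\Lambda_h^k$ is built from precisely the same samples $t\le\tau_k-1$ that appear in the sum, which makes the trace identity exact, and that the counting bound $M(\tau_k-1)\le M\tau_k$ holds. The argument is deterministic and uses nothing about the $\varepsilon^{m,t}$ beyond boundedness, so no measurability or concentration issues arise.
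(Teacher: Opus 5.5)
Your proof is correct and follows the paper's argument step for step. The steps are the triangle inequality with $|\varepsilon^{m,t}|\le B$, Cauchy--Schwarz in the $(\Lambda_h^k)^{-1}$ inner product, a second Cauchy--Schwarz over the $M(\tau_k-1)\le M\tau_k$ summands, and the trace bound $\mathrm{tr}\bigl(I_d-\lambda(\Lambda_h^k)^{-1}\bigr)\le d$, with $\|u\|_{(\Lambda_h^k)^{-1}}$ kept throughout exactly as in the paper.
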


\begin{proof}[Proof of Lemma \ref{lem:hetero_det_perturb}]
Using the triangle inequality,
\[
\begin{aligned}
&\Biggl|
u^\top (\Lambda_h^{k})^{-1}
\sum_{m=1}^M\sum_{t=1}^{\tau_k-1}
\phi(x_h^{m,t},a_h^{m,t})\,\varepsilon^{m,t}
\Biggr|
\\
&\le
B\sum_{m=1}^M\sum_{t=1}^{\tau_k-1}
\bigl|u^\top (\Lambda_h^k)^{-1}\phi(x_h^{m,t},a_h^{m,t})\bigr|
\\
&\le B\sum_{m=1}^M\sum_{t=1}^{\tau_k-1}
\Bigl(\|u\|_{(\Lambda_h^k)^{-1}}\sqrt{\phi(x_h^{m,t},a_h^{m,t})^\top(\Lambda_h^k)^{-1}\phi(x_h^{m,t},a_h^{m,t})}\Bigr)
\\
&\le
B\sqrt{M(\tau_k-1)}\,\|u\|_{(\Lambda_h^k)^{-1}}
\Biggl[
\sum_{m=1}^M\sum_{t=1}^{\tau_k-1}
\phi(x_h^{m,t},a_h^{m,t})^\top(\Lambda_h^k)^{-1}\phi(x_h^{m,t},a_h^{m,t})
\Biggr]^{1/2}.
\end{aligned}
\]
The last two inequalities hold due to the Cauchy--Schwarz inequality. 
Moreover, note that
\[
\begin{aligned}
\sum_{m=1}^M\sum_{t=1}^{\tau_k-1}
\phi(x_h^{m,t},a_h^{m,t})^\top(\Lambda_h^k)^{-1}\phi(x_h^{m,t},a_h^{m,t})
&=
\mathrm{tr}\!\Bigl((\Lambda_h^k)^{-1}(\Lambda_h^k-\lambda I_d)\Bigr)\le d.
\end{aligned}
\]
Combining the two displays and using $M(\tau_k-1)\le M\tau_k$ proves the claim.
\end{proof}

\begin{lemma}[Uniform concentration with agent-dependent transitions]
\label{lem:hetero_vk_self_normalized}
Fix $\bar p\in(0,1)$ and set $\lambda=1$. Suppose the optimistic estimates are formed with bonus parameter
\[
\beta
=
c_\beta^\prime Hd\sqrt{\log\Bigl(1+\frac{dMHT}{\bar p}\Bigr)}
+2H\xi\sqrt{dMT},
\]
where $c_\beta^\prime > 0$ is a sufficiently large positive constant.
Then there exists an absolute constant $C>0$ such that, if we define an event $\mathcal C_{\mathrm{het}}$ as, simultaneously for all steps $h\in[H]$ and rounds $k\in[K]$,
\begin{align*}
    \Biggl\|
\sum_{m=1}^M\sum_{t=1}^{\tau_k-1}
\phi(x_h^{m,t},a_h^{m,t})
&\Bigl[
V_{h+1}^{k}(x_{h+1}^{m,t})-
\mathbb P_{m,h}V_{h+1}^{k}(x_h^{m,t},a_h^{m,t})
\Bigr]
\Biggr\|_{(\Lambda_h^k)^{-1}}
\\
&\quad \le CHd\sqrt{\log\Bigl(1+\frac{(c_\beta^\prime+1)dMHT}{\bar p}\Bigr)}.
\end{align*}
 Then $\mathbb P(\mathcal{C}_{\mathrm{het}})\geq 1-\bar p$.
\end{lemma}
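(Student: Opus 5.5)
The plan follows the proof of \Cref{lem:Vk_self_normalized}: discretize the value-function class, apply a self-normalized bound at each net point, and transfer to the data-dependent $V_{h+1}^k$ by approximation. The only change is that the martingale differences now use each agent's own kernel. For a fixed $V:\mathcal X\to[0,H]$, set
\[
\eta_{m,j}\coloneqq V(x_{h+1}^{m,j})-\mathbb P_{m,h}V(x_h^{m,j},a_h^{m,j}).
\]
Since $x_{h+1}^{m,j}\sim\mathbb P_{m,h}(\cdot\mid x_h^{m,j},a_h^{m,j})$, $\eta_{m,j}$ is a martingale difference with respect to the linearized filtration $\mathcal G_{m,j,h}$. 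It is bounded by $H$ in absolute value, and $\phi(x_h^{m,j},a_h^{m,j})$ is predictable. So the argument of \Cref{lem:self-normalized-uniform} goes through verbatim via \Cref{lem:self_normal}, in the $\ell(m,j)$ ordering with agent-specific conditional means. This gives, for fixed $V$ and with probability $1-p$, simultaneously for all $t$, the bound $2H\sqrt{\tfrac d2\log(1+MT/d)+2\log(1/p)}$ in the $(\check\Lambda_h^t)^{-1}$ norm. At $t=\tau_k$ we have $\check\Lambda_h^{\tau_k}=\Lambda_h^k$, which matches the norm in the statement.

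Next, I would place $V_{h+1}^k$ in the class $\mathcal V$ of \Cref{lem:covering_number}. The bonus scale is at most $B=\beta$. For the weight radius, the proof of \Cref{lem:1} only uses that the regression targets are bounded by $2H$, and with $r_{m,h}\in[0,1]$ this still holds, so $L=2H\sqrt{dMT}$. Taking a union bound over an $\epsilon$-net and over $h\in[H]$ (with $p=\bar p$) controls every net point. For general $V$ with $\|V-V^{(i)}\|_\infty\le\epsilon$, each summand changes by at most $2\epsilon$, so the norm changes by at most $2MT\epsilon$ (since $\Lambda_h^k\succeq I_d$). I would take $\epsilon=Hd/(MT)$ as before.

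The step needing care is the size of $\log\mathcal N_\epsilon$, since $B$ now contains the extra term $2H\xi\sqrt{dMT}$. Using $\xi\le1$, we get $B\le (c'_\beta+2)Hd\sqrt{MT}\sqrt{\log(1+dMHT/\bar p)}$. Then $d^2\log(1+8\sqrt d B^2/\epsilon^2)$ is still $O\bigl(d^2\log(1+(c'_\beta+1)dMHT/\bar p)\bigr)$, because only polynomial factors of $M,T,d,H$ enter inside the logarithm. Absorbing constants as in \eqref{eq:Gamma-p} gives a squared bound $C^2H^2d^2\log(1+(c'_\beta+1)dMHT/\bar p)$, with $C$ independent of $c'_\beta$. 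Taking square roots yields the claimed event $\mathcal C_{\mathrm{het}}$ with probability at least $1-\bar p$.
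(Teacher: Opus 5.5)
Your proposal is correct and follows the same route as the paper, which simply reruns the $\epsilon$-net plus self-normalized argument of \Cref{lem:Vk_self_normalized} with $\mathbb P_{m,h}$ in place of $\mathbb P_h$, using only boundedness and the martingale property under the linearized filtration. Your explicit check that the extra term $2H\xi\sqrt{dMT}$ in $\beta$ enters only polynomially inside the covering-number logarithm, so that $C$ stays independent of $c_\beta'$, is exactly the detail the paper leaves implicit.
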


\begin{proof}[Proof of Lemma \ref{lem:hetero_vk_self_normalized}]
The proof follows the same argument as in Lemma~\ref{lem:Vk_self_normalized}. The only differences are the choice of $\beta$ and the replacement of the common transition kernel $\mathbb P_h$ by the agent-dependent transition kernel $\mathbb P_{m,h}$. Neither difference affects the proof, since the argument only relies on boundedness and the martingale property.


\end{proof}

\begin{lemma}
\label{lem:hetero_confidence}
Assume $\lambda=1$. There exists an absolute constant $c_\beta^\prime>0$ such that, for every confidence level $\bar p\in(0,1)$, if
\[
\beta
=
c_\beta^\prime Hd\sqrt{\log\Bigl(1+\frac{dMHT}{\bar p}\Bigr)}
+2H\xi\sqrt{dMT},
\]
then conditioned on the event $\mathcal C_{\mathrm{het}}$ defined in~\Cref{lem:hetero_vk_self_normalized}, simultaneously for all agents $m\in[M]$, rounds $k\in[K]$, steps $h\in[H]$, policies $\pi$, and state-action pairs $(x,a)$,
\[
\Bigl|
\phi(x,a)^\top w_h^k
-Q_{m,h}^{\pi}(x,a)
-\mathbb P_{h}\bigl(V_{h+1}^{k}-V_{m,h+1}^{\pi}\bigr)(x,a)
\Bigr|
\le
\beta\,\|\phi(x,a)\|_{(\Lambda_h^k)^{-1}}+H\xi.
\]
\end{lemma}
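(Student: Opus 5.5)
Our plan follows the proof of \Cref{lem:2}, now with two misspecification terms to control. Fix $m,k,h,\pi,(x,a)$. In the heterogeneous setting the regression vector is
\[
w_h^k=(\Lambda_h^k)^{-1}\sum_{n=1}^M\sum_{j=1}^{\tau_k-1}\phi_{n,j}\bigl[r_{n,h}(x_h^{n,j},a_h^{n,j})+V_{h+1}^k(x_{h+1}^{n,j})\bigr],
\]
with $\phi_{n,j}:=\phi(x_h^{n,j},a_h^{n,j})$. We compare $w_h^k$ with the reference vector $w_{m,h}^\pi$ of \Cref{lem:hetero_reference_realization} and with $\nu_h^{k,\pi}:=\int(V_{h+1}^k-V_{m,h+1}^\pi)\,\mathrm d\mu_h$. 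By linearity of the reference model,
\[
\phi(x,a)^\top(w_{m,h}^\pi+\nu_h^{k,\pi})=r_h(x,a)+\mathbb P_hV_{h+1}^k(x,a).
\]
Also, \Cref{lem:hetero_reference_realization} gives $|Q_{m,h}^\pi-\phi^\top w_{m,h}^\pi|\le H\xi$, and $\phi^\top\nu_h^{k,\pi}=\mathbb P_h(V_{h+1}^k-V_{m,h+1}^\pi)$ exactly. It therefore suffices to show
\[
\bigl|\phi(x,a)^\top\bigl(w_h^k-w_{m,h}^\pi-\nu_h^{k,\pi}\bigr)\bigr|\le\beta\|\phi(x,a)\|_{(\Lambda_h^k)^{-1}}.
\]
The additive $H\xi$ in the statement then comes from the triangle inequality.

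For each sample $(n,j)$, split the response as
\[
r_{n,h}+V_{h+1}^k(x')=\bigl[r_h+\mathbb P_hV_{h+1}^k\bigr]+\bigl[V_{h+1}^k(x')-\mathbb P_{n,h}V_{h+1}^k\bigr]+\bigl[(r_{n,h}-r_h)+(\mathbb P_{n,h}-\mathbb P_h)V_{h+1}^k\bigr].
\]
The first bracket equals $\phi_{n,j}^\top(w_{m,h}^\pi+\nu_h^{k,\pi})$, so, as in \eqref{eq:wh-diff-decomp}, it contributes $(w_{m,h}^\pi+\nu_h^{k,\pi})-\lambda(\Lambda_h^k)^{-1}(w_{m,h}^\pi+\nu_h^{k,\pi})$. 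The first piece cancels in the difference, leaving the regularization residue. Since $w_{m,h}^\pi+\nu_h^{k,\pi}=\theta_h+\int V_{h+1}^k\,\mathrm d\mu_h$ has norm at most $2H\sqrt d$ and $\Lambda_h^k\succeq I_d$, this residue is bounded after pairing with $\phi(x,a)$ by $2H\sqrt d\,\|\phi(x,a)\|_{(\Lambda_h^k)^{-1}}$, exactly as in \eqref{eq:residule}.

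The second bracket is the martingale noise with the agent-specific kernel. Cauchy--Schwarz together with the event $\mathcal C_{\mathrm{het}}$ of \Cref{lem:hetero_vk_self_normalized} bounds it by
\[
CHd\sqrt{\log\bigl(1+(c_\beta'+1)dMHT/\bar p\bigr)}\,\|\phi(x,a)\|_{(\Lambda_h^k)^{-1}}.
\]
The third bracket is deterministic and bounded in absolute value by $\xi+H\xi\le 2H\xi$: the total-variation bound gives $|(\mathbb P_{n,h}-\mathbb P_h)V|\le H\xi$ for $0\le V\le H$, at least up to the convention on TV normalization. Applying \Cref{lem:hetero_det_perturb} with $B=2H\xi$ and $u=\phi(x,a)$ bounds its contribution by $2H\xi\sqrt{dM\tau_k}\,\|\phi(x,a)\|_{(\Lambda_h^k)^{-1}}\le 2H\xi\sqrt{dMT}\,\|\phi(x,a)\|_{(\Lambda_h^k)^{-1}}$, since $\tau_k\le T$. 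This is where the extra term in $\beta$ comes from.

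Adding the three bounds gives the coefficient $2H\sqrt d+CHd\sqrt{\log(1+(c_\beta'+1)dMHT/\bar p)}+2H\xi\sqrt{dMT}$. Absorbing the first two into $c_\beta'Hd\sqrt{\log(1+dMHT/\bar p)}$ uses the same constant-choosing argument as at the end of the proof of \Cref{lem:2}, e.g.\ $c_\beta'=8(C+1)^2$. The main obstacle is bookkeeping in the third bracket. The misspecification error must be measured against the common reference $\mathbb P_h$ for every agent $n$, not only the fixed $m$, and the TV constant must yield exactly $2H\xi$ so that it matches the stated $\beta$. There is a subtlety in the order of constants: $\beta$ enters the covering argument of \Cref{lem:hetero_vk_self_normalized} only logarithmically, through $B$ in \Cref{lem:covering_number}, so $C$ does not depend on $c_\beta'$ and the absorption is legitimate.
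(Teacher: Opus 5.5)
Your proposal is correct and follows the paper's proof essentially step for step. It uses the same split of each response into three pieces: the reference part $r_h+\mathbb P_hV_{h+1}^k=\phi^\top(w_{m,h}^\pi+\nu_{m,h}^{k,\pi})$, which leaves only the regularization residue bounded by $2H\sqrt d\,\|\phi(x,a)\|_{(\Lambda_h^k)^{-1}}$; the agent-kernel noise $\eta_{n,t,h}^k$, controlled on $\mathcal C_{\mathrm{het}}$; and the misspecification term $\Delta_{n,t,h}^k$ with $|\Delta_{n,t,h}^k|\le 2H\xi$, handled by \Cref{lem:hetero_det_perturb}. It then adds the $H\xi$ from \Cref{lem:hetero_reference_realization} and absorbs constants into $c_\beta'$, as the paper does. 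Your worry about TV normalization is harmless: for $0\le V\le H$, one gets $|(\mathbb P_{n,h}-\mathbb P_h)V|\le H\xi$ whether the TV distance is read as $\sup_{\mathcal B}|(\mathbb P_{n,h}-\mathbb P_h)(\mathcal B)|$ (via the layer-cake formula) or as the total variation mass of the difference.
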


\begin{proof}[Proof of Lemma \ref{lem:hetero_confidence}]
Fix $m,k,h,\pi$, and $(x,a)$. Define
\[
\nu_{m,h}^{k,\pi}
:=
\int_{\mathcal X}\bigl(V_{h+1}^{k}(x')-V_{m,h+1}^{\pi}(x')\bigr)\,\mu_h(\mathrm d x').
\]
Then
\begin{equation}
\label{pvpi}
    \mathbb P_{h}\bigl(V_{h+1}^{k}-V_{m,h+1}^{\pi}\bigr)(x,a)
=
\phi(x,a)^\top \nu_{m,h}^{k,\pi}.
\end{equation}
In the misspecified setting, we have
\[
\begin{aligned}
w_h^k&=(\Lambda_h^k)^{-1}\sum_{n=1}^M\sum_{t=1}^{\tau_k-1}\phi(x_h^{n,t},a_h^{n,t})\bigl[r_{n,h}(x_h^{n,t},a_h^{n,t})+V_{h+1}^{k}(x_{h+1}^{n,t})\bigr]\\
&=
(\Lambda_h^k)^{-1}
\sum_{n=1}^M\sum_{t=1}^{\tau_k-1}
\phi(x_h^{n,t},a_h^{n,t})
\Bigl[
\Delta_{n,t,h}^{k}+\eta_{n,t,h}^{k}
+r_{h}(x_h^{n,t},a_h^{n,t})
+\mathbb P_{h}V_{h+1}^k(x_h^{n,t},a_h^{n,t})
\Bigr],
\end{aligned}
\]
where
\[
\Delta_{n,t,h}^{k}
:=
\bigl[r_{n,h}-r_{\mathrm{ref},h}\bigr](x_h^{n,t},a_h^{n,t})
+\bigl(\mathbb P_{n,h}-\mathbb P_{h}\bigr)V_{h+1}^{k}(x_h^{n,t},a_h^{n,t}),
\]
\[
\eta_{n,t,h}^{k}
:=
V_{h+1}^{k}(x_{h+1}^{n,t})-\mathbb P_{n,h}V_{h+1}^{k}(x_h^{n,t},a_h^{n,t}).
\]
According to the definition of $w_{m,h}^{\pi}$ and $\nu_{m,h}^{k,\pi}$, we have
\[
r_{h}(x,a)+\mathbb P_{h}V_{h+1}^k(x,a)
=
\phi(x,a)^\top\bigl(w_{m,h}^{\pi}+\nu_{m,h}^{k,\pi}\bigr),
\]
and then together with \eqref{pvpi},

\begin{align}
&\phi(x,a)^\top w_h^k
-\phi(x,a)^\top w_{m,h}^{\pi}
-\mathbb P_{h}\bigl(V_{h+1}^{k}-V_{m,h+1}^{\pi}\bigr)(x,a)
\notag\\
&=
\phi(x,a)^\top(\Lambda_h^k)^{-1}
\sum_{n=1}^M\sum_{t=1}^{\tau_k-1}
\phi(x_h^{n,t},a_h^{n,t})\,\Delta_{n,t,h}^{k}+
\phi(x,a)^\top(\Lambda_h^k)^{-1}
\sum_{n=1}^M\sum_{t=1}^{\tau_k-1}
\phi(x_h^{n,t},a_h^{n,t})\,\eta_{n,t,h}^{k}
\notag\\
&\quad + \phi(x,a)^\top(\Lambda_h^k)^{-1}
\sum_{n=1}^M\sum_{t=1}^{\tau_k-1}
\phi(x_h^{n,t},a_h^{n,t})\phi(x_h^{n,t},a_h^{n,t})^\top\bigl(w_{m,h}^{\pi}+\nu_{m,h}^{k,\pi}\bigr)\notag\\
&\quad -
\phi(x,a)^\top\bigl(w_{m,h}^{\pi}+\nu_{m,h}^{k,\pi}\bigr)\notag\\
&=
\phi(x,a)^\top(\Lambda_h^k)^{-1}
\sum_{n=1}^M\sum_{t=1}^{\tau_k-1}
\phi(x_h^{n,t},a_h^{n,t})\,\Delta_{n,t,h}^{k}+
\phi(x,a)^\top(\Lambda_h^k)^{-1}
\sum_{n=1}^M\sum_{t=1}^{\tau_k-1}
\phi(x_h^{n,t},a_h^{n,t})\,\eta_{n,t,h}^{k}
\notag\\
&\quad-
\lambda\,\phi(x,a)^\top(\Lambda_h^k)^{-1}\bigl(w_{m,h}^{\pi}+\nu_{m,h}^{k,\pi}\bigr). \label{diff}
\end{align}

In the last equation, we use
$$\sum_{n=1}^M\sum_{t=1}^{\tau_k-1}
\phi(x_h^{n,t},a_h^{n,t})\phi(x_h^{n,t},a_h^{n,t})^\top = \Lambda_h^k - \lambda I_d.$$
For the first term in \eqref{diff},~\Cref{ass:small_heterogeneity} and $0\le V_{h+1}^k\le H$ imply
$|\Delta_{n,t,h}^{k}|\le 2H\xi$, so Lemma~\ref{lem:hetero_det_perturb} gives
\[
\Biggl|
\phi(x,a)^\top(\Lambda_h^k)^{-1}
\sum_{n=1}^M\sum_{t=1}^{\tau_k-1}
\phi(x_h^{n,t},a_h^{n,t})\,\Delta_{n,t,h}^{k}
\Biggr|
\le
2H\xi\sqrt{dMT}\,\|\phi(x,a)\|_{(\Lambda_h^k)^{-1}}.
\]
For the second martingale term in \eqref{diff}, Cauchy-Schwarz inequality and Lemma~\ref{lem:hetero_vk_self_normalized} yields
\[
\begin{aligned}
&\Biggl|
\phi(x,a)^\top(\Lambda_h^k)^{-1}
\sum_{n=1}^M\sum_{t=1}^{\tau_k-1}
\phi(x_h^{n,t},a_h^{n,t})\,\eta_{n,t,h}^{k}
\Biggr|\\
&\le
\|\phi(x,a)\|_{(\Lambda_h^k)^{-1}}
\Biggl\|
\sum_{n=1}^M\sum_{t=1}^{\tau_k-1}
\phi(x_h^{n,t},a_h^{n,t})\,\eta_{n,t,h}^{k}
\Biggr\|_{(\Lambda_h^k)^{-1}}
\\
&\le
CHd\sqrt{\log\Bigl(1+\frac{(c_\beta+1)dMHT}{\bar p}\Bigr)}\,
\|\phi(x,a)\|_{(\Lambda_h^k)^{-1}}.
\end{aligned}
\]
Moreover, for the last term in \eqref{diff}
\[
\begin{aligned}
\bigl\|w_{m,h}^{\pi}+\nu_{m,h}^{k,\pi}\bigr\|_2
&=
\Bigl\|\theta_h+\int_{\mathcal X}V_{h+1}^{k}(x')\,\mu_h(\mathrm d x')\Bigr\|_2
\le
\sqrt d+H\sqrt d
\le 2H\sqrt d,
\end{aligned}
\]
so by Cauchy-Schwarz inequality and $\Lambda_h^k \succeq \lambda I$,
\begin{align*}
    \lambda\,\bigl|\phi(x,a)^\top(\Lambda_h^k)^{-1}(w_{m,h}^{\pi}+\nu_{m,h}^{k,\pi})\bigr|
&\le
\lambda\,\|\phi(x,a)\|_{(\Lambda_h^k)^{-1}}
\|w_{m,h}^{\pi}+\nu_{m,h}^{k,\pi}\|_{(\Lambda_h^k)^{-1}}\\
&\le
2H\sqrt{d\lambda}\,\|\phi(x,a)\|_{(\Lambda_h^k)^{-1}}.
\end{align*}
Finally, Lemma~\ref{lem:hetero_reference_realization} gives
\[
\bigl|\phi(x,a)^\top w_{m,h}^{\pi}-Q_{m,h}^{\pi}(x,a)\bigr|\le H\xi.
\]
Combining this with the preceding upper bounds for the three terms in \eqref{diff}, and using $\tau_k \le T$ and $\lambda = 1$, we obtain
\[
\begin{aligned}
&\Bigl|
\phi(x,a)^\top w_h^k
-Q_{m,h}^{\pi}(x,a)
-\mathbb P_{h}\bigl(V_{h+1}^{k}-V_{m,h+1}^{\pi}\bigr)(x,a)
\Bigr|
\\
&\le
\Biggl(
CHd\sqrt{\log\Bigl(1+\frac{(c_\beta^\prime+1)dMHT}{\bar p}\Bigr)}
+2H\xi\sqrt{dMT}
+2H\sqrt d
\Biggr)
\|\phi(x,a)\|_{(\Lambda_h^k)^{-1}}+H\xi.
\end{aligned}
\]
Choosing $c_\beta^\prime$ sufficiently large absorbs the first and third terms into the displayed value of $\beta$, which proves the claim.
\end{proof}

\begin{lemma}
\label{lem:hetero_biased_optimism}
Under the event $\mathcal C_{\mathrm{het}}$, for every agent $m\in[M]$, round $k\in[K]$, step $h\in[H]$, and state-action pair $(x,a)$,
\[
Q_h^k(x,a)\ge Q_{m,h}^{\star}(x,a)-H(H+1-h)\xi,
\quad
V_h^k(x)\ge V_{m,h}^{\star}(x)-H(H+1-h)\xi.
\]
\end{lemma}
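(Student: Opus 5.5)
Fix an agent $m$ and a round $k$; we argue by backward induction on $h$, mirroring the proof of Lemma~\ref{lem:3}, and carry the misspecification error as an additive slack that grows by $H\xi$ per step. Throughout we work on $\mathcal C_{\mathrm{het}}$, where Lemma~\ref{lem:hetero_confidence} holds simultaneously for all $m,k,h,\pi$ and $(x,a)$; we apply it with $\pi=\pi_m^\star$, so that $Q_{m,h}^{\pi}=Q_{m,h}^\star$ and $V_{m,h+1}^{\pi}=V_{m,h+1}^\star$. The induction claim at step $h$ is exactly the displayed pair of inequalities with slack $\epsilon_h:=H(H+1-h)\xi$. The base case is $h=H+1$, where $V_{H+1}^k\equiv V_{m,H+1}^\star\equiv 0$ and $\epsilon_{H+1}=0$.

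For the induction step, assume $V_{h+1}^k(x')\ge V_{m,h+1}^\star(x')-\epsilon_{h+1}$ for all $x'$. Lemma~\ref{lem:hetero_confidence} gives
\[
\phi(x,a)^\top w_h^k \ge Q_{m,h}^\star(x,a)+\mathbb P_h\bigl(V_{h+1}^k-V_{m,h+1}^\star\bigr)(x,a)-\beta\|\phi(x,a)\|_{(\Lambda_h^k)^{-1}}-H\xi .
\]
We then lower bound the middle term by $-\epsilon_{h+1}$. Here $\mathbb P_h$ is the reference kernel $\langle\phi,\mu_h\rangle$, which is a signed measure, and this is the one delicate point. Since $\langle\phi(x,a),\mu_h(\cdot)\rangle$ is within total variation $\xi$ of a genuine probability kernel $\mathbb P_{m,h}$, it could have small negative mass. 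I would first try to avoid this issue by writing
\[
\mathbb P_h f=\mathbb P_{m,h}f+(\mathbb P_h-\mathbb P_{m,h})f
\]
with $f=V_{h+1}^k-V_{m,h+1}^\star+\epsilon_{h+1}\ge 0$. The correction term is then bounded by $\xi\|f\|_\infty$, which costs an extra $O(H\xi)$, and this would have to be absorbed. A cleaner route, if it is intended, is to use that the reference model is itself a valid linear MDP, so that $\langle\phi,\mu_h\rangle$ is a probability measure. Then positivity and total mass one give $\mathbb P_h(V_{h+1}^k-V_{m,h+1}^\star)\ge-\epsilon_{h+1}$ directly, and I would adopt this reading so that the constants match the statement.

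Adding the bonus $\beta\|\phi\|_{(\Lambda_h^k)^{-1}}$, the unclipped score is at least $Q_{m,h}^\star-\epsilon_{h+1}-H\xi=Q_{m,h}^\star-\epsilon_h$, because $\epsilon_{h+1}+H\xi=\epsilon_h$. We then handle the projection $\Pi_{[0,H]}$ in two cases. Upper clipping yields $H\ge Q_{m,h}^\star$. Lower clipping yields $0\ge Q_{m,h}^\star-\epsilon_h$ whenever the right side is nonpositive, and if it is positive, monotonicity of the projection preserves the bound. In every case we get $Q_h^k\ge Q_{m,h}^\star-\epsilon_h$. Taking the maximum over $a$ gives $V_h^k(x)\ge \max_a Q_{m,h}^\star(x,a)-\epsilon_h=V_{m,h}^\star(x)-\epsilon_h$, which closes the induction.

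The main obstacle is the lower bound on $\mathbb P_h(V_{h+1}^k-V_{m,h+1}^\star)$ under the signed reference measure. The rest is a direct transcription of Lemma~\ref{lem:3}'s induction with the additive $H\xi$ slack accumulated once per step.
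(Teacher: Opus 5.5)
Your proposal is correct and follows the paper's proof essentially step for step: backward induction on $h$, Lemma~\ref{lem:hetero_confidence} applied with $\pi=\pi_m^\star$, the bound $\mathbb P_h(V_{h+1}^k-V_{m,h+1}^\star)\ge -H(H-h)\xi$ from the induction hypothesis, and one extra $H\xi$ of slack per step. Starting the induction at $h=H+1$ instead of the paper's $h=H$ changes nothing, and the paper takes the middle step silently by treating the reference $\mathbb P_h=\langle\phi,\mu_h\rangle$ as a genuine transition kernel — the reading you adopt — while you are right that the total-variation route without that reading only gives the bound with a larger per-step constant.
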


\begin{proof}[Proof of Lemma \ref{lem:hetero_biased_optimism}]
We argue by backward induction on $h$.

For $h=H$, we have $V_{H+1}^k\equiv V_{m,H+1}^{\star}\equiv 0$. Applying Lemma~\ref{lem:hetero_confidence} with $\pi=\pi_m^{\star}$ gives
\[
\phi(x,a)^\top w_H^k
\ge
Q_{m,H}^{\star}(x,a)-\beta\,\|\phi(x,a)\|_{(\Lambda_H^k)^{-1}}-H\xi.
\]
Hence, by the definition of $Q_H^k$,
\[
Q_H^k(x,a)=\Pi_{[0,H]}\Bigl(\phi(x,a)^\top w_H^k+\beta\,\|\phi(x,a)\|_{(\Lambda_H^k)^{-1}}\Bigr)
\ge
Q_{m,H}^{\star}(x,a)-H\xi.
\]
and thus
$$ V_H^k(x) =\max\limits_a Q^k_H(x,a)  \ge \max\limits_a Q^*_{m,H}(x,a)-H\xi = V_{m,H}^*(x)-H\xi.$$
Now assume the claim holds at step $h+1$. Then, by induction, for all $(x,k) \in \mathcal{X} \times[K]$, we have
\[
V_{h+1}^{k}(x)
\ge
V_{m,h+1}^{\star}(x)-H(H-h)\xi.
\]
Therefore,
\[
\mathbb P_{h}\bigl(V_{h+1}^{k}-V_{m,h+1}^{\star}\bigr)(x,a)
\ge -H(H-h)\xi.
\]
Applying Lemma~\ref{lem:hetero_confidence} with $\pi=\pi_m^{\star}$ yields
\[
\phi(x,a)^\top w_h^k
\ge
Q_{m,h}^{\star}(x,a)-H(H-h)\xi-H\xi-\beta\,\|\phi(x,a)\|_{(\Lambda_h^k)^{-1}}.
\]
Thus,
\[
Q_h^k(x,a) =\Pi_{[0,H]}\Bigl(\phi(x,a)^\top w_h^k+\beta\,\|\phi(x,a)\|_{(\Lambda_h^k)^{-1}}\Bigr)
\ge
Q_{m,h}^{\star}(x,a)-H(H+1-h)\xi.
\]
and thus
$$ V_h^k(x) =\max\limits_a Q^k_h(x,a)  \ge \max\limits_a Q^*_{m,h}(x,a)-H(H+1-h)\xi = V_{m,h}^*(x)-H(H+1-h)\xi.$$
This completes the proof for step $h$, and hence the conclusion follows by backward induction.
\end{proof}

For the trajectory generated by agent $m$ in episode $t$, define
\begin{align}
\delta_h^{m,t}
&:=
V_h^{k(t)}(x_h^{m,t})-V_{m,h}^{\pi^{k(t)}}(x_h^{m,t}),
\nonumber\\
\mathcal G_{m,t,h}
&:=
\sigma\!\Big(
\{(x_\ell^{n,j},a_\ell^{n,j})\}_{(n,j)<(m,t),\,\ell\in[H]}
\cup
\{(x_\ell^{m,t},a_\ell^{m,t})\}_{\ell\le h}
\Big),
\nonumber\\
\zeta_{h+1}^{m,t}
&:=
\mathbb E\bigl[\delta_{h+1}^{m,t}\mid \mathcal G_{m,t,h}\bigr]-\delta_{h+1}^{m,t}.
\nonumber
\end{align}
Equivalently,
\[
\zeta_{h+1}^{m,t}
=
\mathbb P_{m,h}\bigl(V_{h+1}^{k(t)}-V_{m,h+1}^{\pi^{k(t)}}\bigr)(x_h^{m,t},a_h^{m,t})
-\bigl(V_{h+1}^{k(t)}-V_{m,h+1}^{\pi^{k(t)}}\bigr)(x_{h+1}^{m,t}).
\]

\begin{lemma}
\label{lem:hetero_one_step}
Conditioned on $\mathcal C_{\mathrm{het}}$, for every agent $m\in[M]$, episode $t\in[T]$, and step $h\in[H]$,
\[
\delta_h^{m,t}
\le
\delta_{h+1}^{m,t}+\zeta_{h+1}^{m,t}
+2\beta\,\|\phi(x_h^{m,t},a_h^{m,t})\|_{(\Lambda_h^{k(t)})^{-1}}
+3H\xi.
\]
\end{lemma}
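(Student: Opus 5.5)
We mirror the proof of \Cref{lem:4}, with \Cref{lem:hetero_confidence} used in place of \Cref{lem:2} and the reference kernel $\mathbb P_h$ swapped for the agent's true kernel $\mathbb P_{m,h}$ at the end. Fix $m,t,h$ and write $k=k(t)$. Since $\pi^{k}$ is greedy with respect to $Q_h^k$, we have $V_h^k(x_h^{m,t})=Q_h^k(x_h^{m,t},a_h^{m,t})$. Since $a_h^{m,t}=\pi_h^k(x_h^{m,t})$, we also have $V_{m,h}^{\pi^k}(x_h^{m,t})=Q_{m,h}^{\pi^k}(x_h^{m,t},a_h^{m,t})$. Hence
\[
\delta_h^{m,t}=Q_h^k(x_h^{m,t},a_h^{m,t})-Q_{m,h}^{\pi^k}(x_h^{m,t},a_h^{m,t}).
\]

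\emph{Unclipped case.} Suppose the score $\phi^\top w_h^k+\beta\|\phi\|_{(\Lambda_h^k)^{-1}}$ is nonnegative. The upper clipping only decreases $Q_h^k$, so $Q_h^k\le \phi^\top w_h^k+\beta\|\phi\|_{(\Lambda_h^k)^{-1}}$. Applying \Cref{lem:hetero_confidence} with $\pi=\pi^k$ gives
\[
\delta_h^{m,t}\le \mathbb P_h\bigl(V_{h+1}^k-V_{m,h+1}^{\pi^k}\bigr)(x_h^{m,t},a_h^{m,t})+2\beta\|\phi(x_h^{m,t},a_h^{m,t})\|_{(\Lambda_h^k)^{-1}}+H\xi .
\]

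\emph{Clipped case.} Suppose instead the score is negative, so $Q_h^k=0$. Then $\delta_h^{m,t}=-Q_{m,h}^{\pi^k}=-r_{m,h}-\mathbb P_{m,h}V^{\pi^k}_{m,h+1}$. Because $r_{m,h}\ge0$ and $V_{h+1}^k\ge 0$, this is at most $\mathbb P_{m,h}(V_{h+1}^k-V_{m,h+1}^{\pi^k})(x_h^{m,t},a_h^{m,t})$, which already has the true kernel and needs no further work.

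\emph{Kernel swap in the unclipped case.} The remaining step is to replace $\mathbb P_h$ by $\mathbb P_{m,h}$. The function $g=V_{h+1}^k-V_{m,h+1}^{\pi^k}$ takes values in $[-H,H]$. By \Cref{ass:small_heterogeneity}, $\|\mathbb P_{m,h}(\cdot\mid x,a)-\mathbb P_h(\cdot\mid x,a)\|_{\mathrm{TV}}\le\xi$, so
\[
|(\mathbb P_h-\mathbb P_{m,h})g|\le 2H\xi .
\]
This bound is valid both under the convention that TV is the sup over sets, which gives $|\int g\,d(\mathbb P-\mathbb P')|\le 2\|g\|_\infty\xi$, and under the half-$L_1$ convention. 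Here $\mathbb P_h(\cdot\mid x,a)$ is a signed measure in general, and the bound $\|g\|_\infty\le H$ is the only property needed. Combining with the unclipped bound gives the additive error $H\xi+2H\xi=3H\xi$, matching the statement. If the TV convention were looser, one could instead split $g$ into its two nonnegative parts, each with range in $[0,H]$, and bound each separately, which again gives $2H\xi$.

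\emph{Conclusion.} In both cases,
\[
\delta_h^{m,t}\le \mathbb P_{m,h}\bigl(V_{h+1}^k-V_{m,h+1}^{\pi^k}\bigr)(x_h^{m,t},a_h^{m,t})+2\beta\|\phi\|_{(\Lambda_h^k)^{-1}}+3H\xi .
\]
The first term equals $\mathbb E[\delta_{h+1}^{m,t}\mid\mathcal G_{m,t,h}]=\delta_{h+1}^{m,t}+\zeta_{h+1}^{m,t}$ by the definition of $\zeta$ with the agent kernel $\mathbb P_{m,h}$. The main obstacle is making sure the conditional expectation is taken under $\mathbb P_{m,h}$, which is the kernel that generates agent $m$'s data, while \Cref{lem:hetero_confidence} is stated under the reference kernel $\mathbb P_h$. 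The kernel-swap step is exactly what reconciles the two, at a cost of $2H\xi$.
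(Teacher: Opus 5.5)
Your proposal is correct and follows the paper's proof essentially step for step: the greedy identity, \Cref{lem:hetero_confidence} applied with $\pi=\pi^{k(t)}$, a $2H\xi$ swap from $\mathbb P_h$ to $\mathbb P_{m,h}$, a separate treatment of the lower-clipping case, and the martingale identity under $\mathbb P_{m,h}$. The only cosmetic difference is in the swap: you bound $(\mathbb P_h-\mathbb P_{m,h})g$ for the difference $g\in[-H,H]$ directly by $2\|g\|_\infty\xi$, whereas the paper splits it into the two value functions in $[0,H]$, each contributing $H\xi$.
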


\begin{proof}[Proof of Lemma \ref{lem:hetero_one_step}]
Fix $m,t$. Let $k = k(t)$ be the round index that the $t$-th episode is in. Since $\pi^{k(t)}$ is greedy with respect to $Q_h^{k(t)}$,
\[
V_h^{k(t)}(x_h^{m,t})=Q_h^{k(t)}(x_h^{m,t},a_h^{m,t}),
\qquad
V_{m,h}^{\pi^{k(t)}}(x_h^{m,t})=Q_{m,h}^{\pi^{k(t)}}(x_h^{m,t},a_h^{m,t}).
\]
Hence
\[
\delta_h^{m,t}
=
Q_h^{k(t)}(x_h^{m,t},a_h^{m,t})-Q_{m,h}^{\pi^{k(t)}}(x_h^{m,t},a_h^{m,t}).
\]
Under the event $\mathcal C_{\mathrm{het}}$, Lemma~\ref{lem:hetero_confidence} implies
\begin{equation}
\label{deltahet}
    \delta_h^{m,t}
\le
\mathbb P_{h}\bigl(V_{h+1}^{k(t)}-V_{m,h+1}^{\pi^{k(t)}}\bigr)(x_h^{m,t},a_h^{m,t})
+2\beta\,\|\phi(x_h^{m,t},a_h^{m,t})\|_{(\Lambda_h^{k})^{-1}}
+H\xi.
\end{equation}
Now compare the reference and local transition operators. For any $(x,a)$,
\[
\begin{aligned}
&\mathbb P_{h}\bigl(V_{h+1}^{k(t)}-V_{m,h+1}^{\pi^{k(t)}}\bigr)(x,a)
-\mathbb P_{m,h}\bigl(V_{h+1}^{k(t)}-V_{m,h+1}^{\pi^{k(t)}}\bigr)(x,a)
\\
&=
\bigl(\mathbb P_{h}-\mathbb P_{m,h}\bigr)V_{h+1}^{k(t)}(x,a)
+\bigl(\mathbb P_{m,h}-\mathbb P_{h}\bigr)V_{m,h+1}^{\pi^{k(t)}}(x,a).
\end{aligned}
\]
Since both value functions lie in $[0,H]$, the right-hand side has upper bound at most $2H\xi$. Therefore,
\[
\mathbb P_{h}\bigl(V_{h+1}^{k(t)}-V_{m,h+1}^{\pi^{k(t)}}\bigr)(x_h^{m,t},a_h^{m,t})
\le
\mathbb P_{m,h}\bigl(V_{h+1}^{k(t)}-V_{m,h+1}^{\pi^{k(t)}}\bigr)(x_h^{m,t},a_h^{m,t})+2H\xi.
\]
Substituting this inequality into \eqref{deltahet} gives
\[
\delta_h^{m,t}
\le
\mathbb P_{m,h}\bigl(V_{h+1}^{k(t)}-V_{m,h+1}^{\pi^{k(t)}}\bigr)(x_h^{m,t},a_h^{m,t})
+2\beta\,\|\phi(x_h^{m,t},a_h^{m,t})\|_{(\Lambda_h^{k})^{-1}}
+3H\xi.
\]
The only additional point caused by the lower clipping is the case where the unprojected optimistic score is negative. In this case $Q_h^{k(t)}=0$, so
\[
\begin{aligned}
\delta_h^{m,t}
&=
-Q_{m,h}^{\pi^{k(t)}}(x_h^{m,t},a_h^{m,t})
\\
&=
-r_{m,h}(x_h^{m,t},a_h^{m,t})
-
P_{m,h}V_{m,h+1}^{\pi^{k(t)}}(x_h^{m,t},a_h^{m,t})
\\
&\le
P_{m,h}\bigl(
V_{h+1}^{k(t)}
-
V_{m,h+1}^{\pi^{k(t)}}
\bigr)(x_h^{m,t},a_h^{m,t}).
\end{aligned}
\]
because $r_{m,h}\ge0$ and $V_{h+1}^{k(t)}\ge0$. Hence the same recursion follows, since the bonus term and the heterogeneity error term are nonnegative.

Finally, by noting that
\[
\mathbb P_{m,h}\bigl(V_{h+1}^{k(t)}-V_{m,h+1}^{\pi^{k(t)}}\bigr)(x_h^{m,t},a_h^{m,t})
=
\mathbb E\bigl[\delta_{h+1}^{m,t}\mid\mathcal G_{m,t,h}\bigr]
=
\delta_{h+1}^{m,t}+\zeta_{h+1}^{m,t},
\]
we finish the proof of the recursion.
\end{proof}

\subsection{Proof of Theorem~\ref{theo_hetero}}

\begin{proof}[Proof of Theorem~\ref{theo_hetero}]
We first note that similar to Lemma~\ref{lem:5}, the following event $\mathcal{E}_{\mathrm{het}}$ holds with probability at least $1-\bar{p}$:
\begin{equation}
\label{hetmartingale}
    \sum_{m=1}^M\sum_{t=1}^T\sum_{h=1}^H\zeta_{h+1}^{m,t}
\le
\sqrt{8H^3MT\log\Bigl(\frac{2}{\bar p}\Bigr)}.
\end{equation}
Then the event $\mathcal C_{\mathrm{het}}\cap\mathcal E_{\mathrm{het}}$ holds with probability at least $1-2\bar p$. We now prove \Cref{theo_hetero} under the event $\mathcal C_{\mathrm{het}}\cap\mathcal E_{\mathrm{het}}$, under which all lemmas in \Cref{heterolemma} hold.

Lemma~\ref{lem:hetero_biased_optimism} with $h=1$ gives
\[
V_{m,1}^{\star}(x_1^{m,t})
\le
V_1^{k(t)}(x_1^{m,t})+H^2\xi.
\]
Hence
\begin{align}
\mathrm{Regret}(T)
&=
\sum_{m=1}^M\sum_{t=1}^T
\Bigl(V_{m,1}^{\star}(x_1^{m,t})-V_{m,1}^{\pi^{k(t)}}(x_1^{m,t})\Bigr)
\notag\\
&\le
\sum_{m=1}^M\sum_{t=1}^T
\Bigl(V_1^{k(t)}(x_1^{m,t})-V_{m,1}^{\pi^{k(t)}}(x_1^{m,t})\Bigr)
+H^2MT\xi
\notag\\
&=
\sum_{m=1}^M\sum_{t=1}^T \delta_1^{m,t}+H^2MT\xi. \label{regretdelta}
\end{align}
Next, summing Lemma~\ref{lem:hetero_one_step} over $h=1,\ldots,H$ and using $\delta_{H+1}^{m,t}=0$ yields
\[
\delta_1^{m,t}
\le
\sum_{h=1}^H\zeta_{h+1}^{m,t}
+2\beta\sum_{h=1}^H\|\phi(x_h^{m,t},a_h^{m,t})\|_{(\Lambda_h^{k(t)})^{-1}}
+3H^2\xi.
\]
Summing over all $(m,t)$ and regrouping by rounds,
\[
\begin{aligned}
\sum_{m=1}^M\sum_{t=1}^T\delta_1^{m,t}
\le
\sum_{m=1}^M\sum_{t=1}^T\sum_{h=1}^H\zeta_{h+1}^{m,t}
+
2\beta
\sum_{h=1}^H\sum_{k=1}^K\sum_{t=\tau_k}^{\tau_{k+1}-1}\sum_{m=1}^M
\|\phi(x_h^{m,t},a_h^{m,t})\|_{(\Lambda_h^k)^{-1}}
+3H^2MT\xi.
\end{aligned}
\]
Therefore, together with \eqref{regretdelta}, we have
\[
\begin{aligned}
\mathrm{Regret}(T)
\le
&\sum_{m=1}^M\sum_{t=1}^T\sum_{h=1}^H\zeta_{h+1}^{m,t}+
2\beta
\sum_{h=1}^H\sum_{k=1}^K\sum_{t=\tau_k}^{\tau_{k+1}-1}\sum_{m=1}^M
\|\phi(x_h^{m,t},a_h^{m,t})\|_{(\Lambda_h^k)^{-1}}
+4H^2MT\xi.
\end{aligned}
\]
Applying the martingale bound in \eqref{hetmartingale} and Lemma~\ref{lem:6} (which is unchanged because it depends only on the Gram matrices and the synchronization rule) gives
\[
\begin{aligned}
\mathrm{Regret}(T)
\le
&
\sqrt{8H^3MT\log\Bigl(\frac{4}{\bar{p}}\Bigr)}+4H^2MT\xi
\\
&+
2\beta H\Biggl(
\sqrt{6dMT\log\Bigl(1+\frac{MT}{d}\Bigr)}
+\frac{Md}{\log 3}(\sqrt{\gamma\log\gamma}+1)\,\log\Bigl(1+\frac{MT}{d}\Bigr)
\Biggr).
\end{aligned}
\]
Substituting
\[
\beta
=
c_\beta^\prime Hd\sqrt{\log\Bigl(1+\frac{2dMHT}{\bar{p}}\Bigr)}
+2H\xi\sqrt{dMT}
\]
and collecting terms yield
\begin{align}
&\mathrm{Regret}(T)
\le\;
\bigl(2c_\beta^\prime\sqrt{6}+4\sqrt{2}\bigr)
\sqrt{
M d^{3} H^{4} T\,
\log\Bigl(1+\frac{2dMHT}{\bar{p}}\Bigr)\,
\log\Bigl(1+\frac{MT}{d}\Bigr)
}
\notag\\
&\quad
+\frac{2c_\beta' M d^{2} H^{2}}{\log 3}\bigl(\sqrt{\gamma\log\gamma}+1\bigr)\,
\sqrt{\log\Bigl(1+\frac{2dMHT}{\bar{p}}\Bigr)}\,
\,\log\Bigl(1+\frac{MT}{d}\Bigr)
\notag\\
&\quad
+4(\sqrt{6}+1)\,\xi MdH^{2}T\,
\sqrt{\log\Bigl(1+\frac{MT}{d}\Bigr)}
+\frac{4\xi\sqrt{M^3d^3H^4T}}{\log 3}\,
\bigl(\sqrt{\gamma\log\gamma}+1\bigr)\,
\log\Bigl(1+\frac{MT}{d}\Bigr). \label{finalregretmis}
\end{align}
which is exactly the claimed bound in \Cref{theo_hetero}.
\end{proof}

Extending our communication-efficient federated RL guarantees in the misspecified setting to broader heterogeneity and participation models, such as stronger distributional shifts across agents or asynchronous and partial client participation, remains an interesting direction for future work.

\section{Proof of \texorpdfstring{\Cref{theo_2}}{Theorem 3}}
\label{app:proof_comm}
\begin{proof}[Proof of \Cref{theo_2}]

Fix any $\alpha\in(0,\gamma)$ and divide the rounds into three classes:
\[
I_{\mathrm{long}}\coloneqq\{k:\tau_{k+1}-\tau_k>\alpha\},
\]
\[
I_{\mathrm{short,tr}}\coloneqq\left\{k:\tau_{k+1}-\tau_k\le \alpha\ \text{and round $k$ ends because the determinant trigger fires}\right\},
\]
\[
I_{\mathrm{short,term}}\coloneqq\left\{k:\tau_{k+1}-\tau_k\le \alpha\ \text{and round $k$ ends only because the hard cap $T$ is reached}\right\}.
\]
By definition, these three sets form a partition of $[K]$.
Moreover, $|I_{\mathrm{short,term}}|\le 1$, because only the final round can terminate solely due to the hard stop.

\paragraph{Long rounds.}
For each $k\in I_{\mathrm{long}}$ we have $\tau_{k+1}-\tau_k>\alpha$.
Summing the round lengths and using that the total number of episodes per agent is $T$ gives
\[
\alpha |I_{\mathrm{long}}|
\le \sum_{k\in I_{\mathrm{long}}}(\tau_{k+1}-\tau_k)
\le \sum_{k=1}^{K}(\tau_{k+1}-\tau_k)
= T,
\]
so
\begin{equation}
|I_{\mathrm{long}}|\le \frac{T}{\alpha}.
\label{eq:I-long-bound}
\end{equation}

\paragraph{Short rounds that end by a determinant trigger.}
Fix any $k\in I_{\mathrm{short,tr}}$.
Then some agent $m$ and some step $h$ satisfy the trigger condition at the end of round $k$:
\[
\frac{\det(\Lambda_h^k+\Lambda_h^{m,k,\mathrm{loc}})}{\det(\Lambda_h^k)}
\ge \frac{\gamma}{\tau_{k+1}-\tau_k}
\ge \frac{\gamma}{\alpha}.
\]
Since the global update dominates the single-agent update in the Loewner order,
\[
\Lambda_h^{k+1}=\Lambda_h^k+\sum_{m=1}^M\Lambda_h^{m,k,\mathrm{loc}}\succeq \Lambda_h^k+\Lambda_h^{m,k,\mathrm{loc}},
\]
and the determinant is monotone over positive semidefinite matrices. Therefore,
\[
\frac{\det(\Lambda_h^{k+1})}{\det(\Lambda_h^k)}
\ge \frac{\det(\Lambda_h^k+\Lambda_h^{m,k,\mathrm{loc}})}{\det(\Lambda_h^k)}
\ge \frac{\gamma}{\alpha}.
\]
For every other step $\ell\ne h$, we still have $\det(\Lambda_\ell^{k+1})/\det(\Lambda_\ell^k)\ge 1$.
Hence
\[
\prod_{\ell=1}^H \frac{\det(\Lambda_\ell^{k+1})}{\det(\Lambda_\ell^k)}\ge \frac{\gamma}{\alpha}.
\]
Multiplying over all $k\in I_{\mathrm{short,tr}}$ yields
\begin{equation}
    \label{alphagamma}
\Bigl(\frac{\gamma}{\alpha}\Bigr)^{|I_{\mathrm{short,tr}}|}
\le
\prod_{k\in I_{\mathrm{short,tr}}}\prod_{\ell=1}^H \frac{\det(\Lambda_\ell^{k+1})}{\det(\Lambda_\ell^k)}
\le
\prod_{\ell=1}^H\frac{\det(\Lambda_\ell^{K+1})}{\det(\Lambda_\ell^1)}.
\end{equation}
By the arithmetic-geometric mean inequality applied to the eigenvalues of $\lambda^{-1}\Lambda_\ell^{K+1}$,
\begin{equation}
\label{eigen}
    \det(\Lambda_\ell^{K+1})
\le \lambda^d\Bigl(\frac{\mathrm{tr}(\Lambda_\ell^{K+1})}{\lambda d}\Bigr)^d
\le \lambda^d\Bigl(1+\frac{MT}{\lambda d}\Bigr)^d.
\end{equation}
The last inequality holds due to the definition in \eqref{lastlambda} with $\|\phi(x_h^{m,t},a_h^{m,t})\|_2 \leq 1$. Since $\Lambda_\ell^1=\lambda I_d$, we further obtain
\[
\frac{\det(\Lambda_\ell^{K+1})}{\det(\Lambda_\ell^1)}
\le \Bigl(1+\frac{MT}{\lambda d}\Bigr)^d.
\]
Consequently, back to \eqref{alphagamma}, we have
\begin{equation}
|I_{\mathrm{short,tr}}|\le \frac{dH}{\log(\gamma/\alpha)}\log\Bigl(1+\frac{MT}{\lambda d}\Bigr).
\label{eq:I-short-tr-bound}
\end{equation}

\paragraph{Put the three classes together.}
Combining the previous bounds  \eqref{eq:I-long-bound} and \eqref{eq:I-short-tr-bound} gives
\[
K = |I_{\mathrm{long}}|+|I_{\mathrm{short,tr}}|+|I_{\mathrm{short,term}}|
\le \frac{T}{\alpha}+\frac{dH}{\log(\gamma/\alpha)}\log\Bigl(1+\frac{MT}{\lambda d}\Bigr)+1.
\]
Choosing $\alpha=\gamma/2$ yields the claimed bound
\[
K\le 1+\frac{2T}{\gamma}+\frac{dH}{\log 2}\log\Bigl(1+\frac{MT}{d\lambda}\Bigr).
\]
\end{proof}

\end{document}